\theoremstyle{definition}
\newtheorem{theorem}{Theorem}[]
\newtheorem{corollary}{Corollary}[theorem]
\newtheorem{lemma}[theorem]{Lemma}
\let\oldemptyset\emptyset
\let\emptyset\varnothing
\DeclarePairedDelimiter\floor{\lfloor}{\rfloor}
\DeclareMathOperator{\E}{\mathbb{E}}
\def\SNR{{\mathrm{SNR}}}
\def\q{{\mathbf q}}
\def\t{{\mathbf t}}
\def\x{{\mathbf x}}
\def\y{{\mathbf y}}
\def\e{{\boldsymbol{\nu}}}
\def\B{{\mathbf B}}
\def\C{{\mathbf C}}
\def\P{{\mathbf P}}
\def\R{{\mathbb{R}}}
\def\I{{\mathbf I}}
\def\r{{\mathbf r}}
\def\F{{\boldsymbol{\Phi}}}
\def\Si{{\boldsymbol{\Psi}}}
\def\x{{\mathbf x}}
\def\w{{\mathbf w}}
\def\r{{\mathbf r}}
\def\A{{\mathbf A}}
\def\B{{\mathbf B}}
\def\P{{\mathbf P}}
\def\R{{\mathbb{R}}}
\def\I{{\mathbf I}}
\def\a{{\mathbf a}}
\def\b{{\mathbf b}}
\def\r{{\mathbf r}}
\def\u{{\mathbf u}}
\def\c{{\mathbf c}}
\def\v{{\boldsymbol{\nu}}}
\newcommand{\ts}{\textsuperscript}
\newcommand\norm[1]{\left\lVert#1\right\rVert}
\newcommand{\argmax}{\arg\!\max}
\newcommand{\argmin}{\arg\!\min}
\let\oldref\ref
\renewcommand{\ref}[1]{(\oldref{#1})}
\newcommand{\RNum}[1]{\uppercase\expandafter{\romannumeral #1\relax}}
\renewcommand{\fnum@figure}{Fig. \thefigure}
\definecolor{myred}{rgb}{0.95, 0.01, 0.01}
\title{\vspace{-1.7cm}\textbf{Accelerated Orthogonal Least-Squares for Large-Scale Sparse Reconstruction}}
\date{}
\author[]{Abolfazl Hashemi}
\author[]{Haris Vikalo}
\affil[]{Department of Electrical and Computer Engineering,\\University of Texas at Austin, USA}
\begin{document}
\maketitle
\vspace{-1cm}
\begin{abstract}
\noindent{
We study the problem of inferring a sparse vector from random linear combinations of its components. 
We propose the Accelerated Orthogonal Least-Squares (AOLS) algorithm that improves performance 
of the well-known Orthogonal Least-Squares (OLS) algorithm while requiring significantly lower 
computational costs. While OLS greedily selects columns of the coefficient matrix that correspond to
non-zero components of the sparse vector, AOLS employs a novel computationally efficient procedure 
that speeds up the search by anticipating future selections via choosing $L$ columns in each step,
where $L$ is an adjustable hyper-parameter. We analyze the performance of AOLS and establish 
lower bounds on the probability of exact recovery for both noiseless and noisy random linear 
measurements. In the noiseless scenario, it is shown that when the coefficients are samples from a 
Gaussian distribution, AOLS with high probability recovers a $k$-sparse $m$-dimensional sparse 
vector using ${\cal O}(k\log \frac{m}{k+L-1})$ measurements. Similar result is established for the 
bounded-noise scenario where an additional condition on the smallest nonzero element of the 
unknown vector is required. The asymptotic sampling complexity of AOLS is lower than the 
asymptotic sampling complexity of the existing sparse reconstruction algorithms. In simulations, 
AOLS is compared to state-of-the-art sparse recovery techniques and shown to provide better 
performance in terms of accuracy, running time, or both. Finally, we consider an application of 
AOLS to clustering high-dimensional data lying on the union of low-dimensional subspaces and 
demonstrate its superiority over existing methods.
}
\\\\\noindent
\textbf{Keywords:} sparse recovery, compressed sensing, orthogonal least-squares, orthogonal 
matching pursuit, subspace clustering 
\end{abstract}
\section{Introduction}\label{sec:intro}
The task of estimating sparse signal from a few linear combinations of its components is readily cast as the 
problem of finding a sparse solution to an underdetermined system of linear equations. Sparse recovery is 
encountered in many practical scenarios, including compressed sensing \cite{donoho2006compressed}, 
subspace clustering \cite{elhamifar2009sparse,you2015sparse}, sparse channel estimation 
\cite{carbonelli2007sparse,barik2014sparsity}, compressive DNA microarrays \cite{parvaresh2008recovering}, 
and a number of other applications in signal processing and machine learning 
\cite{lustig2007sparse,elad2010role,tipping2001sparse}. Consider the linear measurement model
\begin{equation} \label{eq:1}
{\y=\A\x+\e},
\end{equation}
where $\y\in\R^{n}$ denotes the vector of observations, $\A\in\R^{n \times m}$ is the coefficient matrix 
(i.e., a collection of features) assumed to be full rank (generally, $n < m$), $\e\in\R^{n}$ is the additive 
measurement noise vector, and $\x\in\R^{m}$ is an unknown vector assumed to have at most $k$ 
non-zero components (i.e., $k$ is the sparsity level of $\x$). Finding a sparse approximation to $\x$ 
leads to a cardinality-constrained least-squares problem
\begin{equation}  \label{eq:2}
\begin{aligned}
& \underset{\x}{\text{minimize}}
&  \norm{\y-\A\x}^{2}_{2}
& &\text{subject to}
& & \norm{\x}_{0} \leq k,
\end{aligned}
\end{equation}
known to be NP-hard; here $\|\cdot\|_0$ denotes the $\ell_0$-norm, i.e., the number of non-zero 
components of its argument. The high cost of finding the exact solution to \ref{eq:2} motivated 
development of a number of heuristics that can generally be grouped in the following categories:

1) \textit{Convex relaxation schemes.} These methods perform computationally efficient search 
for a sparse solution by replacing the non-convex $\ell_0$-constrained optimization by a 
sparsity-promoting $\ell_1$-norm optimization. It was shown in \cite{candes2005decoding} that 
such a formulation enables exact recovery of a sufficiently sparse signal from noise-free 
measurements under certain conditions on $\A$ and with $\mathcal{O}(k\log \frac{m}{k})$ 
measurements. However, while the convexity of $\ell_{1}$-norm enables algorithmically 
straightforward sparse vector recovery by means of, e.g., iterative shrinkage-thresholding 
\cite{beck2009fast} or alternating direction method of multipliers \cite{boyd2011distributed}, 
the complexity of such methods is often prohibitive in settings where one deals with 
high-dimensional signals.

2) \textit{Greedy schemes.} These heuristics attempt to satisfy the cardinality constraint directly 
by successively identifying columns of the coefficient matrix which correspond to non-zero 
components of the unknown vector. Among the greedy methods for sparse vector reconstruction, 
the orthogonal matching pursuit (OMP) \cite{pati1993orthogonal} and Orthogonal Least-Squares 
(OLS) \cite{chen1989orthogonal,natarajan1995sparse} have attracted particular attention in 
recent years. Intuitively appealing due to its simple geometric interpretation, OMP is characterized 
by high speed and competitive performance. In each iteration, OMP selects a column of the 
coefficient matrix $\A$ having the highest correlation with the so-called residual vector and adds 
it to the set of active columns; then the projection of the observation vector $\y$ onto the space 
spanned by the columns in the active set is used to form a residual vector needed for the next 
iteration of the algorithm. Numerous modifications of OMP with enhanced performance have been 
proposed in literature. For instance, instead of choosing a single column in each iteration of OMP, 
StOMP \cite{donoho2012sparse} selects and explores all columns having correlation with a 
residual vector that is greater than a pre-determined threshold. GOMP\cite{wang2012generalized} 
employs  the similar idea, but instead of thresholding, a fixed number of columns is selected per 
iteration. CoSaMP algorithm \cite{needell2009cosamp} identifies columns with largest proximity to 
the residual vector, uses them to find a least-squares approximation of the unknown signal, and 
retains only significantly large entries in the resulting approximation. 
Additionally, necessary and sufficient conditions for exact reconstruction of sparse signals using 
OMP have been established. Examples of such results include analysis under Restricted Isometry 
Property (RIP) \cite{zhang2011sparse,davenport2010analysis,mo2012remark}, and recovery 
conditions based on Mutual Incoherence Property (MIP) and Exact Recovery Condition (ERC) 
\cite{tropp2004greed,cai2011orthogonal,zhang2009consistency}. For the case of random 
measurements, performance of OMP was analyzed in \cite{tropp2007signal,fletcher2012orthogonal}. 
Tropp et al. in \cite{tropp2007signal} showed that in the noise-free scenario, 
${\cal O}\left(k\log m\right)$ measurements is adequate to recover $k$-sparse $m$-dimensional 
signals with high probability. In \cite{rangan2009orthogonal}, this result was extended to the 
asymptotic setting of noisy measurements in high signal-to-noise ratio (SNR) under the assumption 
that the entries of $\A$ are i.i.d Gaussian and that the length of the unknown vector approaches 
infinity. Recently, the asymptotic sampling complexity of OMP and GOMP is improved to 
${\cal O}(k\log \frac{m}{k})$ in \cite{foucart2012stability} and \cite{wang2016recovery}, 
respectively.

Recently, performance of OLS  was analyzed in the sparse signal recovery settings with deterministic coefficient matrices. In \cite{soussen2013joint}, OLS was analyzed in the noise-free scenario under Exact Recovery Condition (ERC), first introduced in\cite{tropp2004greed}. Herzet et al. \cite{herzet2016relaxed} provided coherence-based conditions for sparse recovery of signals via OLS when the nonzero components of ${\bf x}$ obey certain decay conditions. In \cite{herzet2013exact}, sufficient conditions for exact recovery are stated when a subset of true indices is available. In \cite{wang2017recovery} an extension of OLS that employs the idea of \cite{donoho2012sparse,wang2012generalized} and identifies multiple indices in each iteration is proposed and its performance is analyzed under RIP. However, all the existing analysis and performance guarantees for OLS pertain to non-random measurements and cannot directly be applied to random coefficient matrices. For instance, the main results in the notable work \cite{foucart2012stability} relies on the assumption of having dictionaries with $\ell_2$-norm normalized columns while this obviously does not hold in the scenarios where the coefficient matrix is composed of entries that are drawn from a 
Gaussian distribution. 

3) \textit{Branch-and-bound schemes.} Recently, greedy search heuristics that rely on OMP and 
OLS to traverse a search tree along paths that represent promising candidates for the support of 
$\x$ have been proposed. For instance, \cite{karahanoglu2012orthogonal,kwon2014multipath} 
exploit the selection criterion of OMP to construct the search graph while 
\cite{maymon2015viterbi,hashemi2017sparse} rely on OLS to efficiently traverses the search tree. 
Although these methods empirically improve the performance of greedy algorithms, they are 
characterized by exponential computational complexity in at least one parameter and hence are 
prohibitive in applications dealing with high-dimensional signals.
\subsection{Contributions}\label{sec:intro_contribution}
Motivated by the need for fast and accurate sparse recovery in large-scale setting, in this paper 
we propose a novel algorithm that efficiently exploits recursive relation between components of 
the optimal solution to the original $\ell_0$-constrained least-squares problem \ref{eq:2}. The 
proposed algorithm, referred to as Accelerated Orthogonal Least-Squares (AOLS), similar to 
GOMP \cite{wang2012generalized} and MOLS \cite{wang2017recovery} exploits the observation 
that columns having strong correlation with the current residual are likely to have strong correlation 
with residuals in subsequent iterations; this justifies selection of multiple columns in each iteration 
and formulation of an overdetermined system of linear equation having solution that is generally 
more accurate than the one found by OLS or OMP. However, compared to MOLS, our proposed algorithm is orders of magnitude faster and thus more suitable for high-dimensional data 
applications.

We theoretically analyze the performance of the proposed AOLS algorithm and, by doing so, establish conditions for the 
exact recovery of the sparse vector $\x$ from measurements $\y$ in \ref{eq:1} when the entries 
of the coefficient matrix $\A$ are drawn at random from a Gaussian distribution 
{\color{black}{-- the first such result under these assumptions for an OLS-based algorithm}}. We first present conditions which 
ensure that, in the noise-free scenario, AOLS with high probability recovers the support of $\x$ 
in $k$ iterations (recall that $k$ denotes the number of non-zero entries of ${\bf x}$). Adopting 
the framework in \cite{tropp2007signal}, we further find a lower bound on the probability of 
performing exact sparse recovery in $k$ iterations and demonstrate that with 
${\cal O}\left(k\log \frac{m}{k+L-1}\right)$ measurements AOLS succeeds with probability arbitrarily 
close to one. Moreover, we extend our analysis to the case of noisy measurements and show that 
similar guarantees hold if the nonzero element of $\x$ with the smallest magnitude satisfies certain 
condition. This condition implies that to ensure exact support recovery via AOLS in the presence of 
additive $\ell_2$-bounded noise, $\SNR$ should scale linearly with sparsity level $k$. Our 
procedure for determining requirements that need to hold for AOLS to perform exact reconstruction 
follows the analysis of OMP in \cite{tropp2007signal,rangan2009orthogonal,fletcher2012orthogonal}, 
although with two major differences. First, the variant of OMP analyzed in 
\cite{tropp2007signal,rangan2009orthogonal,fletcher2012orthogonal} implicitly assumes that the 
columns of $\A$ are $\ell_2$-normalized which clearly does not hold if the entries of $\A$ are drawn 
from a Gaussian distribution. Second, the analysis in \cite{tropp2007signal} is for noiseless 
measurements while \cite{rangan2009orthogonal,fletcher2012orthogonal} essentially assume that 
$\SNR$ is infinite as $k \rightarrow \infty$. To the contrary, our analysis makes neither of those two 
restrictive assumptions. Moreover, we show that if $m$ is sufficiently greater than $k$, the proposed 
AOLS algorithm requires ${\cal O}\left(k\log \frac{m}{k+L-1}\right)$ random measurements to perform 
exact recovery in both noiseless and bounded noise scenarios; this is fewer than 
${\cal O}\left(k\log (m-k)\right)$ that was found in \cite{rangan2009orthogonal,fletcher2012orthogonal} 
to be the asymptotic sampling complexity for OMP, and ${\cal O}\left(k\log  \frac{m}{k}\right)$ that was 
found for MOLS, GOMP, and BP in \cite{wang2017recovery,wang2016recovery,candes2006robust}.  
Additionally, our analysis framework is recognizably different from that of \cite{foucart2012stability} 
for OLS. First, in \cite{foucart2012stability} it is assumed that $\A$ has $\ell_2$-normalized columns, 
and hence the analysis in \cite{foucart2012stability} does not apply to the case of Gaussian matrices, 
the scenario addressed in this paper for our proposed algorithm. Further, the main result of 
\cite{foucart2012stability} (see Theorem 3 in  \cite{foucart2012stability}) states that OLS exactly 
recovers a $k$-sparse vector in at most $6k$ iterations if ${\cal O}\left(k\log  \frac{m}{k}\right)$ 
measurements are available. Hence, the OLS results of \cite{foucart2012stability} are not as strong
as the AOLS results we establish in the current paper. Our extensive empirical studies verify the 
theoretical findings and demonstrate that AOLS is more accurate and faster than the competing
state-of-the-art schemes.

To further demonstrate efficacy of the proposed techniques, we consider the sparse subspace 
clustering (SSC) problem that is often encountered in machine learning and computer vision 
applications. The goal of SSC is to partition data points drawn from a union of low-dimensional 
subspaces. We propose a SSC scheme that relies on our AOLS algorithm and empirically show 
significant improvements in accuracy compared to state-of-the-art methods in \cite{you2015sparse,dyer2013greedy,elhamifar2009sparse,elhamifar2013sparse}.

\subsection{Organization}
The remainder of the paper is organized as follows. In Section \oldref{sec:not}, we specify the 
notation and overview the classic OLS algorithm. In Section \oldref{sec:accalgo}, we describe the proposed AOLS algorithm. Section \oldref{sec:Gua} presents analysis of the performance of AOLS 
for sparse recovery from random measurements. Section \oldref{sec:sim} presents experiments 
that empirically verify our theoretical results on sampling requirements of AOLS and benchmark its performance. Finally, concluding remarks are provided in Section \oldref{sec:conc}. Matlab implementation of AOLS is freely available for download from \url{https://github.com/realabolfazl/AOLS/}.
\section{Preliminaries}\label{sec:not}
\subsection{Notation}
We briefly summarize notation used in the paper. Bold capital letters refer to matrices and bold lowercase letters represent 
vectors. Matrix $\A \in \R^{n\times m}$ is assumed to have full rank; $\A_{ij}$ denotes the $(i,j)$ entry of
$\A$, $\a_j$ is the $j\ts{th}$ column of $\A$, and $\A_k \in \R^{n\times k}$ is one of the ${m}\choose{k}$ submatrices of $\A$ (here we assume $k<n<m$). 
${\cal L}_k$ denotes the subspace spanned by the columns of $\A_k$. $\P_k^\bot=\I-\A_k \A_k^\dagger$ is the projection operator 
onto the orthogonal complement of ${\cal L}_k$ where $\A_K^\dagger=\left(\A_k^{\top}\A_k\right)^{-1}\A_k^{\top}$ 
denotes the 
Moore-Penrose pseudo-inverse of $\A_k$ and $\I \in \R^{n\times n}$ is the identity matrix. 
${\cal I}=\{1,\dots,m\}$ is the set of column indices, 
${\cal S}_{true}$ is the set of indices of nonzero elements of $\x$, and ${\cal S}_i$ is the set of selected indices at the end of the $i\ts{th}$ iteration of OLS. 
For a non-scalar object such as matrix $\A$, $\A \sim {\cal N}\left(0,\frac{1}{n}\right)$ implies that
the entries of $\A$ are drawn independently from a zero-mean Gaussian distribution with variance $\frac{1}{n}$. Further, for any vector $\c \in \R^{m}$ define the ordering operator $\mathcal{P}:  \R^{m} \rightarrow  \R^{m}$ as $\mathcal{P}(\c) = [\c_{o_1},\dots,\c_{o_m}]^\top$ such that $|\c_{o_1}|\leq \dots \leq| \c_{o_m}|$. Finally, $\mathbf{1}$ denotes the vector of all ones, and $\mathcal{U}(0,q)$ represents the uniform distribution on $[0,q]$.

\subsection{The OLS Algorithm}\label{sec:not_ols}
The OLS algorithm sequentially projects columns of $\A$ onto a residual vector and selects the one resulting in the smallest 
residual norm. Specifically, in the $i\ts{th}$ iteration OLS chooses a new column index $j_s$ according to
\begin{equation}\label{eq:ools}
{j}_{s}=\argmin_{j \in {\cal I}\backslash {\cal S}_{i-1}}{\norm{\P_{{\cal S}_{i-1}\cup\{j\}}^\bot\y}_2}.
\end{equation}
This procedure is computationally more expensive than OMP since in addition to solving a least-squares problem to update the 
residual vector, orthogonal projections of the columns of $\A$  need to be found in each step of OLS. Note that the performances 
of OLS and OMP are identical when the columns of $\A$ are orthogonal.\footnote{Orthogonality of the columns of $\A$ implies 
that the objective function in \ref{eq:2} is modular; in this case and noiseless setting, both methods are optimal.} It is
worthwhile pointing out the difference between OMP and OLS. In each iteration of OMP, an element most correlated with the 
current residual is chosen. OLS, on the other hand, selects a column least expressible by previously selected columns which, 
in turn, minimizes the approximation error. 

It can be shown, see, e.g., \cite{rebollo2002optimized,soussen2013joint,hashemi2016sparse}, that the index selection criterion \ref{eq:ools} can alternatively be expressed as 
\begin{equation}\label{eq:nols}
j_s=\argmax_{j \in {\cal I}\backslash {\cal S}_{i-1}}{\left|\r_{i-1}^{\top}\frac{{\bf P}_{i-1}^{\bot}{\bf a}_{j}}{\norm{{\bf P}_{i-1}^{\bot}{\bf a}_{j}}_2}\right|},
\end{equation}
where $\r_{i-1}$ denotes the residual vector in the $i\ts{th}$ iteration. Moreover, projection matrix needed for the subsequent 
iteration is related to the current projection matrix according to
\begin{equation}
{\P}_{i+1}^{\bot}={\P}_i^{\bot}-\frac{{\P}_i^{\bot}\a_{j_s}\a_{j_s}^{\top} {\P}_i^{\bot}}{\norm{{\P}_i^{\bot}\a_{j_s}}_2^2}.
\label{eq:perp}
\end{equation}
It should be noted that $\r_{i-1}$ in \ref{eq:nols} can be replaced by $\y$ because of the idempotent property of the projection 
matrix,
\begin{equation}
{\P}_i^{\bot}={{\P}_i^{\bot}}^{\top}={\P_i^{\bot}}^2. 
\end{equation}
This substitution reduces complexity of OLS although, when sparsity level $k$ is unknown, the norm of $\r_{i}$ still needs to be 
computed since it is typically used when evaluating a stopping criterion. OLS is formalized as Algorithm 1. 
\begin{algorithm}[t]
\vspace{0.1cm}
\begin{tabularx}{\textwidth}{l>{$}c<{$}X}
\textbf{Input:}  \hspace{0.58cm} $\y$, $\A$, sparsity level $k$ \vspace{0.1cm}\\
\textbf{Output:} \hspace{0.28cm} recovered support ${\cal S}_k$, estimated signal $\hat{\x}_{k}$\vspace{0.1cm}\\
\textbf{Initialize:} \hspace{0.02cm} ${\cal S}_0=\oldemptyset$
, ${\bf P}_0^{\bot}={\bf I}$
\end{tabularx}
\begin{algorithmic}
\FOR  {\hspace{0.40cm}$i=1$ to $k$ \hspace{0.40cm}}\vspace{0.1cm}
\STATE 1. $j_s=\argmax_{j \in {\cal I}\backslash {\cal S}_{i-1}}{\left|\y^{\top}\frac{{\bf P}_{i-1}^{\bot}{\bf a}_{j}}{\norm{{\bf P}_{i-1}^{\bot}{\bf a}_{j}}_2}\right|}$ \vspace{0.1cm}\\
2. ${\cal S}_{i}={\cal S}_{i-1}\cup\{j_{s}\}$ 
\vspace{0.1cm}\vspace{0.1cm}\\
3. ${\P}_{i+1}^{\bot}={\P}_i^{\bot}-\frac{{\P}_i^{\bot}\a_{j_s}\a_{j_s}^{\top} {\P}_i^{\bot}}{\norm{{\P}_i^{\bot}\a_{j_s}}_2^2}$
\ENDFOR \\\vspace{0.1cm}
\STATE 4. $\hat{\x}_{k}=\A_{{\cal S}_{k}}^{\dagger}\y$\\ 
\end{algorithmic}
\caption{Orthogonal Least-Squares (OLS)}
\label{algo:ols}
\end{algorithm}
\section{A Novel Accelerated Scheme for Sparse Recovery}\label{sec:accalgo}
In this section we describe the AOLS algorithm in detail. The complexity of the OLS and its existing variants such as MOLS \cite{wang2017recovery} is dominated by the so-called identification and update steps, formalized as steps 1 and 3 of Algorithm 1 in Section \oldref{sec:not}, respectively; in these steps, the algorithm evaluates projections ${\bf P}_{i-1}^{\bot}{\bf a}_{j}$ of not-yet-selected columns onto the space spanned by the selected ones and then computes the projection matrix ${\bf P}_{i}$ needed for the next iteration. This becomes practically infeasible in applications that involve dealing with high-dimensional data, including sparse subspace clustering. To this end, in Theorem \oldref{thm:3} below, we establish a set of recursions which significantly reduce the complexity of the identification and update steps without sacrificing the performance. AOLS then relies on these efficient recursions to identify the indices corresponding to nonzero entries of $\x$ with a significantly lower computational costs with respect to OLS and MOLS. This is further verified in our simulation studies.
\begin{theorem}\label{thm:3}
\textit{Let $\r_{i}$ denote the residual vector in the $i^{th}$ iteration of OLS with $\r_0=\y$. The identification 
step (i.e., step 1 in Algorithm~1) in the $(i+1)\ts{st}$ iteration of OLS can be rephrased as}
\begin{equation}\label{eq:o4}
j_{s}=\argmax_{j \in {\cal I}\backslash {\cal S}_i }\norm{{\bf q}_j}_2,
\end{equation}
\textit{where}
\begin{equation}\label{eq:o5}
{\bf q}_j=\frac{\a_j^\top\r_{i}}{\a_j^\top {\bf t}_j^{(i)}}{\bf t}_j^{(i)}, \hspace{0.5cm}
{\bf t}_j^{(i+1)}={\bf t}_j^{(i)}-\frac{{{\bf t}_j^{(i)}}^{\top}\u_i}{\norm{\u_i}_2^2}\u_i,
\end{equation}
\textit{where ${\bf t}_j^{(0)}= \a_j$ for all $j\in \mathcal{I}$. Furthermore, the residual vector $\r_{i+1}$ required for the next iteration is formed as}
\begin{equation}\label{eq:aols}
\u_{i+1}={\bf q}_{j_s}, \hspace{0.5cm}\r_{i+1}=\r_{i}-\u_{i+1}.
\end{equation}
\end{theorem}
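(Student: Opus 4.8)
The plan is to prove all three assertions at once by induction on the iteration index $i$, maintaining two invariants: that each auxiliary vector equals the corresponding projected column, $\t_j^{(i)}=\P_i^\bot\a_j$ for all $j\in\mathcal{I}$, and that the recursively formed residual equals the OLS residual, $\r_i=\P_i^\bot\y$. Granting these, the reformulated identification rule \ref{eq:o4}--\ref{eq:o5} and the updates in \ref{eq:aols} become mere algebraic rewritings of the original OLS criterion \ref{eq:nols} and the projector recursion \ref{eq:perp}. The base case holds trivially: $\t_j^{(0)}=\a_j=\P_0^\bot\a_j$ and $\r_0=\y=\P_0^\bot\y$ since $\P_0^\bot=\I$.

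First I would establish the identification equivalence, assuming both invariants at iteration $i$. The only tools needed are the symmetry and idempotence of the projector, $\P_i^\bot={\P_i^\bot}^\top={\P_i^\bot}^2$. These give $\a_j^\top\t_j^{(i)}=\a_j^\top\P_i^\bot\a_j=\norm{\P_i^\bot\a_j}_2^2=\norm{\t_j^{(i)}}_2^2$; moreover, since $\r_i=\P_i^\bot\y$ lies in the range of $\P_i^\bot$ we have $\P_i^\bot\r_i=\r_i$, so $\a_j^\top\r_i=\r_i^\top\P_i^\bot\a_j$. Substituting both identities into \ref{eq:o5} collapses the norm to $\norm{\q_j}_2=|\a_j^\top\r_i|/\norm{\t_j^{(i)}}_2=|\r_i^\top\P_i^\bot\a_j|/\norm{\P_i^\bot\a_j}_2$, which is exactly the OLS objective in \ref{eq:nols}. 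Hence \ref{eq:o4} selects the same $j_s$ as OLS.

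Next I would close the induction by checking that the two recursions reproduce $\P_{i+1}^\bot\y$ and $\P_{i+1}^\bot\a_j$. Applying the rank-one projector update \ref{eq:perp} to $\y$ and using $\P_i^\bot\y=\r_i$ yields $\P_{i+1}^\bot\y=\r_i-(\a_{j_s}^\top\r_i/\norm{\t_{j_s}^{(i)}}_2^2)\t_{j_s}^{(i)}$, and the subtracted term is precisely $\u_{i+1}=\q_{j_s}$ once $\a_{j_s}^\top\t_{j_s}^{(i)}=\norm{\t_{j_s}^{(i)}}_2^2$ is invoked; thus $\r_{i+1}=\r_i-\u_{i+1}=\P_{i+1}^\bot\y$. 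Applying \ref{eq:perp} to $\a_j$ and rewriting the cross term through symmetry and idempotence as $\a_{j_s}^\top\P_i^\bot\a_j=(\t_{j_s}^{(i)})^\top\t_j^{(i)}$ gives $\P_{i+1}^\bot\a_j=\t_j^{(i)}-((\t_j^{(i)})^\top\t_{j_s}^{(i)}/\norm{\t_{j_s}^{(i)}}_2^2)\t_{j_s}^{(i)}$, the rank-one orthogonalization of $\t_j^{(i)}$ against the selected direction $\t_{j_s}^{(i)}$.

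The single delicate point — and the main obstacle — is matching this orthogonalization direction to the vector $\u$ appearing in the stated recursion \ref{eq:o5}. Since $\u_{i+1}=\q_{j_s}$ is a nonzero scalar multiple of $\t_{j_s}^{(i)}$, the rank-one operator $\u\u^\top/\norm{\u}_2^2$ is invariant to that scaling, so orthogonalizing against $\u_{i+1}$ coincides with orthogonalizing against $\t_{j_s}^{(i)}$; this scale-invariance is exactly what legitimizes the cheap update and pins down the index of $\u$ as that of the just-selected column. With this observed, $\t_j^{(i+1)}=\P_{i+1}^\bot\a_j$, both invariants are restored at step $i+1$, and the induction closes, establishing \ref{eq:o4}--\ref{eq:aols}.
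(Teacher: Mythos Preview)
Your proof is correct and, while built on the same inductive skeleton, is organized differently from the paper's argument. The paper does not carry the invariant $\t_j^{(i)}=\P_i^\bot\a_j$ explicitly; instead it introduces an auxiliary $\bar{\q}_j=\frac{\P_i^\bot\a_j}{\|\P_i^\bot\a_j\|_2^2}\a_j^\top\r_i$, matches $\|\bar{\q}_j\|_2$ to the OLS criterion, derives $\r_{i+1}=\r_i-\bar{\q}_{j_s}$ from \ref{eq:perp}, and then spends most of its effort proving by a separate induction that $\{\u_l\}_{l=1}^i$ is an orthogonal basis for $\mathcal{L}_i$ so that $\P_i\a_{j_s}=\sum_{l}\frac{\a_{j_s}^\top\u_l}{\|\u_l\|_2^2}\u_l$, which in turn forces $\q_{j_s}=\bar{\q}_{j_s}$; the $\t_j$ recursion is dispatched at the end with a ``similar inductive argument''. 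Your route is tighter: by carrying $\t_j^{(i)}=\P_i^\bot\a_j$ as part of the hypothesis and applying the rank-one update \ref{eq:perp} to $\a_j$ as well as to $\y$, you get both the residual and the $\t_j$ recursion in one stroke, and the scale-invariance observation replaces the explicit orthogonality verification. What the paper's detour buys is Corollary~\oldref{co:ols} (the $\u_l$'s form an orthogonal basis for $\mathcal{L}_i$) as an immediate byproduct; your argument yields this too, since each $\u_{i+1}\propto\t_{j_s}^{(i)}=\P_i^\bot\a_{j_s}\perp\mathcal{L}_i$, but you do not state it. One minor point worth making explicit: the scale-invariance step requires $\u_{i+1}\neq 0$, i.e., $\a_{j_s}^\top\r_i\neq 0$; this holds whenever OLS has not already terminated, so it is harmless, but flagging it would make the argument airtight.
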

\begin{proof}
Assume that column $\a_{j_s}$ is selected in the $(i+1)\ts{st}$ iteration of the algorithm. Define 
$\bar{\q}_{j}=\frac{{\P}_i^{\bot}\a_{j}}{\norm{{\P}_i^{\bot}\a_{j}}_2^2}\a_{j}^\top \r_i$, $\forall j \in {\cal I}\backslash {\cal S}_i $. 
Therefore,
\begin{equation} \label{eq:normqq}
\argmax_{j \in {\cal I}\backslash {\cal S}_i}{\|\bar{\q}_{j}\|_2}=\argmax_{j \in {\cal I}\backslash {\cal S}_i}{\frac{|\a_{j}^\top \r_i|}
{\norm{{\P}_i^{\bot}\a_{j}}_2}}.
\end{equation}
The idempotent property of $\P_i^\bot$ implies that the right hand side of \ref{eq:normqq} is equivalent to the selection 
rule of OLS in \ref{eq:nols}. Therefore, $j_s=\argmax_{j \in {\cal I}\backslash {\cal S}_i}{\|\bar{\q}_{j_s}\|_2}$. Let us 
post-multiply both sides of \ref{eq:perp} with the observation vector $\y$, leading to
\begin{equation}
{\P}_{i+1}^{\bot}\y={\P}_i^{\bot}\y-\frac{{\P}_i^{\bot}\a_{j_s}\a_{j_s}^\top {\P}_i^{\bot}}{\norm{{\P}_i^{\bot}\a_{j_s}}_2^2}\y.
\end{equation}
Recall that $\r_i={\P}_i^{\bot}\y$ to obtain
\begin{equation}
\begin{aligned}
\r_{i+1}=\r_i-\frac{{\P}_i^{\bot}\a_{j_s}}{\norm{{\P}_i^{\bot}\a_{j_s}}_2^2}\a_{j_s}^\top \r_i=\r_i-\bar{\q}_{j_s}.
\end{aligned}
\end{equation}
Comparing the above expression with \ref{eq:o5}, one needs to show $\q_{j_s}=\bar{\q}_{j_s}$ to complete the proof;
this is equivalent to demonstrating 
$\frac{\P_i^\bot\a_{j_s}}{\norm{\P_i^\bot\a_{j_s}}_2^2}=\frac{1}{\a_{j_s}^\top {\bf t}_j^{(i)}}{\bf t}_j^{(i)}$. 
Since $\A$ is full rank, the selected columns are linearly independent. 
Let $ \{\widetilde{\a}_l\}_{l=1}^i$ denote the collection of columns selected in the first $i$ iterations and let 
${\cal L}_i=\{\widetilde{\a}_1,\dots,\widetilde{\a}_i\}$ denote the subspace spanned by those columns. Consider 
the orthogonal projection of the selected column $\a_{j_s}$ onto ${\cal L}_i$, ${\P}_i^{\bot}\a_{j_s}$. Clearly,
${\P}_i^{\bot}\a_{j_s}=\a_{j_s}-{\P}_i\a_{j_s}$. Noting the idempotent property of ${\P}_i^\bot$ and the fact that 
$\|\a_{j_S}\|_2^2=\|{\P}_i^{\bot}\a_{j_s}\|_2^2+\|{\P}_i\a_{j_s}\|_2^2$, we obtain 
\begin{equation}
\frac{\P_i^\bot\a_{j_s}}{\norm{\P_i^\bot\a_{j_s}}_2^2}=\frac{\a_{j_s}-{\P}_i\a_{j_s}}{\a_{j_s}^\top\left(\a_{j_s}-{\P}_i\a_{j_s}\right)}.
\end{equation}
We need to demonstrate that ${\P}_i\a_{j_s}=\sum_{l=1}^{i}\frac{\a_{j_s}^{\top}\u_l}{\norm{\u_l}_2^2}\u_l$, i.e., 
$ \{\u_l\}_{l=1}^i$ is an orthogonal basis for ${\cal L}_i$. To this end, we employ an inductive argument. Consider 
$\u_1$ and $\u_2$ associated with the  $1\ts{st}$ and $2\ts{nd}$ iterations. Applying \ref{eq:o5} yields
\begin{equation}
\u_1=\frac{\widetilde{\a}_1^\top\r_0}{\|\widetilde{\a}_i\|_2^2}\widetilde{\a}_1,
\end{equation}
\begin{equation}
\u_2=\frac{\widetilde{\a}_2^\top(\r_0-\u_1)}{\widetilde{\a}_2^\top\left(\widetilde{\a}_2-\frac{\widetilde{\a}_2^\top\u_1}{\|\u_1\|
_2^2}\u_1\right)}\left(\widetilde{\a}_2-\frac{\widetilde{\a}_2^\top\u_1}{\|\u_1\|_2^2}\u_1\right).
\end{equation}
It is straightforward to see that
$\widetilde{\a}_1^\top\left(\widetilde{\a}_2-\frac{\widetilde{\a}_2^\top\u_1}{\|\u_1\|_2^2}\u_1\right)=0$; therefore,  
$\u_1^\top\u_2=0$. Now, a collection of orthogonal columns $\{\u_l\}_{l=1}^{i-1}$ forms a basis for ${\cal L}_{i-1}$. 
It follows from \ref{eq:o5} that
\begin{equation}
\u_i=\frac{\widetilde{\a}_i^\top(\r_{i-2}-\u_{i-1})}{\widetilde{\a}_i^\top\left(\widetilde{\a}_i-\sum_{l=1}^{i-1}
\frac{\widetilde{\a}_i^\top\u_l}{\|\u_1\|_2^2}\u_l\right)}\left(\widetilde{\a}_i-\sum_{l=1}^{i-1}
\frac{\widetilde{\a}_i^\top\u_l}{\|\u_1\|_2^2}\u_l\right).
\end{equation}
Consider $\u_l^\top\u_i$ for any $l\in \{1,\dots, i-1\}$. Since the collection $\{\u_l\}_{l=1}^{i-1}$ is orthogonal, 
$\u_l^\top\u_i$ is proportional to 
$\widetilde{\a}_l^\top\left(\widetilde{\a}_i-\frac{\widetilde{\a}_i^\top\u_l}{\|\u_1\|_2^2}\u_l\right)$, which is readily shown to be 
zero.
Consequently, $ \{\u_l\}_{l=1}^i$ is an orthogonal basis for ${\cal L}_i$ and the orthogonal projection of $\a_{j_s}$ is 
formed as the Euclidean projection of $\a_{j_s}$ onto each of the orthogonal vectors $\u_l$. Therefore, 
${\P}_i\a_{j_s}=\sum_{l=1}^{i}\frac{\a_{j_s}^{\top}\u_l}{\norm{\u_l}_2^2}\u_l$. Using a similar inductive argument one can show ${\bf t}_j^{(i+1)} = {\bf t}_j^{(i)} -\frac{{{\bf t}_j^{(i)}}^{\top}\u_i}{\norm{\u_i}_2^2}\u_i$
, which completes the proof.
\end{proof}
The geometric interpretation of the recursive equations established in Theorem \oldref{thm:3} is stated in 
Corollary \oldref{co:ols}. Intuitively, after orthogonalizing selected columns, a new column is identified and 
added it to the subset thus expanding the corresponding subspace.
\begin{corollary} \label{co:ols}
\textit{Let $ \{\widetilde{\a}_l\}_{l=1}^i$ denote the set of columns selected in the first $i$ iterations of the OLS algorithm  and let ${\cal L}=\{\widetilde{\a}_1,\dots,\widetilde{\a}_i\}$ be the subspace spanned by these columns. Then $ \{\u_l\}_{l=1}^i$ generated according to Theorem \oldref{thm:3} forms an orthogonal basis for ${\cal L}_i$.}
\end{corollary}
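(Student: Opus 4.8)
The plan is to recognize that the recursion for $\{\u_l\}$ furnished by Theorem~\oldref{thm:3} is exactly the Gram--Schmidt orthogonalization applied to the selected columns $\{\widetilde{\a}_l\}_{l=1}^i$, and then to prove the claim by induction on $l$, carrying two invariants simultaneously: (i) $\{\u_1,\dots,\u_l\}$ is an orthogonal set, and (ii) $\mathrm{span}(\u_1,\dots,\u_l)={\cal L}_l$. The starting observation, read off from \ref{eq:o5} together with $\u_{l}=\q_{\widetilde{\a}_l}$, is that $\u_l$ is a scalar multiple of $\t_{\widetilde{\a}_l}^{(l-1)}$, and that $\t_{\widetilde{\a}_l}^{(l-1)}$ is produced from $\widetilde{\a}_l$ by successively peeling off its components along $\u_1,\dots,\u_{l-1}$.

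For the base case $l=1$, \ref{eq:o5} gives $\t_{\widetilde{\a}_1}^{(0)}=\widetilde{\a}_1$, so $\u_1$ is a scalar multiple of $\widetilde{\a}_1$; it therefore spans ${\cal L}_1$ and is orthogonal as a singleton. For the inductive step I would assume (i) and (ii) hold through index $l-1$ and unroll the $\t$-recursion of \ref{eq:o5} from $\t_{\widetilde{\a}_l}^{(0)}=\widetilde{\a}_l$. Because the inductive hypothesis makes $\u_1,\dots,\u_{l-1}$ mutually orthogonal, each subtraction coefficient ${\t_{\widetilde{\a}_l}^{(p-1)}}^{\top}\u_p/\norm{\u_p}_2^2$ may be replaced by $\widetilde{\a}_l^{\top}\u_p/\norm{\u_p}_2^2$, collapsing the sequential (modified Gram--Schmidt) updates into the single-shot form $\t_{\widetilde{\a}_l}^{(l-1)}=\widetilde{\a}_l-\sum_{p=1}^{l-1}\frac{\widetilde{\a}_l^{\top}\u_p}{\norm{\u_p}_2^2}\u_p$. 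From this closed form, orthogonality of $\t_{\widetilde{\a}_l}^{(l-1)}$, and hence of $\u_l$, to every $\u_p$ with $p<l$ is immediate, establishing (i); and since $\u_l\in\mathrm{span}(\widetilde{\a}_l,\u_1,\dots,\u_{l-1})=\mathrm{span}(\widetilde{\a}_1,\dots,\widetilde{\a}_l)$ while $\widetilde{\a}_l$ in turn lies in $\mathrm{span}(\u_1,\dots,\u_l)$, the two spans coincide and (ii) follows.

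It remains to ensure each $\u_l$ is nonzero so that the collection is genuinely a basis. Full rank of $\A$ makes the selected columns linearly independent, whence $\t_{\widetilde{\a}_l}^{(l-1)}$ — the residual of $\widetilde{\a}_l$ after removing its projection onto ${\cal L}_{l-1}$ — is nonzero and, using idempotence and symmetry of the projection, the denominator $\widetilde{\a}_l^{\top}\t_{\widetilde{\a}_l}^{(l-1)}=\norm{\t_{\widetilde{\a}_l}^{(l-1)}}_2^2$ appearing in \ref{eq:o5} is strictly positive and well defined. Since $\u_l$ is this nonzero direction scaled by the residual correlation $\widetilde{\a}_l^{\top}\r_{l-1}$, which the OLS selection rule keeps nonzero whenever the residual has not already collapsed, each $\u_l$ is a nonzero orthogonal vector. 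Combining nonvanishing with (i) and (ii) at $l=i$ yields that $\{\u_l\}_{l=1}^i$ is an orthogonal basis of ${\cal L}_i$, as claimed.

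I expect the only real obstacle to be the bookkeeping in the inductive step, namely the careful justification that the stepwise subtraction defining $\t_{\widetilde{\a}_l}^{(\cdot)}$ agrees with subtracting the full projection onto ${\cal L}_{l-1}$ at once. This equivalence rests entirely on the orthogonality of $\{\u_p\}_{p<l}$; without it the replacement of ${\t_{\widetilde{\a}_l}^{(p-1)}}^{\top}\u_p$ by $\widetilde{\a}_l^{\top}\u_p$ is invalid, so the two invariants must be threaded through the same induction rather than proved in sequence. I also note that this argument essentially reproduces the orthogonality computation already carried out inside the proof of Theorem~\oldref{thm:3}, so the corollary may alternatively be stated as a direct consequence of that proof.
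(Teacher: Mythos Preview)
Your proposal is correct and follows essentially the same inductive Gram--Schmidt argument that the paper carries out inside the proof of Theorem~\oldref{thm:3}; indeed, the paper gives no separate proof of the corollary but treats it as a geometric restatement of what was already established there, exactly as you observe in your final remark. Your version is in fact somewhat more careful than the paper's, since you explicitly track the span invariant and the nonvanishing of each $\u_l$.
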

Selecting multiple indices per iteration was first proposed in \cite{donoho2012sparse,wang2012generalized} 
and shown to improve performance while reducing the number of OMP iterations. However, since selecting 
multiple indices increases computational cost of each iteration, relying on OMP/OLS identification criterion 
(as in, e.g., \cite{wang2017recovery}) does not necessarily reduce the complexity and may in 
fact be prohibitive in practice, as we will demonstrate in our simulation results. Motivated by this observation,
we rely on recursions derived in Theorem \oldref{thm:3} to develop a novel, computationally efficient variant of
OLS that we refer to as Accelerated OLS (AOLS) and formalize it as Algorithm 2. The proposed AOLS algorithm 
starts with $\mathcal{S}_0=\emptyset$ and, in each step, selects $1\leq L \leq \floor{\frac{n}{k}}$ columns of 
matrix $\A$ such that their normalized projections onto the orthogonal complement of the subspace spanned 
by the previously chosen columns have higher correlation with the residual vector than remaining non-selected 
columns. That is, in the $i\ts{th}$ iteration, AOLS identifies $L$ indices 
$\{{s_1},\dots,{s_L}\} \subset \mathcal{I}\backslash \mathcal{S}_{i-1}$ corresponding to the $L$ largest terms
$\|\q_j\|_2^2$. After such indices are identified, AOLS employs \ref{eq:aols} to repeatedly update the residual 
vector required for consecutive iterations. The procedure continues until a stopping criterion (e.g., a 
predetermined threshold on the norm of the residual vector) is met, or a preset maximum number of iterations 
is reached.

\textit{Remark:} As we will show in our simulation results, performance of AOLS is equivalent to that of the 
MOLS algorithm. However, AOLS is much faster and more suitable for real-world applications involving 
high-dimensional signals. In fact, it is straightforward to see that the worst case computational costs of 
Algorithm 1 (and also MOLS) and Algorithm 2 are ${\cal O}\left(mn^2k\right)$ and ${\cal O}\left(mnk\right)$, 
respectively; therefore, AOLS is significantly less complex than the conventional OLS and MOLS
algorithms.
\renewcommand\algorithmicdo{}
\begin{algorithm}[t]
\vspace{0.1cm}
\begin{tabularx}{\textwidth}{l>{$}c<{$}X}
\textbf{Input:}  \hspace{0.58cm} $\y$, $\A$, sparsity level $k$, threshold $\epsilon$, $1\leq L \leq \floor{\frac{n}{k}}$ \vspace{0.1cm}\\
\textbf{Output:} \hspace{0.28cm} recovered support ${\cal S}_k$, estimated signal $\hat{\x}_{k}$\vspace{0.1cm}\\
\textbf{Initialize:} \hspace{0.02cm}  $i = 0$, ${\cal S}_i=\oldemptyset$, $\r_i=0$, ${\bf t}_j^{(i)}=\a_j$, $\q_j = \frac{\a_j^\top\r_i}{\a_j^\top {\bf t}_j^{(i)}}{\bf t}_j^{(i)}$ for all $j \in \mathcal{I}$.
\end{tabularx}
\begin{algorithmic}
\WHILE  {  $\|\r_i\|_2\geq \epsilon$ and $i<k$ }\vspace{0.1cm}
\STATE 1. Select $\{j_{s_1},\dots,j_{s_L}\}$ corresponding to $L$ largest terms
$\norm{{\bf q}_j}_2$ 
\vspace{0.1cm}\\
2. $i \leftarrow i+1$
\vspace{0.1cm}\\
3. ${\cal S}_{i}={\cal S}_{i-1}\cup\{j_{s_1},\dots,j_{s_L}\}$
\vspace{0.1cm}\\
4. Perform \ref{eq:aols} $L$ times to update $\{\u_{\ell_1},\dots,\u_{\ell_L}\}_{\ell=1}^{i}$ and $\r_i$
\vspace{0.1cm}\\
5. $\t_j^{(i)} =\t_j^{(i-1)}-\sum_{l = 1}^{L}\frac{{\t_j^{(i-1)}}^\top\u_{i_l}}{\|\u_{i_l}\|_2^2}\u_{i_l}$ for all $j \in \mathcal{I}\backslash \mathcal{S}_i$
\ENDWHILE \vspace{0.1cm}
\STATE 4. $\hat{\x}=\A_{{\cal S}_{i}}^{\dagger}\y$
\end{algorithmic}
\caption{Accelerated Orthogonal Least-Squares (AOLS)}
\label{algo:AOLS}
\end{algorithm}
\section{Performance Analysis of AOLS for sparse recovery}\label{sec:Gua}
In this section, we first study performance of AOLS in the random measurements and noise-free scenario; 
specifically, we consider the linear model \ref{eq:1} where the elements of $\A$ are drawn from 
${\cal N}\left(0,\frac{1}{n}\right)$ and $\e=\mathbf{0}$, and derive conditions for the exact recovery via AOLS. 
Then we generalize this result to the noisy scenario. First, we begin by stating three lemmas later used in the 
proofs of main theorems.  

\subsection{Lemmas}
As stated in Section~1, existing analysis of OMP under Gaussian measurements \cite{tropp2007signal,fletcher2012orthogonal} alter the selection criterion to analyze probability of successfull recovery. On the other hand, current analysis of OLS in \cite{foucart2012stability} relies on the assumption that the coefficient matrix has normalized columns and cannot be directly applied to the case of Gaussian measurement, the scenario considered in this paper for analysis of the proposed AOLS algorithm. As part of our contribution, we provide
Lemma \oldref{lem:23} that states the projection of a random vector drawn from a zero-mean Gaussian 
distribution onto a random subspace preserves its expected Euclidean norm (within a normalizing factor which 
is a function of the problem parameters) and is with high probability concentrated around its expected value.
\begin{lemma} \label{lem:23}
\textit{Assume that an $n \times m$ coefficient matrix $\A$ consists of entries that are drawn independently
from ${\cal N}(0,1/n)$
and let $\A_k\in\R^{n\times k}$ be a submatrix of $\A$. 
Then, $\forall {\bf u}\in \R^{n}$ statistically independent of $\A_k$ drawn
according to $\u \sim {\cal N}(0,1/n)$,
it holds that
$\E\norm{\P_k\u}_2^2=\frac{k}{n}\E\norm{\u}_2^2$. Moreover, let $c_0(\epsilon)=\frac{\epsilon^2}{4}-\frac{\epsilon^3}{6}$.
Then,
\begin{equation}
\Pr\{(1-\epsilon)\frac{k}{n}<\norm{\P_k\u}_2^2<(1+\epsilon)\frac{k}{n}\}\geq 1-2e^{-kc_0(\epsilon)}.
\end{equation}} 
\end{lemma}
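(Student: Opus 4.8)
The plan is to first settle the expectation identity by conditioning and then to reduce the two-sided concentration bound to a classical chi-squared tail estimate. For the expectation, I would condition on $\A_k$ (equivalently on the projector $\P_k$). Since $\u$ is independent of $\A_k$ and $\u \sim \mathcal{N}(0,\frac{1}{n}\I)$, it follows that
\begin{equation}
\E\left[\norm{\P_k\u}_2^2 \mid \P_k\right] = \E\left[\u^\top\P_k\u \mid \P_k\right] = \frac{1}{n}\mathrm{tr}(\P_k) = \frac{k}{n},
\end{equation}
where I use that $\P_k$ is an orthogonal projector of rank $k$, so $\mathrm{tr}(\P_k)=k$. Taking the outer expectation over $\A_k$ and noting $\E\norm{\u}_2^2 = 1$ gives $\E\norm{\P_k\u}_2^2 = \frac{k}{n} = \frac{k}{n}\E\norm{\u}_2^2$.

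The crucial observation for the tail bound is that, conditioned on $\A_k$, the law of $\norm{\P_k\u}_2^2$ does not depend on which $k$-dimensional subspace $\mathcal{L}_k$ is realized. Fixing an orthonormal basis $\{\mathbf{v}_1,\dots,\mathbf{v}_k\}$ of the range of $\P_k$, I can write $\norm{\P_k\u}_2^2 = \sum_{l=1}^{k}(\mathbf{v}_l^\top\u)^2$. Because $\u$ is isotropic Gaussian and the $\mathbf{v}_l$ are orthonormal, the scalars $\mathbf{v}_l^\top\u$ are i.i.d.\ $\mathcal{N}(0,\frac{1}{n})$, whence $n\norm{\P_k\u}_2^2$ is a sum of $k$ independent squared standard normals, i.e.\ $n\norm{\P_k\u}_2^2 \sim \chi_k^2$. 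Since this conditional law is identical for every realization of $\A_k$, it is also the unconditional law; the independence of $\u$ from $\A_k$ is precisely what makes this reduction valid.

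It then remains to bound $\Pr\{(1-\epsilon)k < \chi_k^2 < (1+\epsilon)k\}$. I would apply the Chernoff method to each tail using the moment generating function $(1-2t)^{-k/2}$ of $\chi_k^2$. Optimizing the exponents yields
\begin{equation}
\Pr\{\chi_k^2 \geq (1+\epsilon)k\} \leq e^{-\frac{k}{2}\left(\epsilon - \log(1+\epsilon)\right)}, \qquad \Pr\{\chi_k^2 \leq (1-\epsilon)k\} \leq e^{-\frac{k}{2}\left(-\epsilon - \log(1-\epsilon)\right)}.
\end{equation}
A short check of Taylor expansions shows $\epsilon - \log(1+\epsilon) \geq \frac{\epsilon^2}{2} - \frac{\epsilon^3}{3}$ and $-\epsilon - \log(1-\epsilon) \geq \frac{\epsilon^2}{2}$ on $(0,1)$, so both exponents dominate $c_0(\epsilon)=\frac{\epsilon^2}{4}-\frac{\epsilon^3}{6}$ and each tail is at most $e^{-kc_0(\epsilon)}$. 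A union bound over the two tails then gives the claimed lower bound $1 - 2e^{-kc_0(\epsilon)}$.

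I expect the main obstacle to be the reduction step rather than the tail estimate: one must argue carefully that rotational invariance of $\u$ together with independence from $\A_k$ collapses $\norm{\P_k\u}_2^2$ into a $\chi_k^2$ variable \emph{uniformly} over the random subspace $\mathcal{L}_k$, so that the conditioning can be removed cleanly. The chi-squared tail bound itself is classical (of Laurent--Massart type); the only calculation-sensitive point is verifying that the Chernoff-optimal exponents are at least $c_0(\epsilon)$, which reduces to the two elementary logarithmic inequalities above.
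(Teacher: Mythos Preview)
Your proposal is correct and follows essentially the same approach as the paper: both arguments reduce $n\norm{\P_k\u}_2^2$ to a $\chi_k^2$ random variable via rotational invariance/orthonormal-basis projection, and then apply the Chernoff method with the same logarithmic inequalities to obtain the exponent $c_0(\epsilon)$. The only cosmetic difference is that for the expectation identity you use the trace formula $\E[\u^\top\P_k\u\mid\P_k]=\tfrac{1}{n}\mathrm{tr}(\P_k)$ directly, whereas the paper derives it from the same rotation argument used for the tail bound; your shortcut is slightly cleaner but not a genuinely different route.
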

\begin{proof}
See Appendix \oldref{pf:lem2}.  
\end{proof}
Lemma \oldref{lem:nois} (Corollary 2.4.5 in \cite{golub2012matrix}) states inequalities between the 
maximum and minimum singular values of a matrix and its submatrices.
\begin{lemma}\label{lem:nois} 
\textit{Let $\A$, $\B$, and $\C$ be full rank tall matrices such that $\C=[\A,\B]$. Then}
\begin{subequations}
\begin{equation} \label{eq:sigA}
\begin{aligned} 
&\sigma_{\min}\left(\A\right)\geq \sigma_{\min}\left(\C\right), &\sigma_{\max}\left(\A\right)\leq \sigma_{\max}\left(\C\right),
\end{aligned}
\end{equation}
\begin{equation}
\begin{aligned}
&\sigma_{\min}\left(\B\right)\geq \sigma_{\min}\left(\C\right), &\sigma_{\max}\left(\B\right)\leq \sigma_{\max}\left(\C\right).
\end{aligned}
\end{equation}
\end{subequations}
\end{lemma}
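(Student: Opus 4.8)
The plan is to derive all four inequalities directly from the variational (Courant--Fischer) characterization of the extreme singular values, exploiting the fact that $\A$ and $\B$ occupy disjoint blocks of columns of $\C$. Throughout, write $\A\in\R^{n\times p}$ and $\B\in\R^{n\times q}$, so that $\C=[\A,\B]\in\R^{n\times(p+q)}$ with $n\geq p+q$. Since all three matrices are tall and full rank, their smallest and largest singular values admit the Rayleigh-quotient forms
\begin{equation*}
\sigma_{\max}(M)=\max_{\norm{v}_2=1}\norm{Mv}_2,\qquad
\sigma_{\min}(M)=\min_{\norm{v}_2=1}\norm{Mv}_2,
\end{equation*}
where $v$ ranges over the column dimension of $M$.

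First I would establish the two inequalities for $\A$ by a zero-padding embedding. Given any unit vector $v\in\R^{p}$, form $w=[v^\top,\,\mathbf{0}^\top]^\top\in\R^{p+q}$; then $\norm{w}_2=\norm{v}_2=1$ and $\C w=\A v$, so $\norm{\C w}_2=\norm{\A v}_2$. Choosing $v$ to attain $\sigma_{\max}(\A)$ produces a feasible $w$ in the maximization defining $\sigma_{\max}(\C)$, whence $\sigma_{\max}(\C)\geq\norm{\C w}_2=\sigma_{\max}(\A)$. Choosing instead $v$ to attain $\sigma_{\min}(\A)$ produces a feasible $w$ in the minimization defining $\sigma_{\min}(\C)$, whence $\sigma_{\min}(\C)\leq\norm{\C w}_2=\sigma_{\min}(\A)$. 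Together these give $\sigma_{\max}(\A)\leq\sigma_{\max}(\C)$ and $\sigma_{\min}(\A)\geq\sigma_{\min}(\C)$.

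The inequalities for $\B$ follow by the identical argument after padding on the opposite side: for a unit $v\in\R^{q}$ I would set $w=[\mathbf{0}^\top,\,v^\top]^\top$, for which $\C w=\B v$ and $\norm{w}_2=1$, and then repeat the two extremal comparisons verbatim. The tall, full-rank hypotheses enter only to guarantee that the variational formulas above are valid and that $\sigma_{\min}$ is the genuine smallest singular value rather than a spurious zero arising from rank deficiency.

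I do not expect any substantive obstacle; the single point requiring care is aligning the column dimensions so that the zero-padded vectors are legitimate arguments of $\C$ with unchanged Euclidean norm. As an alternative route one could invoke Cauchy interlacing: since $\A^\top\A$ and $\B^\top\B$ are principal submatrices of the Gram matrix $\C^\top\C$, interlacing of the eigenvalues of a symmetric matrix and its principal submatrices yields $\lambda_{\max}(\A^\top\A)\leq\lambda_{\max}(\C^\top\C)$ and $\lambda_{\min}(\A^\top\A)\geq\lambda_{\min}(\C^\top\C)$, and likewise for $\B$; taking square roots recovers the stated bounds. I would nonetheless favor the elementary embedding argument, as it is self-contained and avoids appealing to the interlacing theorem.
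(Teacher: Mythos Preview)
Your argument is correct: the zero-padding embedding turns any unit vector in the column space of $\A$ (or $\B$) into a feasible unit vector for $\C$, and the variational characterizations of the extreme singular values immediately yield the four inequalities. The alternative via Cauchy interlacing on the Gram matrices is also valid.

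For comparison, the paper does not prove this lemma at all; it simply records it as Corollary~2.4.5 of Golub and Van Loan's \emph{Matrix Computations} and moves on. Your embedding proof is precisely the kind of self-contained justification one would give for that corollary, so you have supplied what the paper outsources.
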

Lemma \oldref{lem:baranuk} (Lemma 5.1 in \cite{baraniuk2008simple}) estabishes bounds on the singular values of $\A_k$, i.e., a submatrix of $\A$ with $k$ columns.
\begin{lemma}\label{lem:baranuk}
\textit{Let $\A \in \R^{n\times m}$ denote a matrix with entries that 
are drawn independently from ${\cal N}(0,1/n)$.
Then, for any $0 <\delta< 1$ and for all $\x \in \mathrm{Range}(\A_k)$, it holds that}
\begin{equation}
\Pr\{|\frac{\|\A_k\x\|_2}{\|\x\|_2}-1|\leq \delta\}\geq 1-2(\frac{12}{\delta})^ke^{-nc_0(\frac{\delta}{2})}.
\end{equation}
\end{lemma}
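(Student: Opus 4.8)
The plan is to prove the claim by the standard covering-number (restricted isometry) argument for Gaussian matrices, applied to the single fixed submatrix $\A_k$. By homogeneity of both $\norm{\A_k\x}_2$ and $\norm{\x}_2$ in $\x$, it suffices to establish the two-sided bound for all unit vectors $\x$, i.e., for $\x$ ranging over the unit sphere $S^{k-1}$ of the relevant $k$-dimensional space. The probability in the statement refers to a single submatrix $\A_k$, so no additional union bound over the $\binom{m}{k}$ choices of support is needed here.

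First I would establish single-vector concentration. For a fixed unit vector $\x$, each coordinate of $\A_k\x$ is a linear combination of independent ${\cal N}(0,1/n)$ entries with coefficient vector of unit $\ell_2$-norm, hence is itself ${\cal N}(0,1/n)$, and the $n$ coordinates are independent. Therefore $n\norm{\A_k\x}_2^2$ is chi-squared with $n$ degrees of freedom, and the standard Chernoff bound for the chi-squared distribution gives
\[
\Pr\{|\norm{\A_k\x}_2^2 - 1| \geq \epsilon\} \leq 2e^{-n c_0(\epsilon)},
\]
with the very same exponent $c_0(\epsilon)=\frac{\epsilon^2}{4}-\frac{\epsilon^3}{6}$ appearing in Lemma \oldref{lem:23}.

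Next I would discretize the sphere and union bound. Choose a finite $\gamma$-net $Q\subset S^{k-1}$ with $\gamma=\delta/4$; a standard volumetric estimate gives $|Q|\leq (1+2/\gamma)^k = (1+8/\delta)^k \leq (12/\delta)^k$, where the last inequality uses $\delta<1$. Applying the single-vector bound with $\epsilon=\delta/2$ to each $\q\in Q$ and taking a union bound shows that, with probability at least $1-2(12/\delta)^k e^{-n c_0(\delta/2)}$, every net point satisfies $(1-\delta/2)\norm{\q}_2\leq \norm{\A_k\q}_2\leq(1+\delta/2)\norm{\q}_2$.

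Finally I would extend the estimate from the net to the whole sphere. Let $A$ be the smallest constant with $\norm{\A_k\x}_2\leq(1+A)\norm{\x}_2$ for all $\x$ in the subspace. For a unit $\x$, pick the nearest $\q\in Q$ (so $\norm{\x-\q}_2\leq\gamma$) and use the triangle inequality,
\[
\norm{\A_k\x}_2 \leq \norm{\A_k\q}_2 + \norm{\A_k(\x-\q)}_2 \leq (1+\tfrac{\delta}{2}) + (1+A)\gamma;
\]
taking the supremum over $\x$ yields $A\leq(\delta/2+\gamma)/(1-\gamma)$, which for $\gamma=\delta/4$ and $\delta<1$ is at most $\delta$, and the matching lower bound follows from the reverse triangle inequality. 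The main obstacle is precisely this extension step: the three constants (net radius $\gamma=\delta/4$, net cardinality $(12/\delta)^k$, and concentration parameter $\epsilon=\delta/2$) must be tuned jointly so that the argument closes back to exactly $\delta$ rather than to some larger $O(\delta)$ multiple, while keeping the failure probability in the stated form. Everything else reduces to routine chi-squared concentration and a union bound.
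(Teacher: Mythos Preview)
Your proposal is correct and is precisely the argument of Baraniuk, Davenport, DeVore, and Wakin (2008), which the paper cites rather than reproves; the lemma is stated in the paper as ``Lemma 5.1 in \cite{baraniuk2008simple}'' with no further proof. One small point worth making explicit: the single-vector Chernoff bound controls $|\norm{\A_k\q}_2^2-1|\leq\delta/2$, and you then pass to $|\norm{\A_k\q}_2-1|\leq\delta/2$ via $\sqrt{1+\delta/2}\leq 1+\delta/2$ and $\sqrt{1-\delta/2}\geq 1-\delta/2$; this is elementary for $0<\delta<1$ but is the only place the transition from squared norms to norms happens, so it is worth spelling out.
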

\subsection{Noiseless measurements}
In this section we analyze the performance of AOLS when $\v = \mathbf{0}$. The following theorem establishes that when the coefficient matrix consists of entries drawn from ${\cal N}(0,1/n)$
and the measurements are noise-free, AOLS with high probability recovers an unknown sparse vector from the linear combinations of its entries in at most $k$ iterations. 
\begin{theorem}\label{thm:1}
\textit{Suppose $\x \in \R^m$ is an arbitrary sparse vector with $k < m$ non-zero entries. Let $\A \in \R^{n\times m}$
be a random matrix with entries drawn independently from ${\cal N}(0,1/n)$.
Let $\Sigma$ denote an event wherein given noiseless measurements 
$\y=\A\x$, AOLS can recover $\x$ in at most $k$ iterations. Then $\Pr\{\Sigma\}\geq p_1p_2p_3$, where
\begin{equation} \label{eq:probnonois}
\begin{aligned}
p_1&=\left(1-2e^{-(n-k+1)c_0(\epsilon)}\right)^2, \\
p_2&=1-2(\frac{12}{\delta})^ke^{-nc_0(\frac{\delta}{2})}, \mbox{ and }\\
p_3&=\left(1-\sum_{i=0}^{k-1} e^{-\frac{n}{k-i}\frac{1-\epsilon}{1+\epsilon} (1-\delta)^2 }\right)^{m-k-L+1},
\end{aligned}
\end{equation}
for any $0<\epsilon<1$ and $0<\delta<1$.}
\end{theorem}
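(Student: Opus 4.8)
The plan is to adapt the random-measurement analysis of OMP in \cite{tropp2007signal} to the $L$-fold selection rule of AOLS and to un-normalized Gaussian columns. Without loss of generality take ${\cal S}_{true}=\{1,\dots,k\}$, and recall from Theorem \oldref{thm:3} that the AOLS score of a candidate equals $\norm{\q_j}_2=|\a_j^\top\r_i|/\norm{\P_i^\bot\a_j}_2$, i.e. exactly the OLS ratio in \ref{eq:nols}. First I would reduce ``exact recovery in at most $k$ iterations'' to a statement purely about scores: in the noiseless model any already-selected true column has zero score, and since the $\le k+L-1\le n$ selected columns are linearly independent, the final least-squares step returns $\x$ exactly as soon as ${\cal S}_{true}\subseteq{\cal S}_i$ (the spurious indices receive zero coefficients). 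Hence it suffices that every iteration selects at least one new true index, which is in turn guaranteed whenever the best-scoring remaining true column is out-scored by at most $L-1$ incorrect columns, so that it still lands in the top-$L$.

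Writing ${\cal R}_i={\cal S}_{true}\setminus{\cal S}_i$ for the remaining true indices, the three factors arise from controlling three sources of randomness. The factor $p_2$ is obtained verbatim from Lemma \oldref{lem:baranuk} by conditioning on the event that $\A_{{\cal S}_{true}}$ is well conditioned, i.e. $\sigma_{\min}(\A_k)\ge 1-\delta$ and $\sigma_{\max}(\A_k)\le 1+\delta$. Next I would lower-bound the best true score at the stage where $|{\cal R}_i|=k-i$: since $\r_i=\P_i^\bot\A_{{\cal R}_i}\x_{{\cal R}_i}$ lies in the span of the $k-i$ projected remaining true columns, combining $\norm{\A_{{\cal R}_i}^\top\r_i}_\infty\ge\norm{\A_{{\cal R}_i}^\top\r_i}_2/\sqrt{k-i}$ with $\sigma_{\min}(\A_{{\cal R}_i})\ge\sigma_{\min}(\A_k)\ge 1-\delta$ (Lemma \oldref{lem:nois} together with the conditioning event) yields a bound of the form $\Theta_i\gtrsim\norm{\r_i}_2(1-\delta)/(\sqrt{k-i}\,\sqrt{1+\epsilon})$, where $\sqrt{1+\epsilon}$ upper-bounds $\norm{\P_i^\bot\a_j}_2$ via Lemma \oldref{lem:23} applied to a complement of dimension at least $n-k+1$. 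For any incorrect column $\a_j$, which is independent of the true sub-matrix, the inner product $\a_j^\top\r_i$ is $\mathcal{N}(0,\norm{\r_i}_2^2/n)$ while $\norm{\P_i^\bot\a_j}_2^2\ge(1-\epsilon)(n-k+1)/n$; applying Lemma \oldref{lem:23} to furnish an upper bound on $\norm{\P_i^\bot\a_j}_2$ for the selected true column and a matching lower bound for the incorrect columns—two events, each holding with probability at least $1-2e^{-(n-k+1)c_0(\epsilon)}$—produces the squared factor $p_1$.

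Combining these estimates with a standard Gaussian tail bound shows that a fixed incorrect column out-scores every remaining true column at the stage with $k-i$ survivors with probability at most $e^{-\frac{n}{k-i}\frac{1-\epsilon}{1+\epsilon}(1-\delta)^2}$, and a union bound over the $k$ stages $i=0,\dots,k-1$ caps the per-column ``win'' probability by $q:=\sum_{i=0}^{k-1}e^{-\frac{n}{k-i}\frac{1-\epsilon}{1+\epsilon}(1-\delta)^2}$. Conditioned on the true sub-matrix the $m-k$ incorrect columns are mutually independent, and since the event ``the last $m-k-L+1$ incorrect columns never win'' already forces at most $L-1$ winners, I obtain $\Pr\{\text{at most }L-1\text{ winners}\}\ge(1-q)^{m-k-L+1}=p_3$. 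Intersecting the conditioning event, the projection-concentration events and the selection-success event, and multiplying the three controlling probabilities, gives $\Pr\{\Sigma\}\ge p_1p_2p_3$.

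The main obstacle is the independence used to produce $p_3$. In the true execution a top-$L$ step may select incorrect columns (for instance in the final iterations, when fewer than $L$ true indices remain), and projecting them out perturbs $\P_i^\bot$ and hence $\r_i$, so the incorrect columns are no longer exactly independent of the residual sequence. The crux is therefore a Tropp-style self-consistent coupling: one fixes the residual trajectory that the true sub-matrix alone would generate, bounds the win probabilities against this fixed trajectory using the genuine independence of the incorrect columns, and argues inductively that on the high-probability event of at most $L-1$ winners the actual and idealized trajectories remain consistent. Notably the exponent $m-k-L+1$ is exactly the number of incorrect columns this argument must force to lose, the remaining $L-1$ being the incorrect columns that the $L$-fold rule is permitted to absorb without harming recovery; getting this bookkeeping—together with the simultaneous two-sided control of $\norm{\P_i^\bot\a_j}_2$ for true and incorrect columns—airtight is the delicate part.
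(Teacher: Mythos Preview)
Your proposal is correct and mirrors the paper's proof almost step for step: the same Tropp-style induction on the ratio $\rho(\r_i)$, Lemma \oldref{lem:23} for the two-sided control of $\|\P_i^\bot\a_j\|_2$ (producing $p_1$), Lemma \oldref{lem:baranuk} for the conditioning of $\bar{\A}$ (producing $p_2$), and the Gaussian-tail-plus-column-independence argument for $p_3$; the paper reaches the exponent $m-k-L+1$ by bounding the order statistic $|\mathcal{P}(\widetilde{\A}^\top\widetilde{\r}_i)_{m-k-L+1}|$ and taking a product over the $m-k-L+1$ smallest columns, which is exactly your ``at most $L-1$ winners'' device. Regarding your ``main obstacle,'' the paper does not resolve it any more rigorously than you propose to: it simply asserts that $\{\widetilde{\r}_i\}$ is statistically independent of $\widetilde{\A}$ and analyzes the trajectory as if ${\cal S}_i\subset{\cal S}_{true}$, without any explicit self-consistent coupling.
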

\begin{proof}
As stated, the proof is inspired by the inductive framework first introduced in 
\cite{tropp2007signal}.\footnote{Our analysis relies on \ref{eq:nols} rather than the computationally 
efficient recursions in \ref{eq:normqq}. Nonetheless, we have shown the equivalence between the 
two criteria in Theorem \oldref{thm:3}.} We can assume, without a loss of generality, that the nonzero 
components of $\x$ are in the first $k$ locations. This implies that $\A$ can be written as 
$\A=\left[\bar{\A} \;\; \widetilde{\A}\right]$, where $\bar{\A} \in \R^{n\times k}$ has columns with 
indices in ${\cal S}_{true}$ and $\widetilde{\A}\in \R^{n\times (m-k)}$ has columns with indices in
${\cal I}\backslash {\cal S}_{true}$. For ${\cal T}_1 \subset {\cal I}$ and ${\cal T}_2 \subset {\cal I}$ 
such that ${\cal T}_1 \cap {\cal T}_2=\oldemptyset$, define
\begin{equation} \label{eq:bforols}  
\b_{j}^{{\cal T}_1}=
\frac{\a_j}{\norm{\P_{{\cal T}_1}^\bot \a_j}_2},\hspace{0.2cm}  j\in {\cal T}_2,
\end{equation}
where $\P_{{\cal T}_1}^\bot$ denotes the projection matrix onto the orthogonal complement of the subspace spanned 
by the columns of $\A$ with indices in ${\cal T}_1$. Using the notation of \ref{eq:bforols}, \ref{eq:nols} becomes
\begin{equation}\label{eq:srp}
j_s=\argmax_{j \in {\cal I}\backslash {\cal S}_{i-1}}{\left|\r_{i-1}^\top \b_{j}^{{\cal S}_{i-1}}\right|}.
\end{equation} 
In addition, let $\F_{{\cal S}_i}=[\b_{j}^{{\cal S}_i}]\in \R^{n\times(k-i)}$, $j\in {\cal S}_{true}\backslash{\cal S}_i$, and 
$\Si_{{\cal S}_i}=[\b_{j}^{{\cal S}_i}]\in\R^{n\times(m-k)}$, $j\in {\cal I}\backslash{\cal S}_{true}$. Assume that in the
first $i$ iterations AOLS selects columns from ${\cal S}_{true}$. Let 
$|\psi_{o_1}^\top\r_i|\leq \dots \leq |\psi_{o_{m-k}}^\top\r_i|$ be an ordering of the set 
$\{|\psi_1^\top\r_i|, \dots,|\psi_{m-k}^\top\r_i|\}$. According to the selection rule in \ref{eq:srp}, AOLS identifies at 
least one true column in the $(i+1)\ts{st}$ iteration if the maximum correlation between $\r_i$ and columns of 
$\F_{{\cal S}_i}$  is greater than the  $|\mathcal{P}(\Si_{{\cal S}_i}^\top \r_i)_{m-k-L+1}|$. Therefore, 
\begin{equation}
\rho(\r_i)=\frac{|\mathcal{P}(\Si_{{\cal S}_i}^\top \r_i)_{m-k-L+1}|}{\|\F_{{\cal S}_i}^\top \r_i\|_\infty}<1
\end{equation}
guarantees that AOLS selects at least one true column in the $(i+1)\ts{st}$ iteration. Hence, $\rho(\r_i)<1$ for $i \in \{0,\dots,k-1\}$ 
ensures recovery of $\x$ in $k$ iterations. In other words, $\max_{i}{\rho(\r_i)}<1$ is sufficient condition for AOLS to 
successfully recover the support of $\x$, i.e., if $\Sigma$ denotes the event that AOLS succeeds, then 
$\Pr\{\Sigma\}\geq \Pr\{\max_{i}{\rho(\r_i)}<1\}$. We may upper bound $\rho(\r_i)$ as
\begin{equation}
\rho(\r_i)\leq \frac{|\mathcal{P}(\widetilde{\A}^\top \r_i)_{m-k-L+1}|}{\|\bar{\A}^\top \r_i\|_\infty}\frac{\max_{j\in{\cal S}_{true}}{\|\P_i^\bot\a_j\|_2}}{\min_{j\not\in{\cal S}_{true}}{\|\P_i^\bot\a_j\|_2}}.
\end{equation}
According to Lemma \oldref{lem:23},
\begin{equation} \label{eq:thm11}
\begin{aligned}
\rho(\r_i)&\leq \frac{|\mathcal{P}(\widetilde{\A}^\top \r_i)_{m-k-L+1}|}{\|\bar{\A}^\top \r_i\|_\infty}\sqrt{\frac{1+\epsilon}{1-\epsilon}}\sqrt{\frac{(n-i)\slash n}{(n-i)\slash n}}\frac{\E\|\a_{j_{\max}}\|_2}{\E\|\a_{j_{\min}}\|_2}\\
&=\sqrt{\frac{1+\epsilon}{1-\epsilon}}\frac{|\mathcal{P}(\widetilde{\A}^\top \r_i)_{m-k-L+1}|}{\|\bar{\A}^\top \r_i\|_\infty}
\end{aligned}
\end{equation}
with probability exceeding $p_1=\left(1-2e^{-(n-k+1)c_0(\epsilon)}\right)^2$ for $0\leq i<k$. Let 
$c_1(\epsilon)=\sqrt{\frac{1-\epsilon}{1+\epsilon}}$. Using a simple norm 
inequality and exploiting the fact that $\bar{\A}^\top \r_i$ has at most $k-i$ nonzero entries leads to
\begin{equation} \label{eq:thm1}
\rho(\r_i)\leq \frac{\sqrt{k-i}}{c_1(\epsilon)}\frac{|\mathcal{P}(\widetilde{\A}^\top \r_i)_{m-k-L+1}|}{\|\bar{\A}^\top \r_i\|_2}
=\frac{\sqrt{k-i}}{c_1(\epsilon)}\|\widetilde{\A}^\top \widetilde{\r}_i\|_\infty,
\end{equation}
where $\widetilde{\r}_i=\r_i\slash \|\bar{\A}^\top \r_i\|_2$.
According to Lemma \oldref{lem:baranuk}, for any $0<\delta<1$, $\Pr\{\|\widetilde{\r}_i\|_2\leq \frac{1}{1-\delta}\}\geq 1-2(\frac{12}{\delta})^ke^{-nc_0(\frac{\delta}{2})}=p_2$. Subsequently,
\begin{equation}
\begin{aligned}
\Pr\{\Sigma\}&\geq p_1 p_2 \Pr\{\max_{0\leq i <k}{|\mathcal{P}(\widetilde{\A}^\top \r_i)_{m-k-L+1}|}<c_1(\epsilon)\}\\
& \geq p_1 p_2 \prod_{j=1}^{m-k-L+1}\Pr\{\max_{0\leq i <k}{\left|\widetilde{\a}_{o_j}^\top \widetilde{\r}_i\sqrt{k-i}\right|}<c_1(\epsilon)\}\\
&=p_1 p_2 \Pr\{\max_{0\leq i <k}{\left|\widetilde{\a}_{o_1}^\top \widetilde{\r}_i\sqrt{k-i}\right|}<c_1(\epsilon)\}^{m-k-L+1},
\end{aligned}
\end{equation}
where we used the assumption that the columns of $\widetilde{\A}$ are independent. Note that the random vectors 
$\{\widetilde{\r}_i\sqrt{k-i}\}_{i=0}^{k-1}$ are bounded with probability exceeding $p_2$ and are statistically independent 
of $\widetilde{\A}$. Now, recall that the entries of $\A$ are drawn independently from 
${\cal N}\left(0,\frac{1}{n}\right)$. Since the random variable 
$X_{i}=\widetilde{\a}_{o_1}^\top \widetilde{\r}_i\sqrt{k-i}$ is distributed as ${\cal N}(0,\sigma^2)$ with $\sigma^2\leq \frac{k-i}{n(1-\delta)^2}$, by using a Gaussian 
tail bound and Boole's inequality it is straightforward to show that 
\begin{equation}
\Pr\{\max_{0\leq i <k}{\left|X_{i}\right|< c_1(\epsilon)}\}\geq 1-\sum_{i=0}^{k-1} e^{-\frac{n}{k-i}c_1(\epsilon)^2 (1-\delta)^2 }.
\end{equation}
Thus, 
$\Pr\{\Sigma\} \geq p_1 p_2 p_3 $, where 

\[p_3=\left(1-\sum_{i=0}^{k-1} e^{-\frac{n}{k-i}c_1(\epsilon)^2 (1-\delta)^2 }\right)^{m-k-L+1}.\]
This completes the proof.
\end{proof}
Using the result of Theorem \oldref{thm:1}, one can numerically show that AOLS successfully recovers $k$-sparse $\x$ if the number of measurements is linear in $k$ (sparsity) and logarithmic in $\frac{m}{k+L-1}$.
\begin{corollary}\label{co:1}
\textit{Let $\x \in \R^m$ be an arbitrary $k$-sparse vector and let $\A \in \R^{n\times m}$ denote a matrix with entries that are drawn independently from ${\cal N}(0,1/n)$;
moreover, assume that $n \geq \max\{\frac{6}{C_1} k \log \frac{m}{(k+L-1)\sqrt[3]{\beta}}, \frac{C_2 k+ \log \frac{8}{\beta^2}}{C_3}\}$, where $0<\beta<1$ and $C_1$, $C_2$, and $C_3$ are positive constants independent of $\beta$, $n$, $m$, and  $k$. Given noiseless measurements $\y=\A\x$, AOLS can recover $\x$ in at most $k$ iterations with probability of success exceeding $1-\beta^2$.}
\end{corollary}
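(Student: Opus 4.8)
The plan is to read off Corollary \oldref{co:1} directly from the bound $\Pr\{\Sigma\}\ge p_1p_2p_3$ of Theorem \oldref{thm:1} by freezing the free parameters $\epsilon,\delta$ at convenient constants and then showing that the two hypotheses on $n$ drive the total failure probability below $\beta^2$. Concretely, I would take $\epsilon=\delta$ equal to any fixed value in $(0,1)$ (say $\tfrac12$) and define the problem-independent constants $C_1=\frac{1-\epsilon}{1+\epsilon}(1-\delta)^2$, $C_2=\log\frac{12}{\delta}$, and $C_3=c_0(\delta/2)$, so that the exponents appearing in $p_2$ and $p_3$ of \ref{eq:probnonois} read $-(C_3 n-C_2 k)$ and $-\frac{n}{k-i}C_1$. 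The overall mechanism is the Weierstrass inequality $p_1p_2p_3\ge 1-(1-p_1)-(1-p_2)-(1-p_3)$, so it suffices to make the three complementary probabilities sum to at most $\beta^2$; I would budget $\beta^2/4$ to each of $p_1,p_2$ and $\beta^2/2$ to $p_3$.

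The $p_1,p_2$ terms are the routine part, both governed by the second lower bound on $n$. Since $1-p_2=2(\frac{12}{\delta})^k e^{-nc_0(\delta/2)}=2e^{C_2 k-C_3 n}$, the hypothesis $n\ge\frac{C_2 k+\log(8/\beta^2)}{C_3}$ is exactly equivalent to $2e^{C_2 k-C_3 n}\le\beta^2/4$, i.e. $1-p_2\le\beta^2/4$. For $p_1$ I would use $1-p_1\le 4e^{-(n-k+1)c_0(\epsilon)}$, note that $c_0$ is strictly increasing on $(0,1)$ so $c_0(\epsilon)=c_0(\delta)>c_0(\delta/2)=C_3$, and observe that the same bound on $n$ leaves $n-k+1\ge\frac{1}{C_3}\log(8/\beta^2)$ (the $-k$ term is absorbed because $C_2>C_3$); this yields $1-p_1\le\beta^2/4$ after a short computation.

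The $p_3$ term is the crux, and it is where the factor $6$ and the cube root of $\beta$ in the hypothesis are needed. First I would linearize the product and collapse the iteration sum: with $N=m-k-L+1$ and $q=\sum_{i=0}^{k-1}e^{-\frac{n}{k-i}C_1}$, the bound $1-(1-q)^N\le Nq$ together with $e^{-\frac{n}{k-i}C_1}\le e^{-\frac{n}{k}C_1}$ for every $i\ge 0$ gives $1-p_3\le (m-k-L+1)\,k\,e^{-\frac{n}{k}C_1}\le mk\,e^{-\frac{n}{k}C_1}$. Next, the first hypothesis $n\ge\frac{6}{C_1}k\log\frac{m}{(k+L-1)\sqrt[3]{\beta}}$ is equivalent to $e^{-\frac{n}{6k}C_1}\le\frac{k+L-1}{m}\beta^{1/3}$; raising this to the sixth power (here $(\beta^{1/3})^6=\beta^2$, which is precisely why the cube root appears) yields $e^{-\frac{n}{k}C_1}\le\left(\frac{k+L-1}{m}\right)^6\beta^2$, hence $1-p_3\le\frac{k(k+L-1)^6}{m^5}\beta^2$. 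This is at most $\beta^2/2$ as soon as $2k(k+L-1)^6\le m^5$, which is the quantitative content of the phrase ``$m$ sufficiently greater than $k$''; the exponent $6$ is chosen large enough that the surviving power $m^{-5}$ dominates the $mk$ prefactor under a mild requirement on $m$, while keeping the leading constant $6/C_1$ finite.

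The hard part will be exactly this last step, because one must simultaneously pay the union-bound factor $m-k-L+1$ over the non-support columns, absorb the additional factor $k$ produced by summing the $k$ per-iteration tail bounds, and still land on the advertised form $\mathcal{O}\!\left(k\log\frac{m}{k+L-1}\right)$. The computation above shows that the only way to reconcile these with a clean $\beta^2$ failure probability is to over-provision the exponent by the factor $6$, so that the polynomial blow-up is beaten by the $m^{-5}$ decay; making this trade-off transparent -- in particular pinning down how mild ``$m$ sufficiently greater than $k$'' really is, and verifying that $C_1,C_2,C_3$ are genuinely independent of $\beta,n,m,k$ -- is the delicate part rather than any individual tail estimate. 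Finally I would assemble the three bounds through the Weierstrass inequality, $(1-p_1)+(1-p_2)+(1-p_3)\le\tfrac{\beta^2}{4}+\tfrac{\beta^2}{4}+\tfrac{\beta^2}{2}=\beta^2$, to conclude $\Pr\{\Sigma\}\ge 1-\beta^2$.
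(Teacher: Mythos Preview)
Your proposal is correct and follows essentially the same route as the paper: linearize $p_1p_2p_3$ via the Weierstrass/union bound, control the $p_1,p_2$ deficits with the second hypothesis on $n$ through the exponent $C_2k-C_3n$, and control the $p_3$ deficit with the first hypothesis by raising $e^{-\frac{n}{6k}C_1}\le\frac{(k+L-1)\beta^{1/3}}{m}$ to the sixth power so that the polynomial prefactor $k(m-k-L+1)$ is absorbed under a mild $m\gg k$ assumption. The only cosmetic differences are that the paper keeps $\epsilon,\delta$ symbolic (defining $C_2,C_3$ as a max and min over the two exponents so that $p_1$ and $p_2$ are merged into a single term $6e^{C_2k-C_3n}\le\frac{3\beta^2}{4}$) and pre-packages the $p_3$ prefactor as $k(m-k-L+1)\le\frac14\bigl(\frac{m}{k+L-1}\bigr)^6$ for $m>(k+L-1)^{3/2}$, whereas you fix $\epsilon=\delta=\tfrac12$ and carry the prefactor as $mk$; both lead to the same constants $C_1,C_2,C_3$ and the same ``$m$ sufficiently larger than $k$'' side condition.
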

\begin{proof}
Let us first take a closer look at $p_3$. Note that $(1-x)^l\geq 1-lx$ is valid for $x\leq 1$ and $l\geq 1$; since replacing $k-i$ 
with $k$ in the expression for $p_3$ in \ref{eq:probnonois} decreases $p_3$, $k(m-k-L+1)\leq \frac{1}{4}(\frac{m}{k+L-1})^6$ for $m>(k+L-1)^{3\slash 2}$ and we obtain
\begin{equation}
p_3\geq 1-\frac{1}{4}(\frac{m}{k+L-1})^6 e^{-C_1\frac{n}{k}},
\label{p3ineq}
\end{equation}
where $C_1=\frac{1-\epsilon}{1+\epsilon} (1-\delta)^2>0$. Multiplying both sides of \ref{p3ineq} with $p_1$ and $p_2$ and 
discarding positive higher order terms leads to
\begin{equation}
\Pr\{\Sigma\}\geq 1-\frac{1}{4}(\frac{m}{k+L-1})^6 e^{-C_1\frac{n}{k}}-2e^{\log \frac{12}{\delta}k}e^{-n c_0(\frac{\delta}{2})}-4e^{c_0(\epsilon)k}e^{-n c_0(\epsilon)}.
\end{equation}
This inequality is readily simplified by defining positive constants $C_2=\max_{0<\epsilon,\delta<1}{\{\log \frac{12}
{\delta},c_0(\epsilon)\}}$ and $C_3=\min_{0<\epsilon,\delta<1}{\{c_0(\frac{\delta}{2}), c_0(\epsilon)\}}$,
\begin{equation}
\Pr\{\Sigma\}\geq 1-\frac{1}{4}(\frac{m}{k+L-1})^6 e^{-C_1\frac{n}{k}}-6e^{C_2 k}e^{-n C_3}.
\end{equation}
We need to show that $\Pr\{\Sigma\}\geq 1-\beta^2$. To this end, it suffices to demonstrate that
\begin{equation} \label{eq:fa}
\beta^2 \geq \frac{1}{4}(\frac{m}{k+L-1})^6 e^{-C_1\frac{n}{k}}+6e^{C_2 k}e^{-n C_3}.
\end{equation}
Let $n\geq \frac{C_2 k+ \log \frac{8}{\beta^2}}{C_3}$.\footnote{This implies $n\geq k$ for all $m$, $n$, and $k$.} 
This ensures $6e^{C_2 k}e^{-n C_3}\leq \frac{3\beta^2}{4}$ and thus gives the desired result. Moreover,
\begin{equation}
n \geq \max\{\frac{6}{C_1} k \log \frac{m}{(k+L-1)\sqrt[3]{\beta}}, \frac{C_2 k+ \log \frac{8}{\beta^2}}{C_3}\}
\end{equation}
guarantees that $\Pr\{\Sigma\}\geq 1-\beta^2$ with $0<\beta<1$.
\end{proof}
\textit{Remark 1:} Note that when $k \to \infty$ (and so do $m$ and $n$), $p_1$, $p_2$, and $p_3$ are very close
to $1$. Therefore, one may assume very small $\epsilon$ and $\delta$ which implies $C_1\approx 1$. 
\subsection{Noisy measurements}
We now turn to the general case of noisy random measurements and study the conditions under which 
AOLS with high probability exactly recovers support of $\x$ in at most $k$ iterations. 
\begin{theorem}\label{thm:nois}
\textit{Let $\x \in \R^m$ be an arbitrary $k$-sparse vector and let $\A \in \R^{n\times m}$ denote a matrix 
with entries that are drawn independently from ${\cal N}(0,1/n)$. 
Given the noisy measurements $\y=\A\x+\e$ where $\|\e\|_2 \leq \epsilon_\e$, and $\e$ is independent of $\A$ and $\x$, if 
$\min_{\x_j \ne 0}{|\x_j|}\geq (1+\delta+t) \epsilon_{\e}$ for any $t >0$, AOLS can recover $\x$ in at most $k$ iterations with probability of success $\P\{\Sigma\}\geq p_1p_2p_3$ where
\begin{equation}\label{bound:noise}
\begin{aligned}
p_1&=\left(1-2e^{-(n-k+1)c_0(\epsilon)}\right)^2\\
p_2&=1-2(\frac{12}{\delta})^ke^{-nc_0(\frac{\delta}{2})}, \mbox{ and }\\
p_3&=\left(1-\sum_{i=0}^{k-1} e^{-\frac{n \frac{1-\epsilon}{1+\epsilon}(1-\delta)^4}{k\left[\frac{1}{(k-i) t^2}+(1+\delta)^2\right]}}\right)^{m-k-L+1}
\end{aligned}
\end{equation}
for any $0<\epsilon<1$, $0<\delta<1$}.
\end{theorem}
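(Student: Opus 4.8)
The plan is to run the inductive argument of Theorem \oldref{thm:1} essentially verbatim, isolating the single place where the noise enters and leaving everything else unchanged. As before, I would reorder coordinates so the nonzero entries of $\x$ occupy the first $k$ positions, write $\A=[\bar{\A}\;\;\widetilde{\A}]$, and reduce exact recovery in $k$ iterations to the event $\max_{0\le i<k}\rho(\r_i)<1$, where $\rho(\r_i)$ compares the $(m-k-L+1)$-st largest correlation of a false column in $\Si_{{\cal S}_i}$ with $\r_i$ to the largest correlation of a not-yet-selected true column in $\F_{{\cal S}_i}$. The treatment of the numerator is identical to the noiseless case: Lemma \oldref{lem:23} converts the normalized statistics $\b_j^{{\cal S}_i}$ into the raw columns $\a_j$ at the cost of the factor $\sqrt{(1+\epsilon)/(1-\epsilon)}$ with probability $p_1$, and because $\widetilde{\A}$ is independent of $\bar{\A}$, of $\x$, and (by hypothesis) of $\e$, each statistic $\widetilde{\a}_{o_1}^\top\widetilde{\r}_i\sqrt{k-i}$ is, conditionally on $\r_i$, a zero-mean Gaussian whose tail is controlled by the same Gaussian-tail/Boole argument. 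Hence $p_1$ and $p_2$ are inherited unchanged, and only the variance proxy $\|\widetilde{\r}_i\|_2=\|\r_i\|_2/\|\bar{\A}^\top\r_i\|_2$ must be re-examined.

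The one genuinely new ingredient is the bound on that proxy. I would split the residual as $\r_i=\P_i^\bot\bar{\A}\bar{\x}+\P_i^\bot\e$, and note that since every already-selected column is a true column, the signal part equals $\P_i^\bot\bar{\A}_{-i}\bar{\x}_{-i}$, where $\bar{\A}_{-i}$ collects the $k-i$ unselected true columns and $\bar{\x}_{-i}$ the corresponding entries. Writing $\mathbf{M}=\P_i^\bot\bar{\A}_{-i}$, the identities $\bar{\A}^\top\r_i=\mathbf{M}^\top\mathbf{M}\bar{\x}_{-i}+\mathbf{M}^\top\e$ and $\r_i=\mathbf{M}\bar{\x}_{-i}+\P_i^\bot\e$ reduce everything to the extreme singular values of $\mathbf{M}$. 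Since $\mathbf{M}^\top\mathbf{M}=\bar{\A}_{-i}^\top\P_i^\bot\bar{\A}_{-i}$ is the Schur complement of the selected block inside $\bar{\A}^\top\bar{\A}$, its eigenvalues interlace those of $\bar{\A}^\top\bar{\A}$ (Lemma \oldref{lem:nois}), so Lemma \oldref{lem:baranuk} yields $\sigma_{\min}(\mathbf{M})\ge 1-\delta$ and $\sigma_{\max}(\mathbf{M})\le 1+\delta$ with probability $p_2$. This delivers $\|\bar{\A}^\top\r_i\|_2\ge(1-\delta)^2\|\bar{\x}_{-i}\|_2-(1+\delta)\epsilon_\e$ for the denominator and $\|\r_i\|_2\le(1+\delta)\|\bar{\x}_{-i}\|_2+\epsilon_\e$ for the numerator.

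At this point I would invoke the signal-strength hypothesis. Since every unselected nonzero entry has magnitude at least $(1+\delta+t)\epsilon_\e$, we have $\|\bar{\x}_{-i}\|_2\ge\sqrt{k-i}\,(1+\delta+t)\epsilon_\e$, so the noise-to-signal ratio obeys $\epsilon_\e/\|\bar{\x}_{-i}\|_2\le 1/(\sqrt{k-i}\,(1+\delta+t))$. Substituting this into the two displayed bounds and using $t<1+\delta+t$ to absorb the cross terms, I expect the proxy to collapse to
\[
\|\widetilde{\r}_i\|_2^2\le\frac{1}{(1-\delta)^4}\left[\frac{1}{(k-i)t^2}+(1+\delta)^2\right],
\]
which is exactly the quantity appearing in the exponent of $p_3$. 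Feeding this into the Gaussian-tail bound with threshold $c_1(\epsilon)=\sqrt{(1-\epsilon)/(1+\epsilon)}$, applying Boole's inequality over $i=0,\dots,k-1$, bounding $k-i$ by $k$ in the prefactor to match the stated form, and taking the product over the $m-k-L+1$ independent false columns then yields $\Pr\{\Sigma\}\ge p_1p_2p_3$ with the claimed $p_3$.

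The main obstacle is precisely the denominator estimate in the presence of the noise term $\mathbf{M}^\top\e$: one must lower bound $\|\bar{\A}^\top\r_i\|_2$ and show that the signal contribution $(1-\delta)^2\|\bar{\x}_{-i}\|_2$ dominates the noise contribution $(1+\delta)\epsilon_\e$, which is exactly what $\min_{\x_j\ne0}|\x_j|\ge(1+\delta+t)\epsilon_\e$ guarantees and what ensures a true rather than a false column is chosen at every step. Propagating the resulting noise-to-signal ratio cleanly so that it collapses into the single bracket $\frac{1}{(k-i)t^2}+(1+\delta)^2$ is the only delicate bookkeeping; the remainder of the argument is copied from the noiseless proof.
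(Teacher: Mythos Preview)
Your inductive framework is right, and you correctly isolate the only new work as bounding the proxy $\|\widetilde{\r}_i\|_2=\|\r_i\|_2/\|\bar{\A}^\top\r_i\|_2$. However, your decomposition of the residual differs from the paper's in a way that matters for recovering the \emph{exact} bracket $\frac{1}{(k-i)t^2}+(1+\delta)^2$.

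You split $\r_i=\mathbf{M}\bar{\x}_{-i}+\P_i^\bot\e$ and then attack numerator and denominator separately with the triangle inequality, obtaining $\|\r_i\|_2\le(1+\delta)\|\bar{\x}_{-i}\|_2+\epsilon_\e$ and $\|\bar{\A}^\top\r_i\|_2\ge(1-\delta)^2\|\bar{\x}_{-i}\|_2-(1+\delta)\epsilon_\e$. The resulting ratio does not algebraically collapse to the stated form; writing $u=\epsilon_\e/\|\bar{\x}_{-i}\|_2$ you get $\bigl((1+\delta)+u\bigr)^2/\bigl((1-\delta)^2-(1+\delta)u\bigr)^2$, which carries a noise-dependent \emph{denominator} correction that does not appear in $p_3$. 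The phrase ``I expect the proxy to collapse'' hides a gap: the subtraction in your denominator bound is precisely what the paper avoids.

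The paper's device is to decompose the noise rather than the residual: write $\e=\bar{\A}\w+\e^\bot$ with $\w=\bar{\A}^\dagger\e$ and $\e^\bot=\P_k^\bot\e$, and absorb $\w$ into a shifted coefficient vector $\c=\bar{\x}+\w$. Then $\r_i=\e^\bot+\P_i^\bot\bar{\A}_{i^c}\c_{i^c}$ is an \emph{orthogonal} sum, so $\|\r_i\|_2^2=\|\e^\bot\|_2^2+\|\P_i^\bot\bar{\A}_{i^c}\c_{i^c}\|_2^2$ by Pythagoras, and crucially $\bar{\A}^\top\e^\bot=0$ so $\|\bar{\A}^\top\r_i\|_2\ge\sigma_{\min}^2(\bar{\A})\|\c_{i^c}\|_2$ with \emph{no subtracted noise term}. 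This is what places all the noise in the numerator and yields the clean proxy bound $\|\widetilde{\r}_i\|_2^2\le\bigl[\epsilon_\e^2/\|\c_{i^c}\|_2^2+(1+\delta)^2\bigr]/(1-\delta)^4$. The hypothesis $\x_{\min}\ge(1+\delta+t)\epsilon_\e$ then enters only through the lower bound $\|\c_{i^c}\|_2^2\ge(k-i)\c_{\min}^2$ with $\c_{\min}\ge\x_{\min}-\|\w\|_2$, producing the $1/((k-i)t^2)$ term directly.

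A secondary point: the paper uses the cruder $\sqrt{k}$ (not $\sqrt{k-i}$) norm inequality in the noisy case from the outset, so the $k$ in the denominator of the exponent of $p_3$ sits outside the bracket rather than being obtained by a final relaxation.
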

\begin{proof}
See Appendix \oldref{pf:thm2}.
\end{proof}
\textit{Remark 2:} If we define $\SNR=\frac{\|\A\x\|_2^2}{\|\e\|_2^2}$, the condition 
$\min_{\x_j \ne 0}{|\x_j|}\geq (1+\delta+t) \epsilon_\e$ implies
\begin{equation}
\SNR \approx k (1+\delta+t)^2,
\end{equation} 
which  suggests that for exact support recovery via OLS, $\SNR$ should scale linearly with sparsity level.
\begin{corollary}\label{co:2}
\textit{Let $\x \in \R^m$ be an arbitrary $k$-sparse vector and let $\A \in \R^{n\times m}$ denote a matrix with entries that 
are drawn independently from ${\cal N}(0,1/n)$;
moreover, assume that $n \geq \max\{\frac{6}{C_1} k \log \frac{m}{(k+L-1)\sqrt[3]{\beta}}, \frac{C_2 k+ \log \frac{8}{\beta^2}}{C_3}\}$ 
where $0<\beta<1$ and $C_1$, $C_2$, and $C_3$ are positive constants that are independent of $\beta$, $n$, $m$, and 
$k$. Given the noisy measurements $\y=\A\x+\e$ where $\e \sim {\cal N}(0,\sigma^2)$ is independent of $\A$ and $\x$, 
if $\min_{\x_j \ne 0}{|\x_j|}\geq C_4 \|\e\|_2$ for some  $C_4>1$, AOLS can recover $\x$ in at most $k$ iterations with probability 
of success exceeding $1-\beta^2$.}
\end{corollary}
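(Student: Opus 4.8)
The plan is to mirror the proof of Corollary \oldref{co:1}, substituting the noisy-case expression for $p_3$ from Theorem \oldref{thm:nois} and tracking the extra factor that arises from the noise floor. The only structural difference between the two settings is the exponent appearing inside $p_3$: in place of $\frac{n}{k-i}\frac{1-\epsilon}{1+\epsilon}(1-\delta)^2$ we now have $\frac{n\frac{1-\epsilon}{1+\epsilon}(1-\delta)^4}{k[\frac{1}{(k-i)t^2}+(1+\delta)^2]}$. Since each summand in $p_3$ is decreasing in this exponent, I would lower-bound the exponent uniformly over $i \in \{0,\dots,k-1\}$ by its smallest value, attained at $i=k-1$ (i.e. $k-i=1$), where the bracketed denominator $\frac{1}{(k-i)t^2}+(1+\delta)^2$ is largest. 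This yields $e^{-(\cdot)} \le e^{-C_1 n/k}$ with $C_1=\frac{\frac{1-\epsilon}{1+\epsilon}(1-\delta)^4}{\frac{1}{t^2}+(1+\delta)^2}>0$, a fixed positive constant once $\epsilon,\delta,t$ are fixed.

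Next I would reuse verbatim the elementary manipulations from Corollary \oldref{co:1}: apply $(1-x)^l\ge 1-lx$, bound the number of summands times the outer exponent via $k(m-k-L+1)\le\frac{1}{4}(\frac{m}{k+L-1})^6$ (valid once $m>(k+L-1)^{3/2}$), and fold $p_1$ and $p_2$ in while discarding positive higher-order terms. Defining $C_2=\max\{\log\frac{12}{\delta},c_0(\epsilon)\}$ and $C_3=\min\{c_0(\frac{\delta}{2}),c_0(\epsilon)\}$ exactly as before gives $\Pr\{\Sigma\}\ge 1-\frac{1}{4}(\frac{m}{k+L-1})^6 e^{-C_1 n/k}-6e^{C_2 k}e^{-nC_3}$. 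The stated lower bound on $n$ then drives the two negative terms below $\frac{\beta^2}{4}$ and $\frac{3\beta^2}{4}$ respectively, yielding $\Pr\{\Sigma\}\ge 1-\beta^2$; this half of the argument is identical to Corollary \oldref{co:1} and carries over with the new $C_1$.

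The one genuinely new ingredient, and the step I expect to be the main obstacle, is reconciling the two noise hypotheses. Theorem \oldref{thm:nois} requires a bound of the form $\min_{\x_j\ne 0}|\x_j|\ge(1+\delta+t)\epsilon_\e$ with $\|\e\|_2\le\epsilon_\e$, whereas the corollary assumes $\min_{\x_j\ne 0}|\x_j|\ge C_4\|\e\|_2$ with $C_4>1$ and $\e$ Gaussian. I would set $\epsilon_\e=\|\e\|_2$ (the realized norm) and exploit the slack $C_4>1$ to choose $\delta$ and $t$ small enough that $1+\delta+t\le C_4$, which is always possible and makes the corollary's hypothesis imply that of Theorem \oldref{thm:nois}. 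The subtlety is that shrinking $t$ degrades the constant, since $C_1\to 0$ as $t\to 0$, so one must verify that a single admissible triple $(\epsilon,\delta,t)$ simultaneously satisfies $1+\delta+t\le C_4$ and keeps $C_1,C_2,C_3$ strictly positive and independent of $\beta,n,m,k$. Because $C_4>1$ is fixed, fixing any such triple (e.g. splitting the slack $C_4-1$ between $\delta$ and $t$) resolves this, and the resulting constants are precisely of the form claimed in the statement.
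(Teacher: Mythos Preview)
Your proposal is correct and follows essentially the same route as the paper's proof, which simply states that the argument of Corollary \oldref{co:1} goes through with the noisy $p_3$ from Theorem \oldref{thm:nois} and records the resulting constants $C_1=\frac{1-\epsilon}{1+\epsilon}(1-\delta)^4\bigl(1+t^2(1+\delta)^2\bigr)^{-1}$, $C_2,C_3$ as before, and $C_4=1+\delta+t$. The only notable difference is in the handling of $C_4$: the paper takes the direct route of \emph{defining} $C_4:=1+\delta+t$ for fixed $\epsilon,\delta,t$, so the ``for some $C_4>1$'' in the statement is existential; you instead read the hypothesis as given and back-solve for admissible $(\delta,t)$ with $1+\delta+t\le C_4$, which is slightly more work but yields the same conclusion.
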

\begin{proof}
The proof follows the steps of the proof to Corollary \oldref{co:1}, leading us to constants
$C_1=\frac{1-\epsilon}{1+\epsilon}(1-\delta)^4(1+t^2(1+\delta)^2)^{-1}$, $C_2=\max_{0<\epsilon,\delta<1}{\{\log \frac{12}{\delta},c_0(\epsilon)\}}>0$, $C_3=\min_{0<\epsilon,\delta<1}{\{c_0(\frac{\delta}{2}), c_0(\epsilon)\}}>0$, and 
$C_4=(1+\delta+t)$.
\end{proof}
\textit{Remark 3:} In general, for the case of noisy measurements $C_1$ is smaller than that of the noiseless setting, implying a more demanding sampling requirement for the former.
\section{Simulations}\label{sec:sim}
\subsection{Confirmation of theoretical results}
\begin{figure}[t]
	\includegraphics[width=0.99\linewidth]{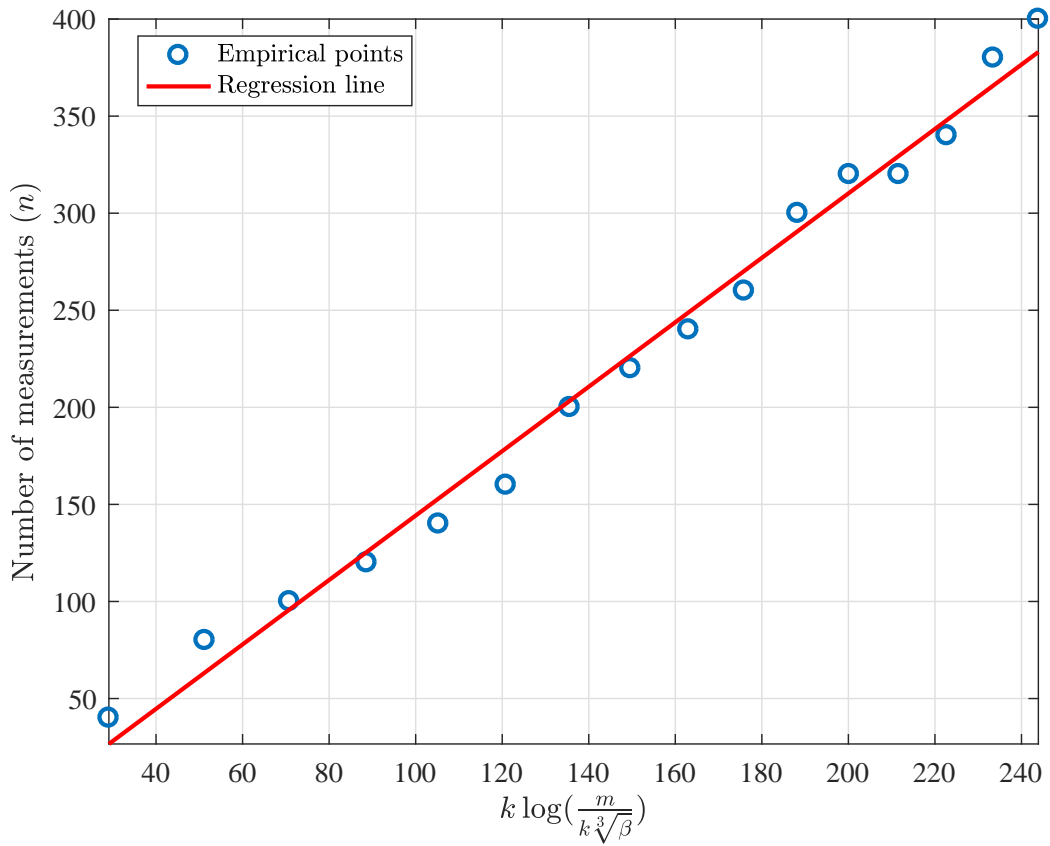}
	\caption{Number of noiseless measurements required for sparse reconstruction with $\beta^2=0.05$ when 
		$m=1024$. The regression line is $n= 2.0109$ $k\log(\frac{m}{k\sqrt[3]{\beta}})$ with the coefficient of determination $R^2=0.9888$.}
	\label{samples:nonoise}
\end{figure}
\begin{figure}[t]
	\includegraphics[width=0.99\linewidth]{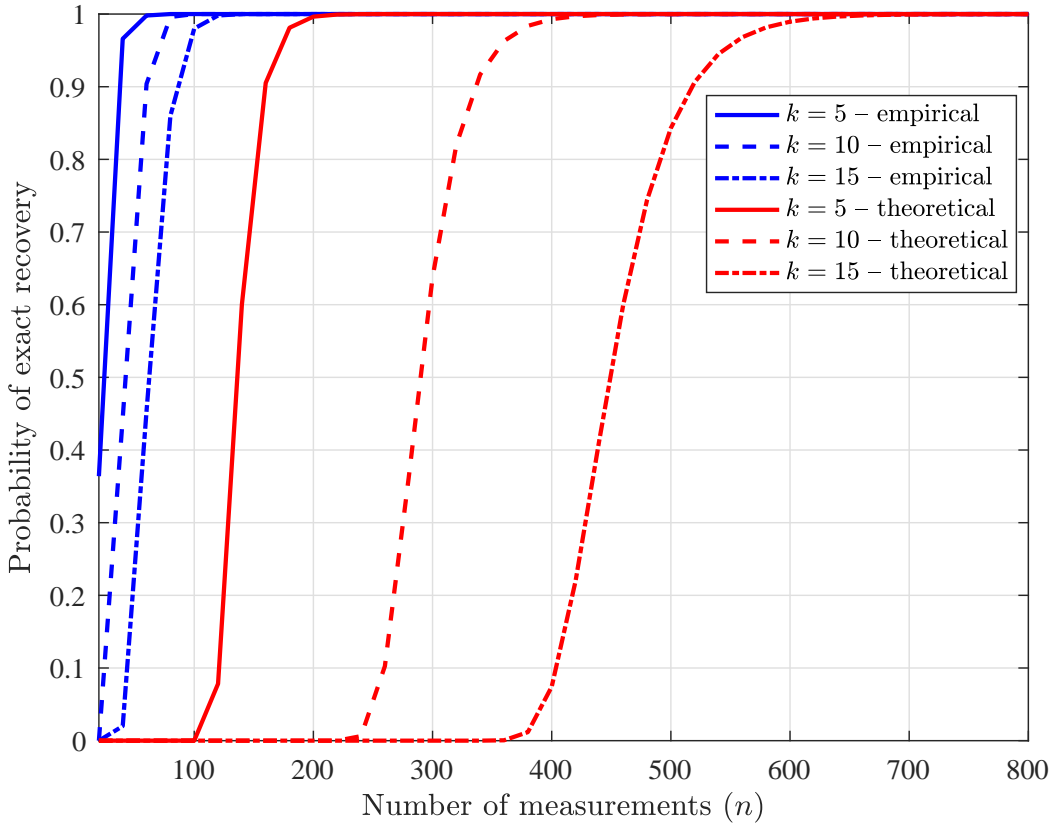}
	\caption{A comparison of the theoretical probability of exact recovery provided by Theorem \oldref{thm:1} 
		with the empirical one, where $m=1024$ and the non-zero elements of $\x$ are drawn independently from
		a normal distribution.}
	\label{probs:nonoise}
\end{figure}
\begin{figure}[t]
	\includegraphics[width=0.99\linewidth]{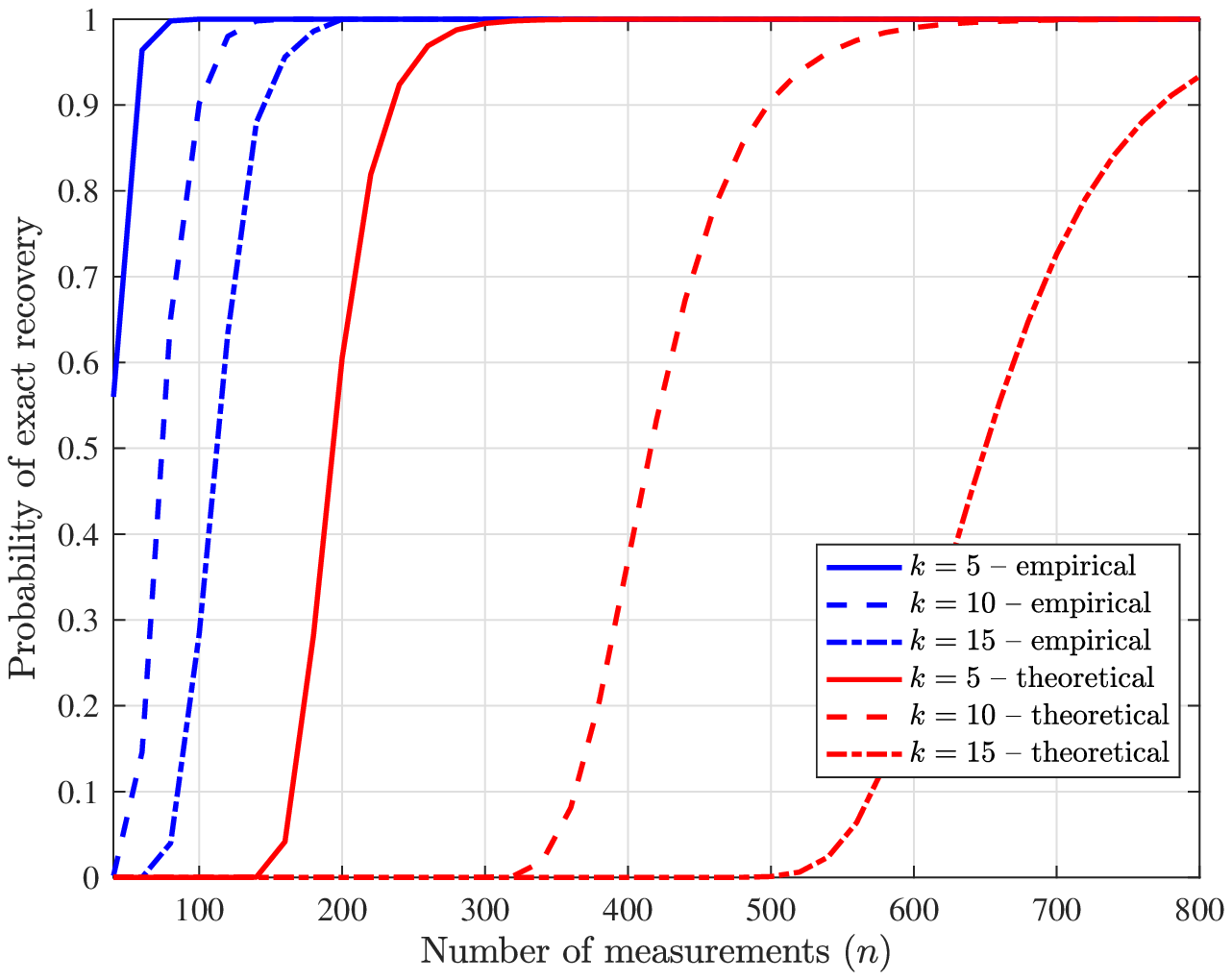}
	\caption{A comparison of the theoretical probability of exact recovery provided by Theorem \oldref{thm:nois} 
		with the empirical one, where $m=1024$ and non-zero elements $\x$ are set to $(1+\delta+20) \|\e\|_2$.}
	\label{bound:noise}
\end{figure}
In this section, we verify our theoretical results by comparing them to the empirical ones obtained via Monte Carlo simulations. 

First, we consider the results of Corollary \oldref{co:1} with $L=1$. In each 
trial, we select locations of the nonzero elements of $\x$ uniformly at random and draw those elements 
from a normal distribution. Entries
of the coefficient matrix $\A$ are also generated randomly from ${\cal N}(0,\frac{1}{n})$.
Fig. \oldref{samples:nonoise} plots the number of noiseless measurement $n$ needed to achieve at least $0.95$ probability of perfect recovery (i.e., $\beta^2=0.05$) as
a function of $k\log(\frac{m}{k\sqrt[3]{\beta}})$. The length of the unknown vector $\x$ here is set to $m=1024$, and the results (shown as
circles) are averaged over $1000$ independent trials. The solid regression line in Fig. \oldref{samples:nonoise} implies linear relation between $n$
and $k\log(\frac{m}{k\sqrt[3]{\beta}})$ as predicted by Corollary \oldref{co:1}. Specifically, for the considered setting, $n\approx 2.0109$ $k\log(\frac{m}{k\sqrt[3]{\beta}})$. 
Recall that, according to Remark 1, for a high dimensional problem where the exact support recovery has the
probability of success overwhelmingly close to 1, $C_1 \approx 1$; this implies $n\geq 6$ $k\log(\frac{m}{k\sqrt[3]{\beta}})$ for all $m$ and $k$. 
Therefore, Fig. \oldref{samples:nonoise} suggests that our theoretical result is somewhat conservative (which is due to approximations that 
we rely on in the proof of Theorem \oldref{thm:1} and Corollary \oldref{co:1}).

In Fig. \oldref{probs:nonoise}, we compare the lower bound on probability of exact recovery from noiseless random measurements established in Theorem \oldref{thm:1} with empirical results. In particular, we consider the setting where $L=1$, $m=1000$ and the non-zero elements of $\x$ are independent and identically distributed 
normal random variables. For three sparsity levels ($k=5,10,15$) we vary the number of measurements and 
plot the empirical probability of exact recovery, averaged over 1000 independent instances. Fig. \oldref{probs:nonoise} illustrates that the theoretical lower bound established  in  \ref{eq:probnonois} becomes
more tight as the signal becomes more sparse.

Next, we compare the lower bound on probability of exact recovery from noiseless random measurements established in Theorem \oldref{thm:nois} with empirical results. More specifically, $L=1$, $m=1000$, $k=5,10,15$,  and the non-zero elements of $\x$ are set to $(1+\delta+20) \|\e\|_2$ to ensure that the condition of Theorem \oldref{thm:nois} imposed on the smallest nonzero element of $\x$ is satisfied. For this setting, in
Fig. \oldref{bound:noise} the results of Theorem \oldref{thm:nois} are compared with the empirical ones 
(the latter are averaged over 1000 independent instances). 
As can be seen from the figure, the lower bound on probability of successful recovery becomes more accurate
for lower $k$, similar to the results for the noiseless scenario illustrated in Fig. \oldref{probs:nonoise}.
\subsection{Sparse recovery performance comparison}
\begin{figure*}[t]
	\begin{subfigure}[]{0.5\textwidth}
		\centering
		\includegraphics[width=1\textwidth]{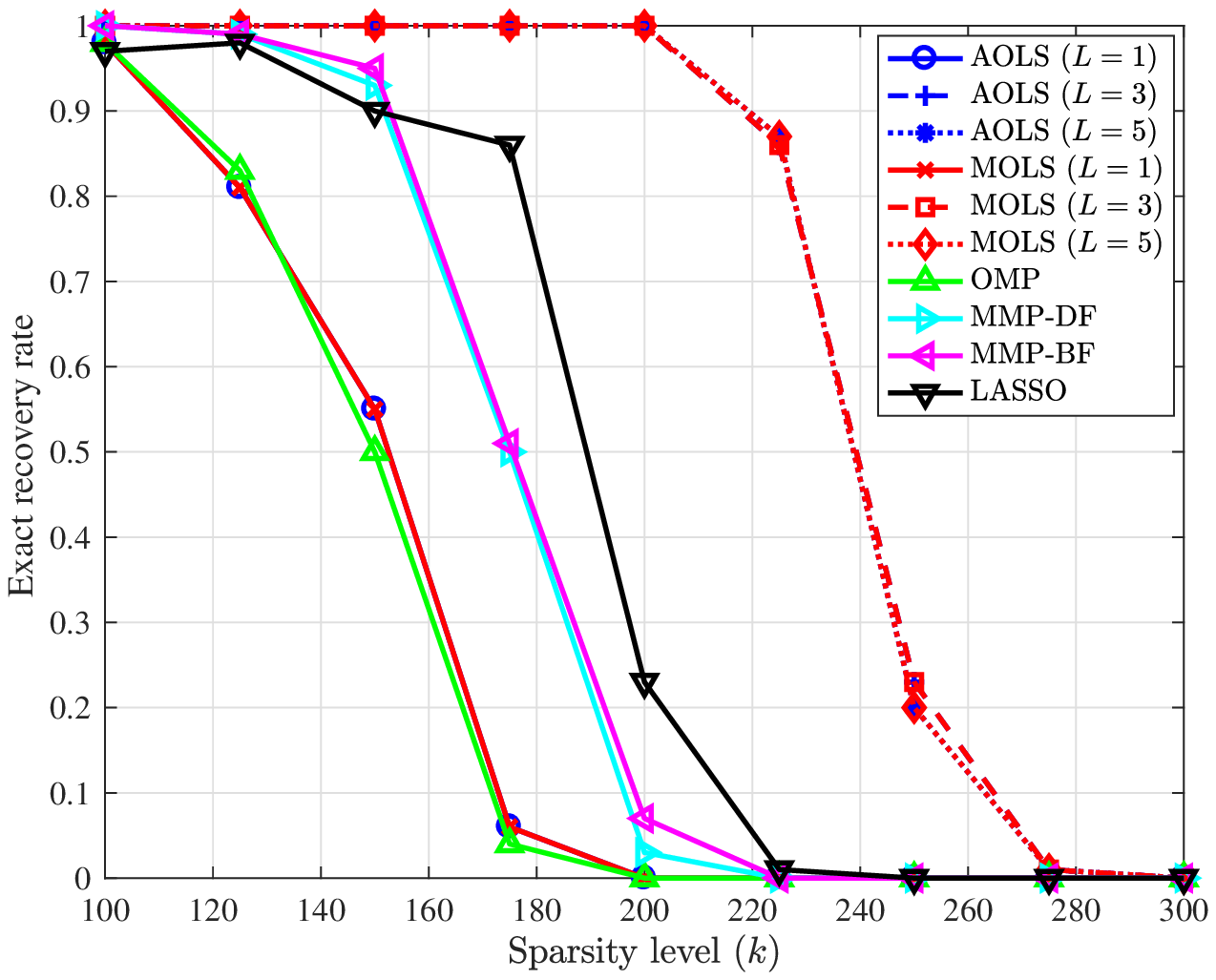}\quad\caption{\footnotesize $T=0$}
	\end{subfigure}
	\begin{subfigure}[]{.5\textwidth}
		\centering
		\includegraphics[width=1\textwidth]{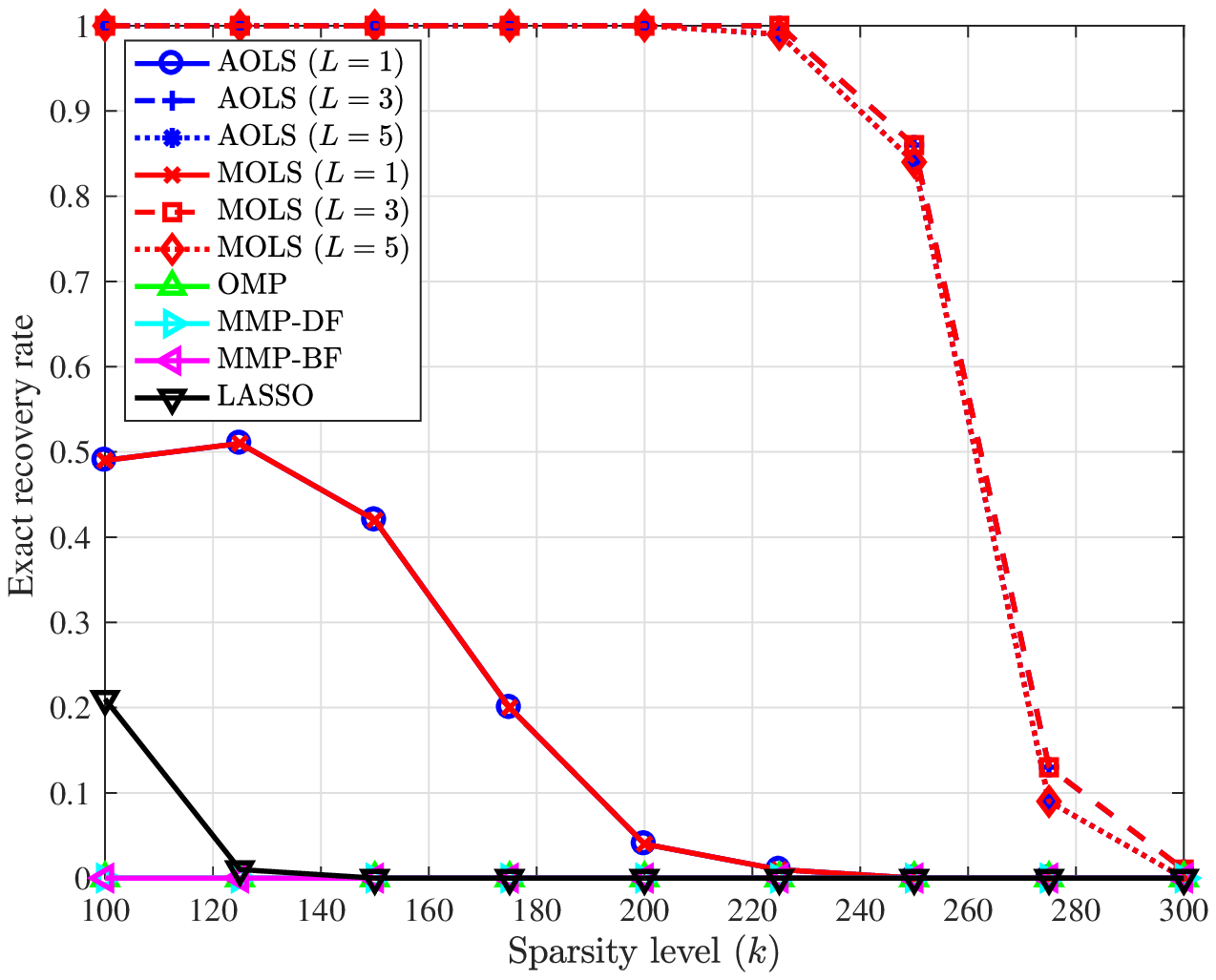}\quad\caption{\footnotesize $T=0.5$}
	\end{subfigure}
	\begin{subfigure}[]{.5\textwidth}
		\centering
		\includegraphics[width=1\textwidth]{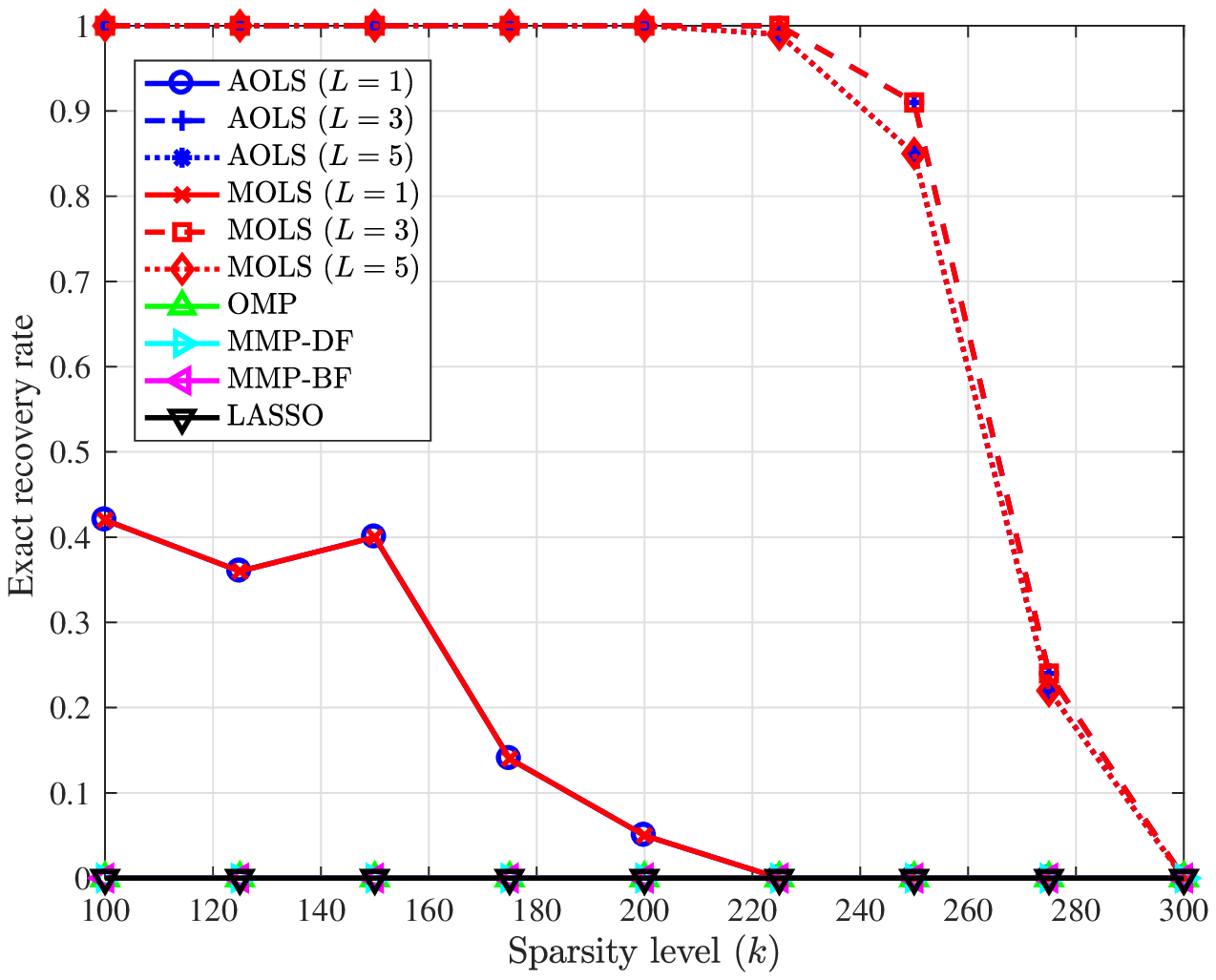}\quad\caption{\footnotesize  $T=1$}
	\end{subfigure}
	\begin{subfigure}[]{.5\textwidth}
		\centering
		\includegraphics[width=1\textwidth]{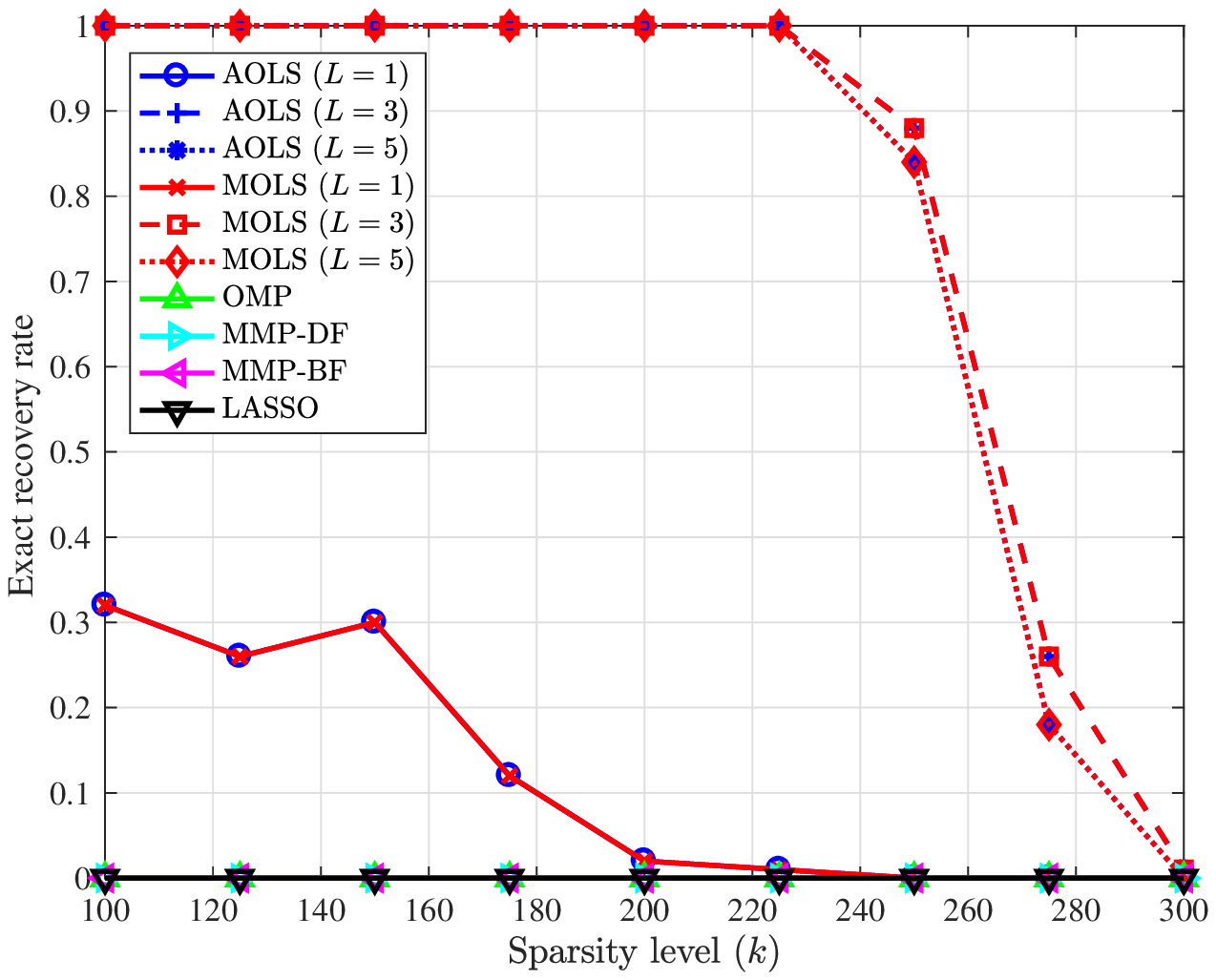}\quad\caption{\footnotesize $T=10$}
	\end{subfigure}
	\caption{\label{fig:ex} Exact recovery rate comparison of AOLS, MOLS, OMP, MMP-DP, MMP-BP, and LASSO for $n=512$, $m=1024$, and $k$ non-zero components of $\x$ uniformly drawn from ${\cal N}(0,1)$ distribution.}   
\end{figure*}
\begin{figure*}[t]
	\begin{subfigure}[]{0.5\textwidth}
		\centering
		\includegraphics[width=1\textwidth]{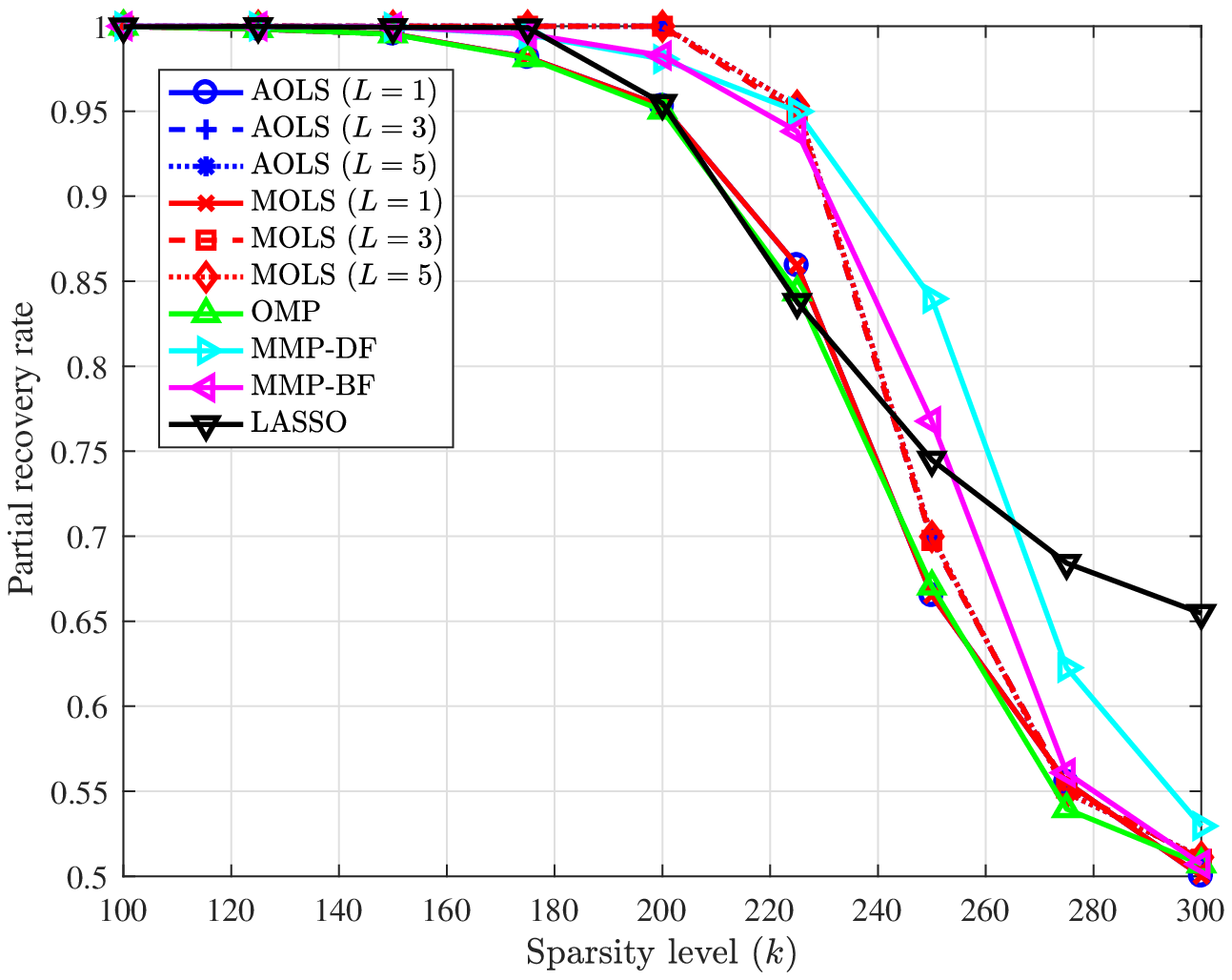}\quad\caption{\footnotesize $T=0$}
	\end{subfigure}
	\begin{subfigure}[]{.5\textwidth}
		\centering
		\includegraphics[width=1\textwidth]{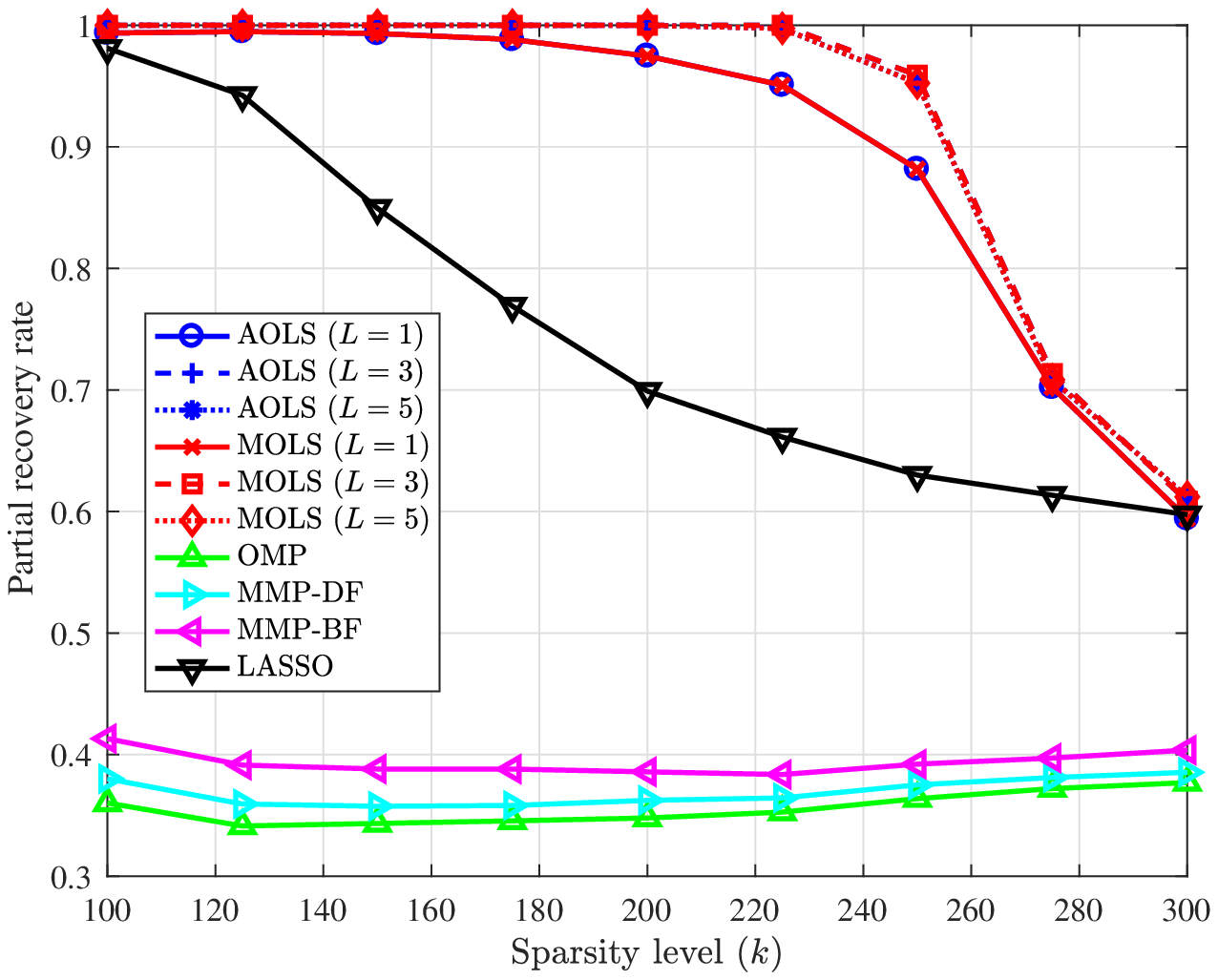}\quad\caption{\footnotesize $T=0.5$}
	\end{subfigure}
	\begin{subfigure}[]{.5\textwidth}
		\centering
		\includegraphics[width=1\textwidth]{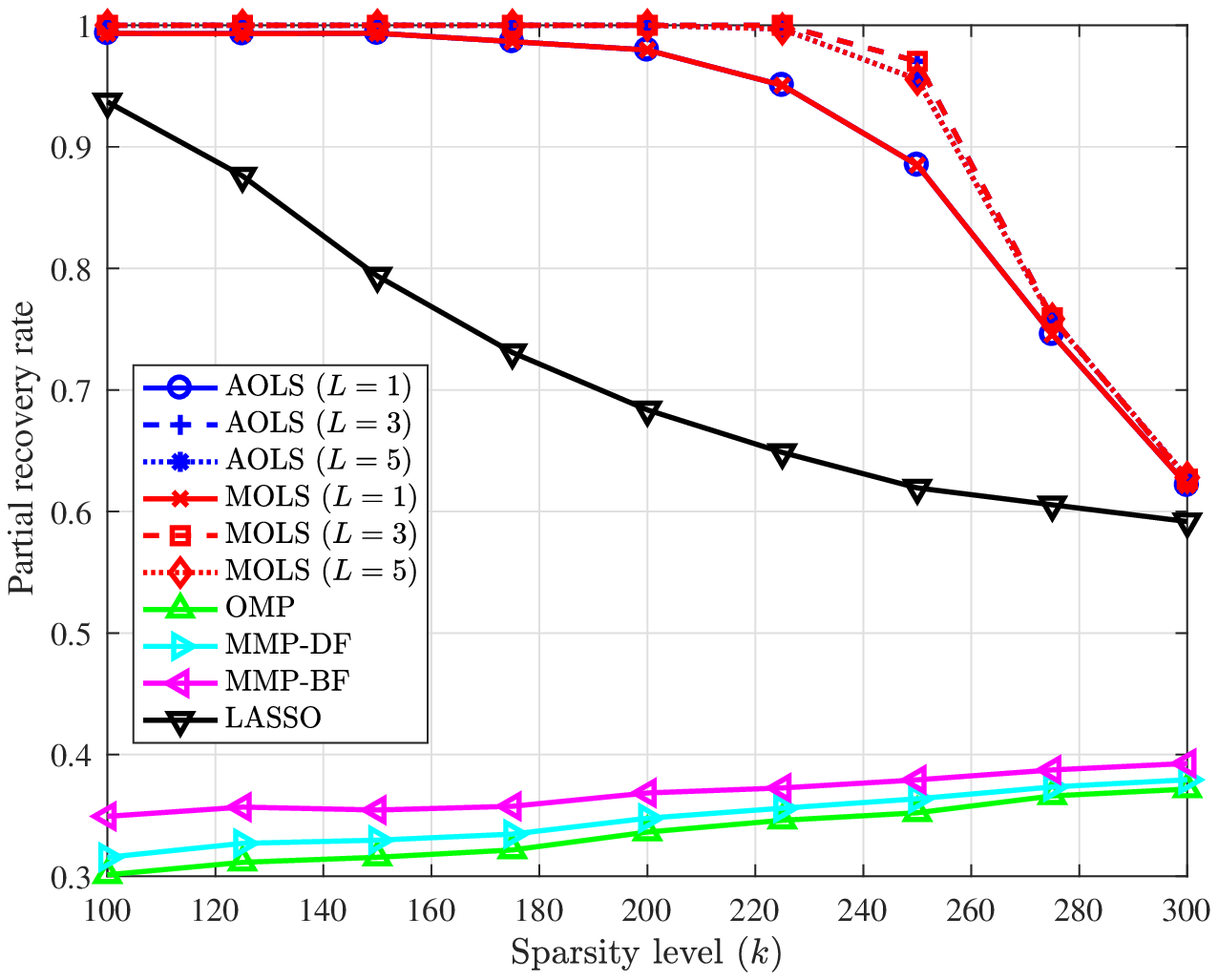}\quad\caption{\footnotesize  $T=1$}
	\end{subfigure}
	\begin{subfigure}[]{.5\textwidth}
		\centering
		\includegraphics[width=1\textwidth]{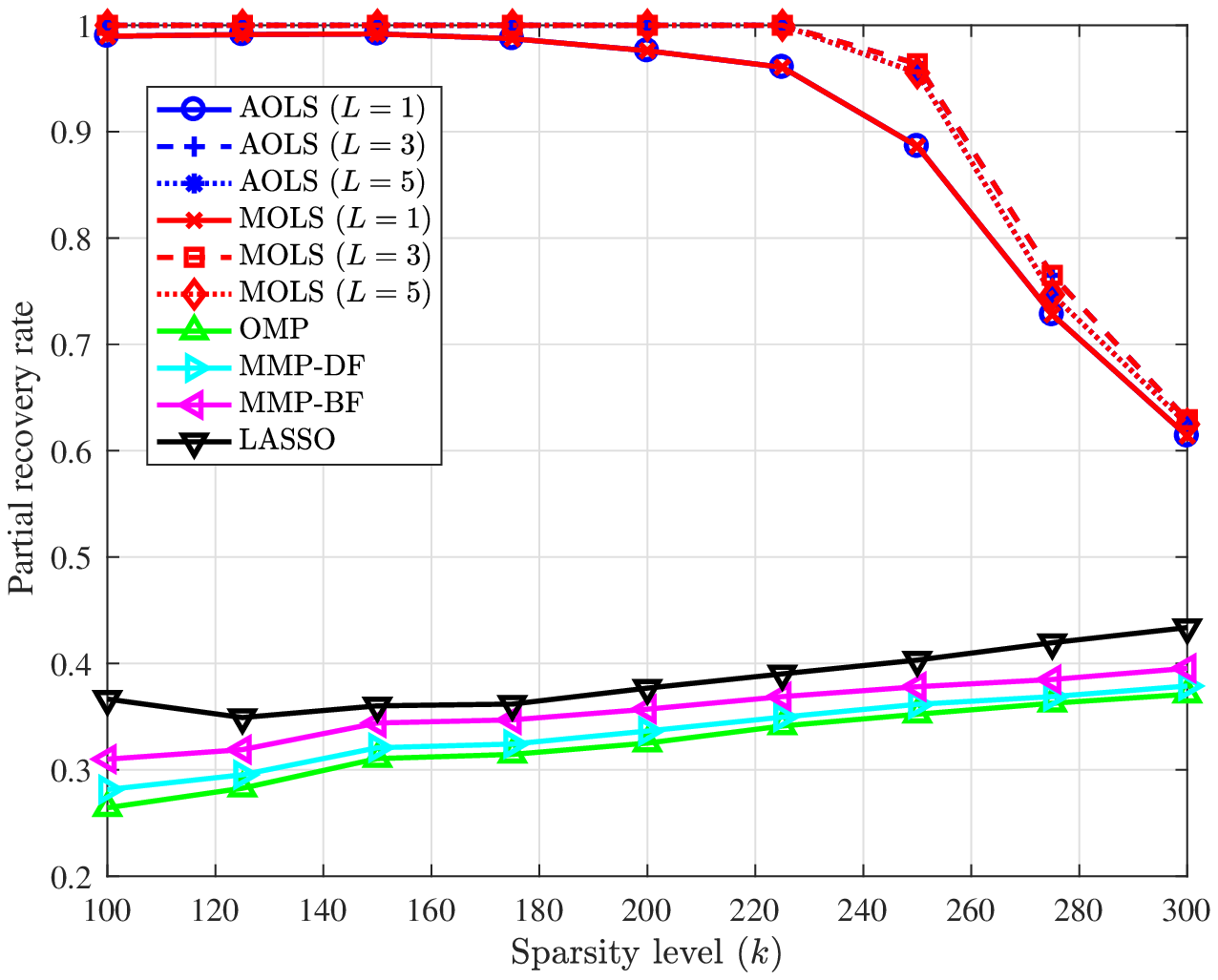}\quad\caption{\footnotesize $T=10$}
	\end{subfigure}
	\caption{\label{fig:rec} Partial recovery rate comparison of AOLS, MOLS, OMP, MMP-DP, MMP-BP, and LASSO for $n=512$, $m=1024$, and $k$ non-zero components of $\x$ uniformly drawn from ${\cal N}(0,1)$ distribution.}   
\end{figure*}
\begin{figure*}[t]
	\begin{subfigure}[]{0.5\textwidth}
		\centering
		\includegraphics[width=1\textwidth]{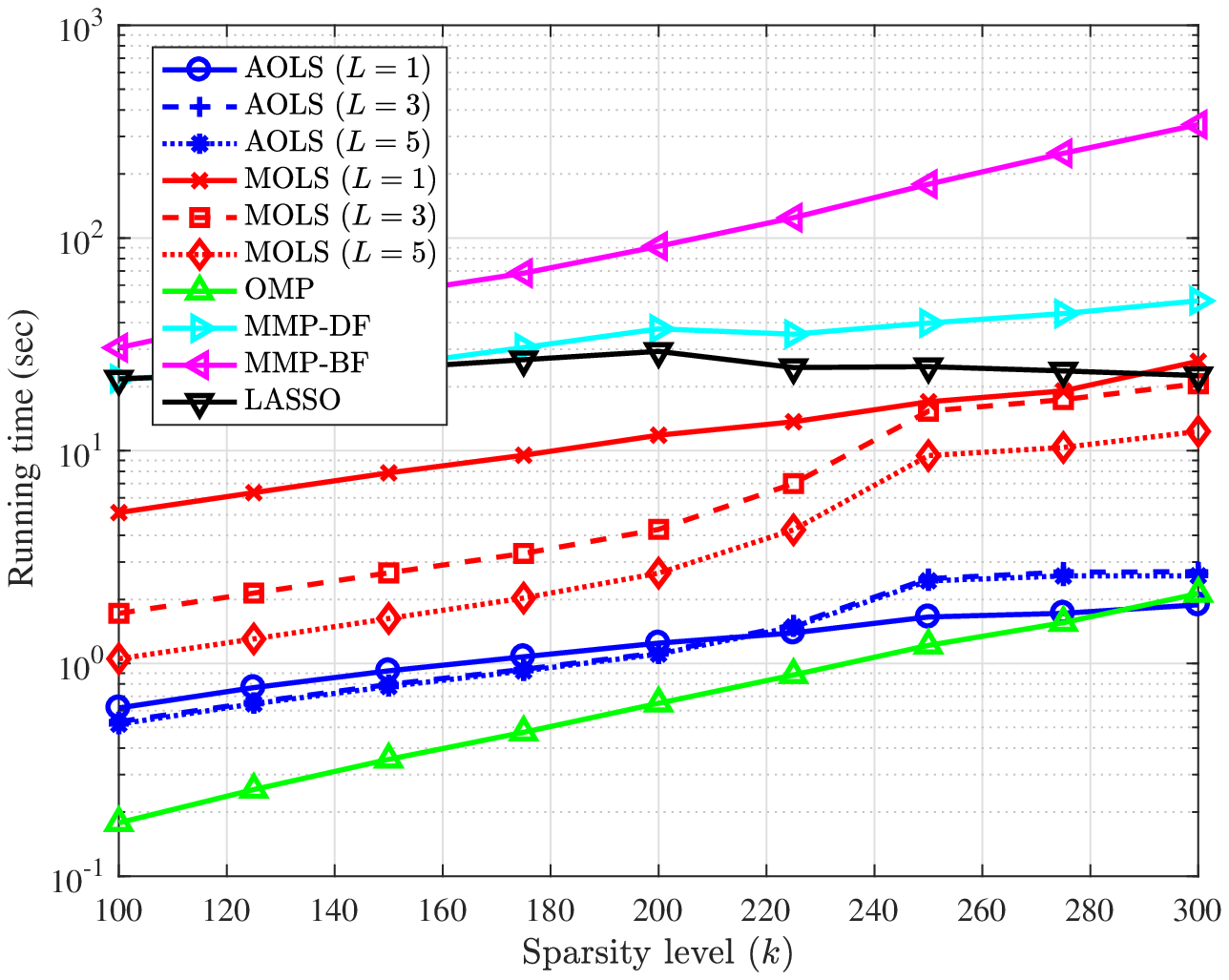}\quad\caption{\footnotesize $T=0$}
	\end{subfigure}
	\begin{subfigure}[]{.5\textwidth}
		\centering
		\includegraphics[width=1\textwidth]{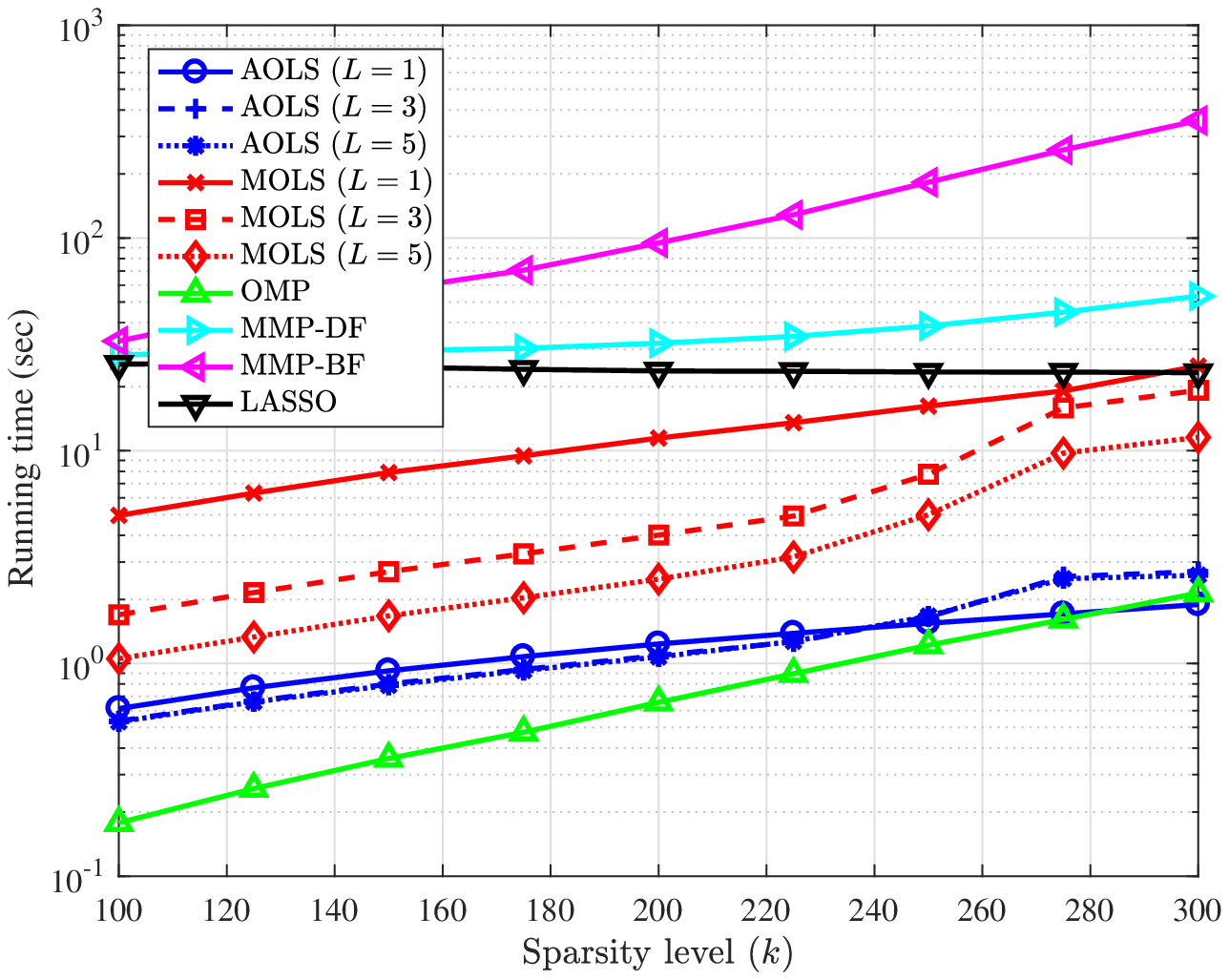}\quad\caption{\footnotesize $T=0.5$}
	\end{subfigure}
	\begin{subfigure}[]{.5\textwidth}
		\centering
		\includegraphics[width=1\textwidth]{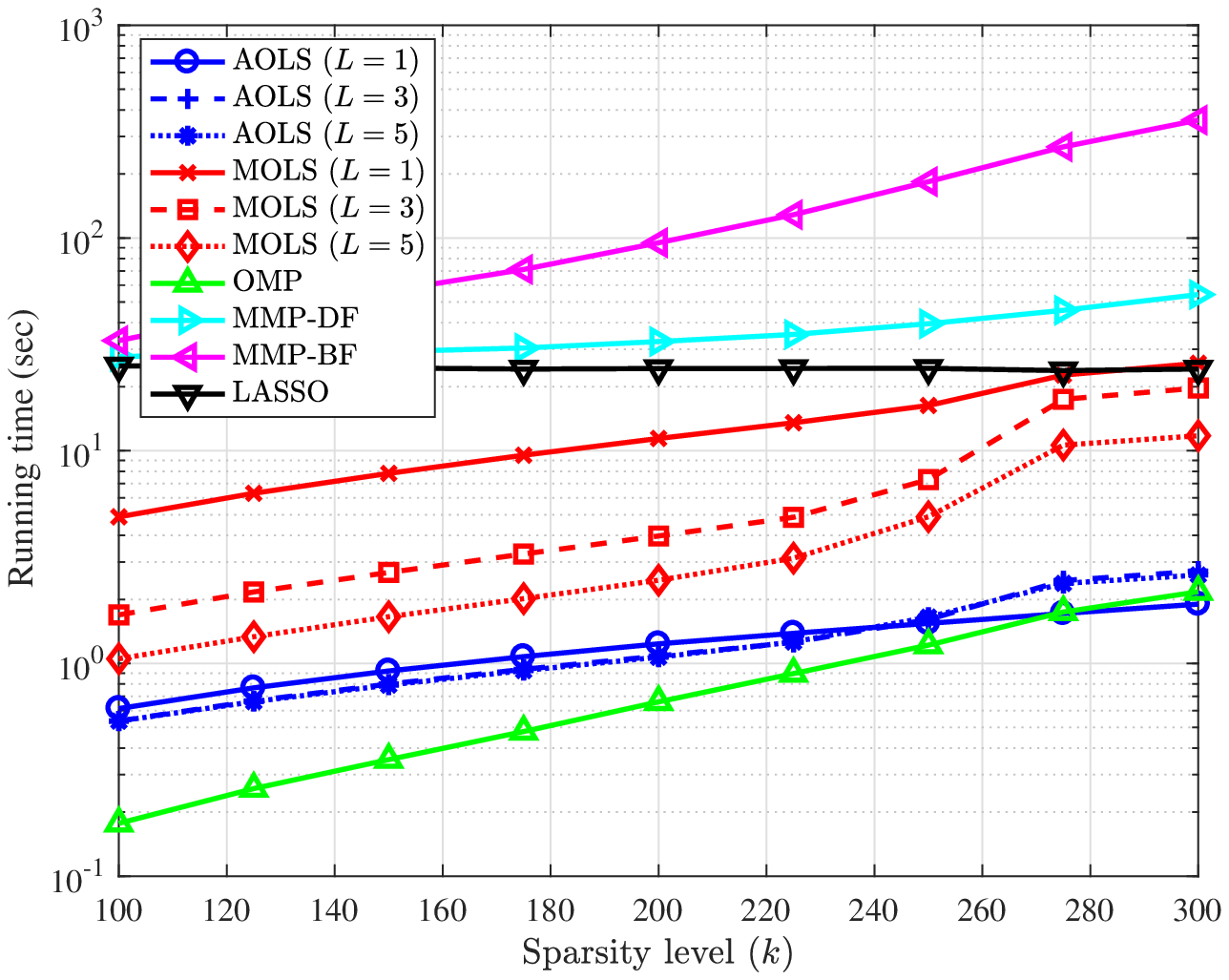}\quad\caption{\footnotesize  $T=1$}
	\end{subfigure}
	\begin{subfigure}[]{.5\textwidth}
		\centering
		\includegraphics[width=1\textwidth]{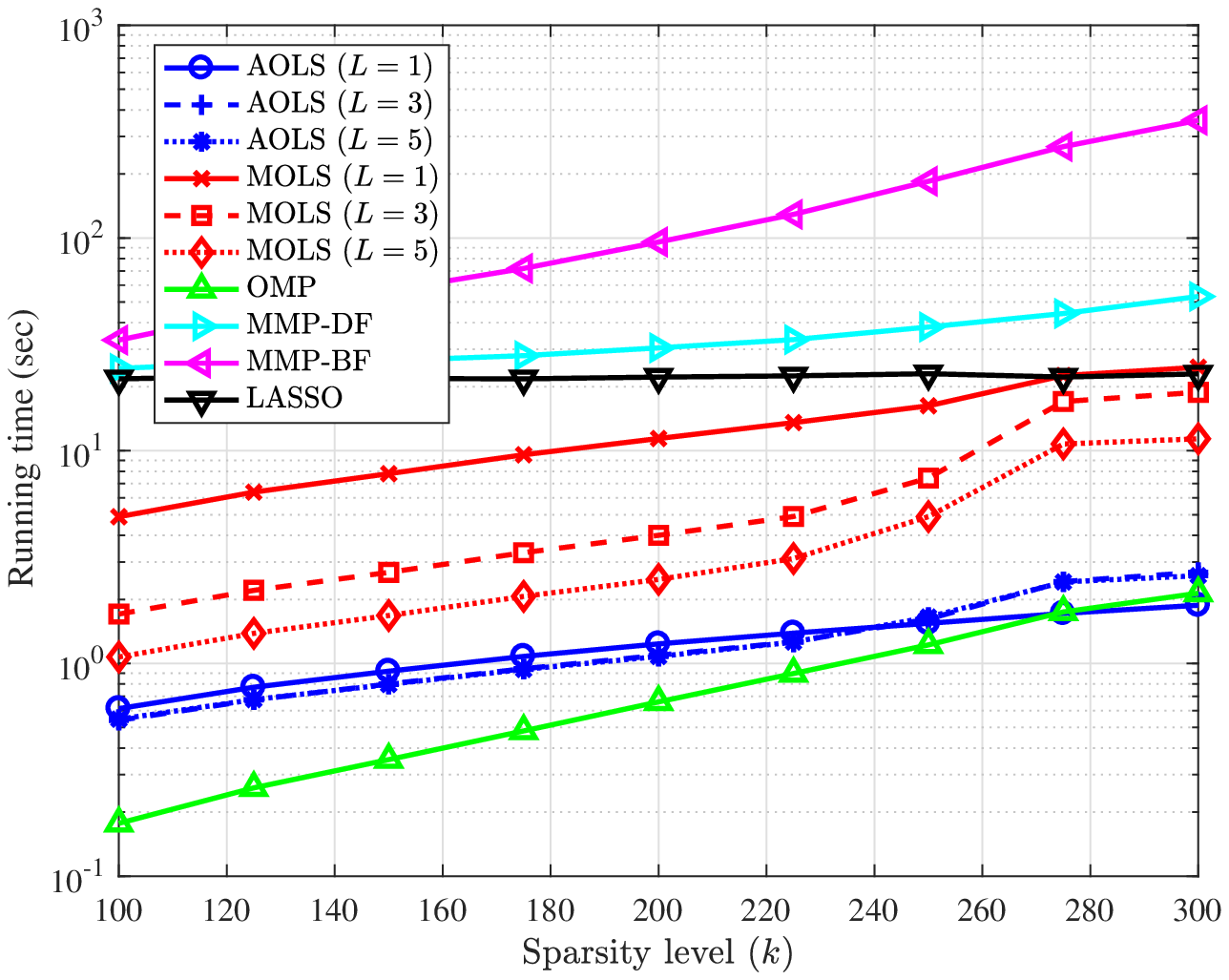}\quad\caption{\footnotesize $T=10$}
	\end{subfigure}
	\caption{\label{fig:time}  A comparison of AOLS, MOLS, OMP, MMP-DP, MMP-BP, and LASSO for $n=512$, $m=1024$, and $k$ non-zero components of $\x$ uniformly drawn from 
	the ${\cal N}(0,1)$ distribution.}   
\end{figure*}
To evaluate performance of the AOLS algorithm, we compare it with that of five state-of-the-art sparse recovery algorithms 
as a function of the sparsity level $k$. In particular, we considered OMP \cite{pati1993orthogonal}, Least Absolute Shrinkage and Selection Operator (LASSO) \cite{tibshirani1996regression,candes2005decoding}, MOLS \cite{wang2017recovery} with $L = 1,3,5$, depth-first and breath-first multipath matching pursiut \cite{kwon2014multipath} (refered to as MMP-DF and MMP-BP, respectively). It is shown in \cite{wang2017recovery,kwon2014multipath} that MOLS, MMP-DF, and MMP-BP outperform many of the sparse recovery algorithms, including OLS \cite{chen1989orthogonal}, OMP \cite{pati1993orthogonal}, GOMP \cite{wang2012generalized}, StOMP \cite{donoho2012sparse}, and BP \cite{candes2005decoding}. Therefore, to demonstrate 
performance and uniqueness of AOLS with respect to other sparse recovery methods, we compare it to these three schemes. We also include performance of OMP and LASSO as baselines.

For MOLS, MMP-DF, and MMP-BF we used the MATLAB implementations provided by the authors of \cite{wang2017recovery,kwon2014multipath}. As typically done in benchmarking 
tests \cite{dai2009subspace,wang2012generalized}, we used CVX \cite{cvx,gb08} to implement the 
LASSO algorithm. We explored various values of $L$ (specifically, $L=1,3,5$) to better understand its effect on the performance of AOLS. The 
stopping threshold for AOLS, MMP-DF, OMP, MOLS, and LASSO was set to $10^{-13}$ (MMP-BF, a breadth-first algorithm, does not use a stopping threshold).

We consider sparse recovery from random measurements in a large-scale setting to fully understand scalability of tested algorithms. To this end, we set $n=512$ and $m=1024$; $k$ changes from $100$ to $300$. The non-zero elements of $\x$ -- whose locations are chosen uniformly 
-- are independent and identically distributed normal random variables. In order to construct $\A$, we consider the so-called 
hybrid scenario \cite{soussen2013joint} to simulate both correlated and uncorrelated dictionaries. 
Specifically, we set 
$\A_j=\frac{\b_j+t_j\mathbf{1}}{\|\b_j+t_j\mathbf{1}\|_2}$ where $\b_j \sim {\cal N}(0,\frac{1}{n})$, $t_j \sim {\cal U}(0,T)$ with 
$T \geq 0$, and $\mathbf{1} \in \R^{n}$ is the all-one vector. In addition, $\{\b_j\}_{j=1}^m$ and $\{t_j\}_{j=1}^m$ are statistically 
independent. Notice that as $T$ increases, the so-called mutual coherence parameter of $\A$ increases, resulting in a more 
correlated coefficient matrix; $T=0$ corresponds to an incoherent $\A$. For each scenario, we use Monte Carlo simulations 
with $100$ independent instances. Performance of each algorithm is characterized by three metrics: (i) Exact Recovery Rate (ERR), 
defined as the percentage of instances where the support of $\x$ is recovered exactly, (ii) Partial Recovery Rate (PRR), measuring 
the fraction of support which is recovered correctly, and (iii) the running time of the algorithm.

The exact recovery rate, partial recovery rate, and running time comparisons are shown in Fig. \oldref{fig:ex}, Fig. \oldref{fig:rec}, and Fig. \oldref{fig:time}, respectively. As can be seen from 
Fig. \oldref{fig:ex}, AOLS and MOLS with $L=3,5$ achieve the best exact recovery rate for various values of $T$. We also observe that as $T$ increases, performance of all schemes, except 
for AOLS and  MOLS, significantly deteriorates. As for the partial recovery rate shown in Fig. \oldref{fig:rec}, for $T=0$ all methods perform similarly. However, AOLS and MOLS are 
robust to changes in $T$ while other schemes perform poorly for larger values of $T$.
Running time comparison results, depicted in Fig. \oldref{fig:time}, demonstrate that for all scenarios the AOLS algorithm is essentially as fast as OMP, while AOLS is significantly more accurate. 
We also observe from the figure that AOLS is significantly faster that other schemes. Specifically, for larger values of $k$, AOLS is around 15 times faster than MOLS, while they deliver essentially 
the same performance. 

\subsection{Application: sparse subspace clustering}
\begin{figure*}[t]
	\begin{subfigure}[]{0.5\textwidth}
		\centering
		\includegraphics[width=1\textwidth]{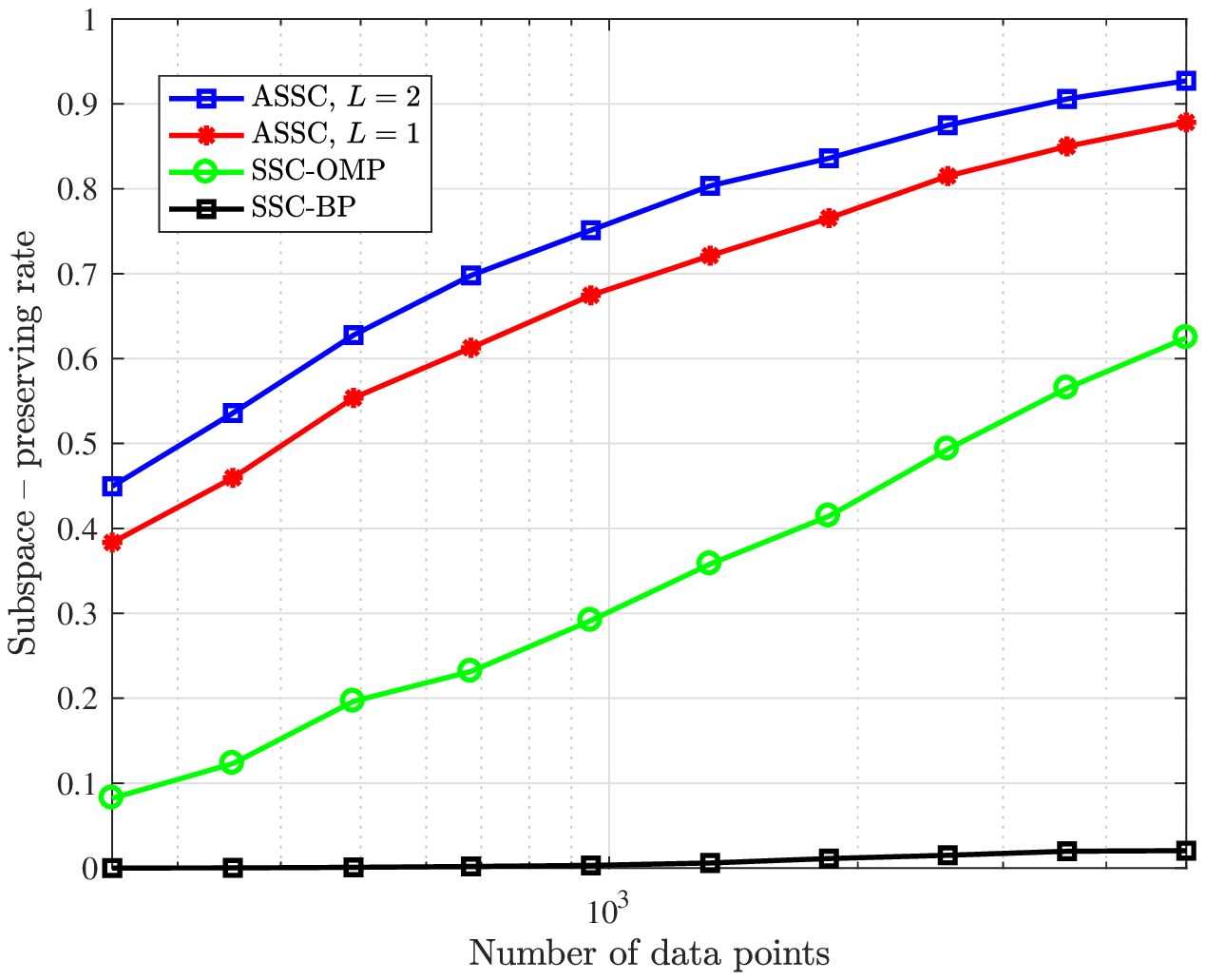}\quad\caption{\footnotesize Subspace preserving rate}
	\end{subfigure}
	\begin{subfigure}[]{.5\textwidth}
		\centering
		\includegraphics[width=1\textwidth]{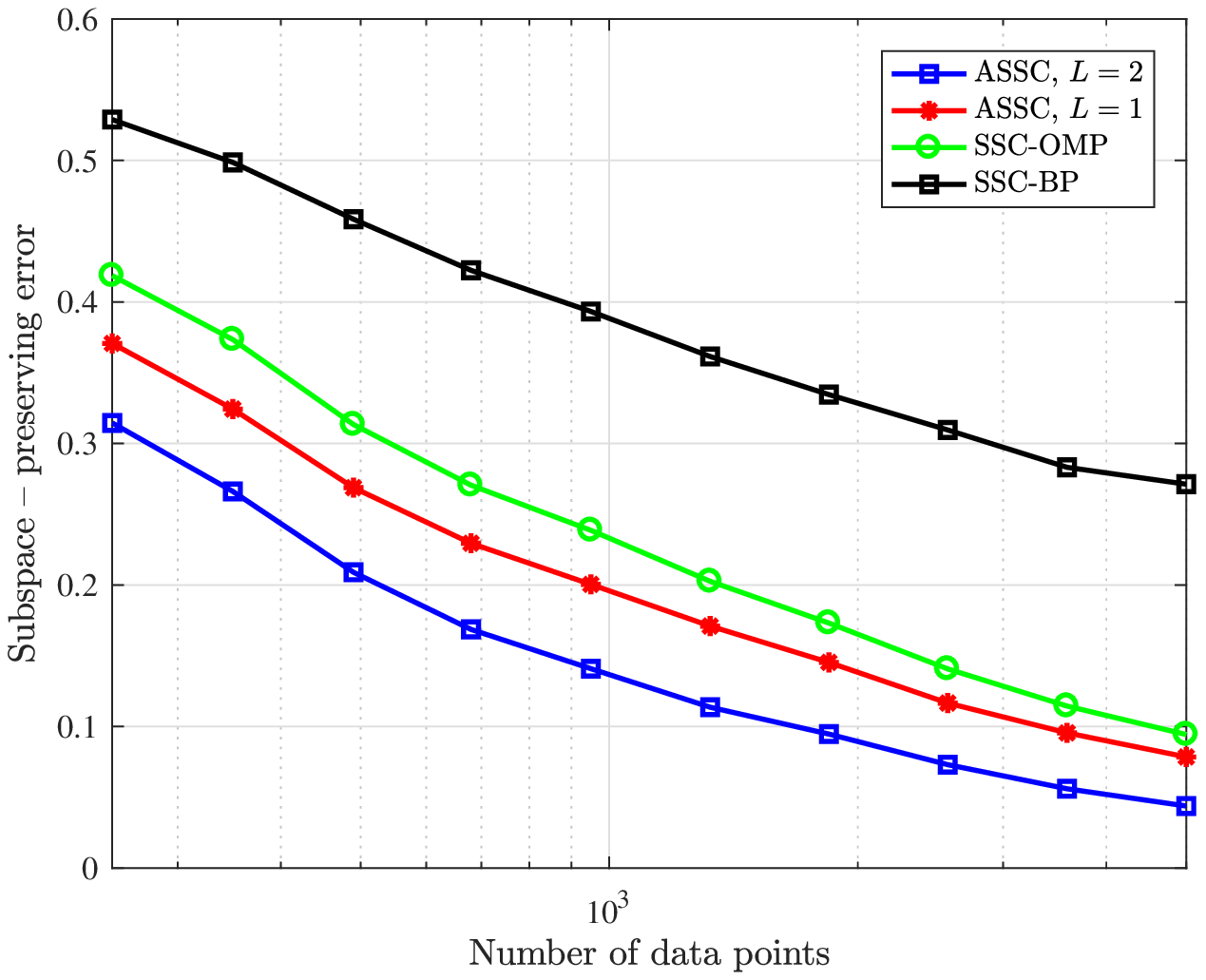}\quad\caption{\footnotesize  Subspace preserving error}
	\end{subfigure}
	\begin{subfigure}[]{.5\textwidth}
		\centering
		\includegraphics[width=1\textwidth]{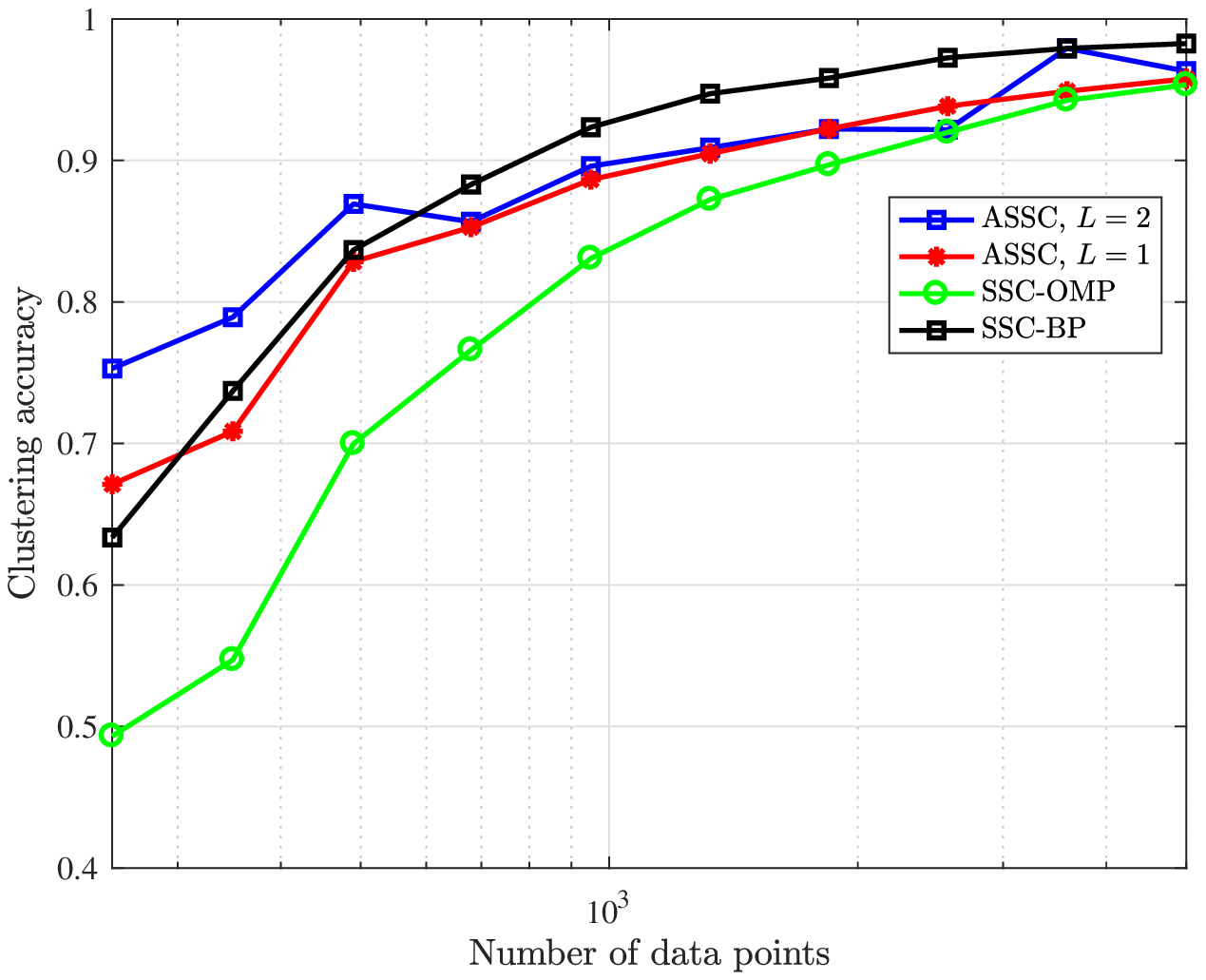}\quad\caption{\footnotesize  Clustering accuracy}
	\end{subfigure}
	\begin{subfigure}[]{.5\textwidth}
		\centering
		\includegraphics[width=1\textwidth]{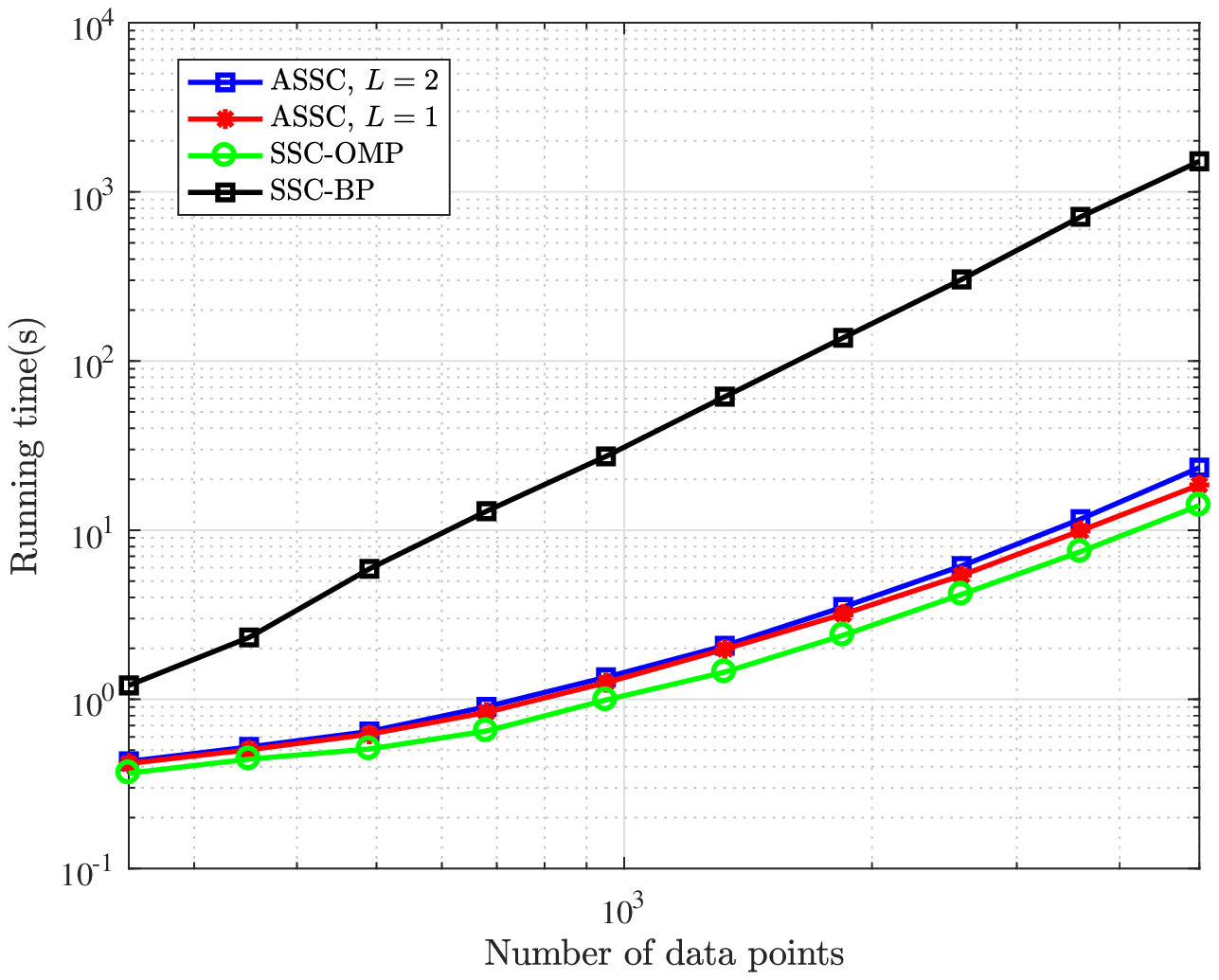}\quad\caption{\footnotesize Running time (sec)}
	\end{subfigure}
	\caption{\label{fig:ssc1} Performance comparison of ASSC, SSC-OMP \cite{dyer2013greedy,you2015sparse}, and SSC-BP \cite{elhamifar2009sparse,elhamifar2013sparse} {\color{black}{on synthetic data with no perturbation}}. The points are drawn from $5$ subspaces of dimension $6$ in ambient dimension $9$. Each subspace contains the same number of points and the overall number of points is varied from $250$ to $5000$.}   
\end{figure*}
\begin{figure*}[t]
\begin{subfigure}[]{0.5\textwidth}
		\centering
		\includegraphics[width=1\textwidth]{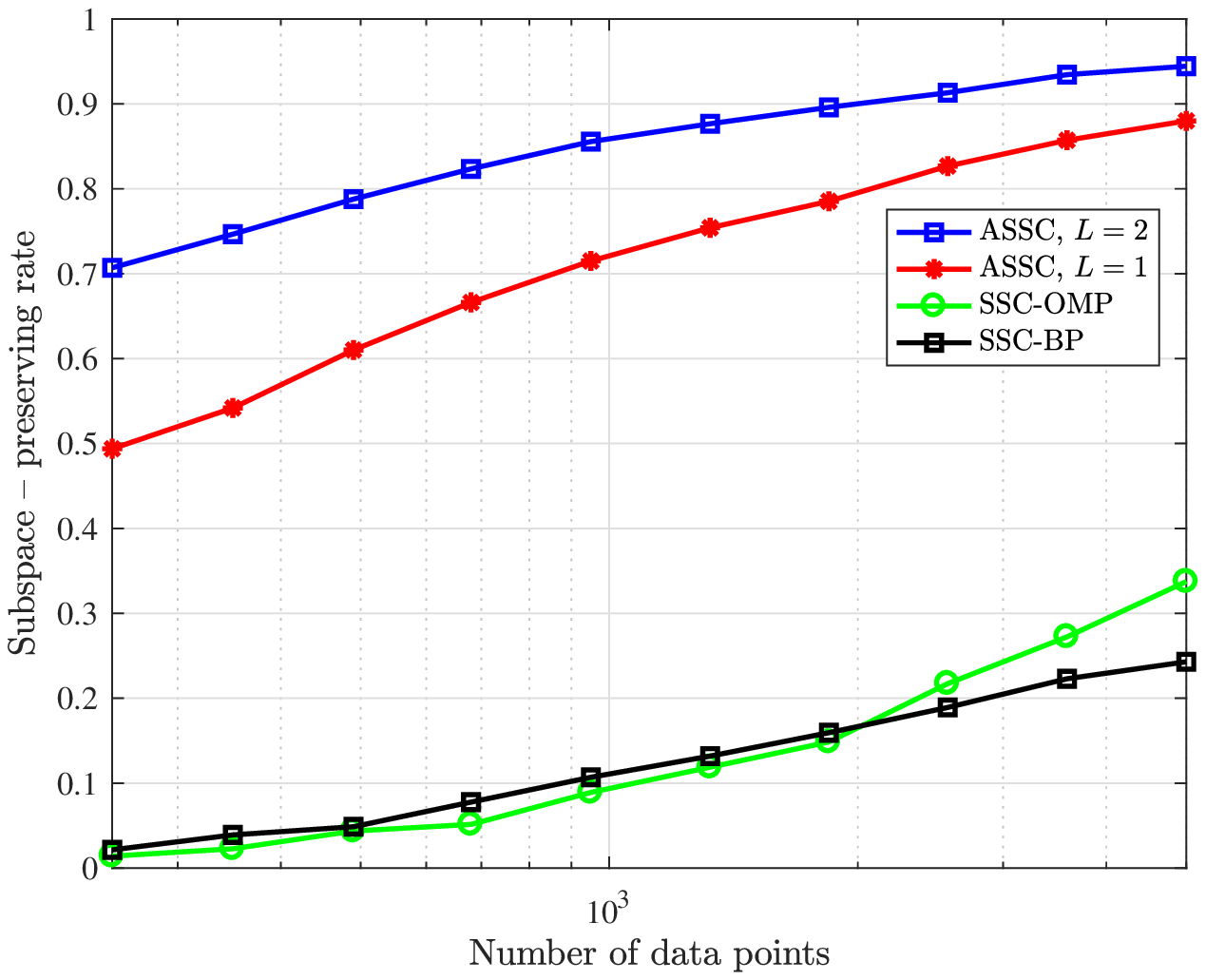}\quad\caption{\footnotesize Subspace preserving rate}
	\end{subfigure}
	\begin{subfigure}[]{0.5\textwidth}
		\centering
		\includegraphics[width=1\textwidth]{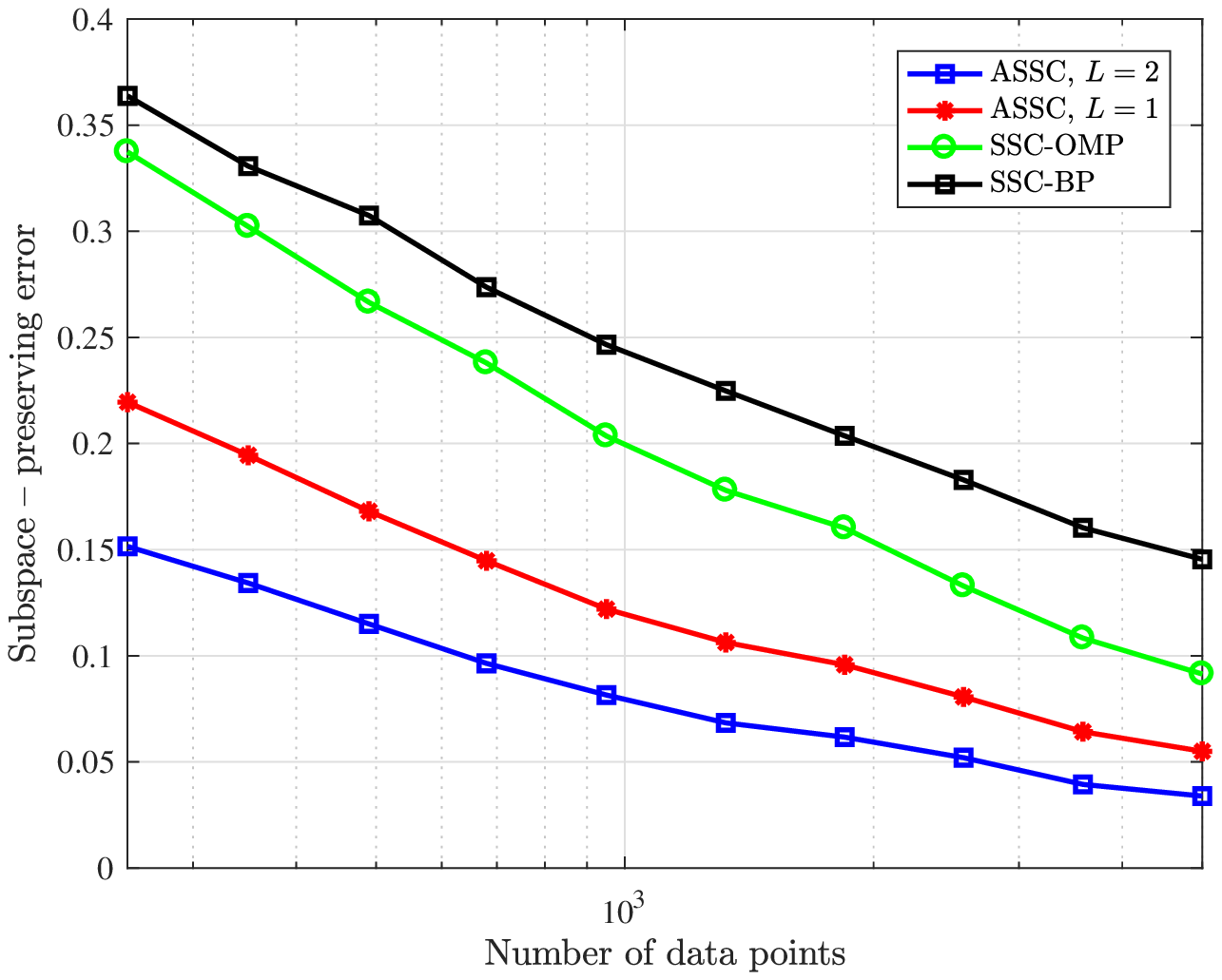}\quad\caption{\footnotesize  Subspace preserving error}
	\end{subfigure}
	\begin{subfigure}[]{.5\textwidth}
		\centering
		\includegraphics[width=1\textwidth]{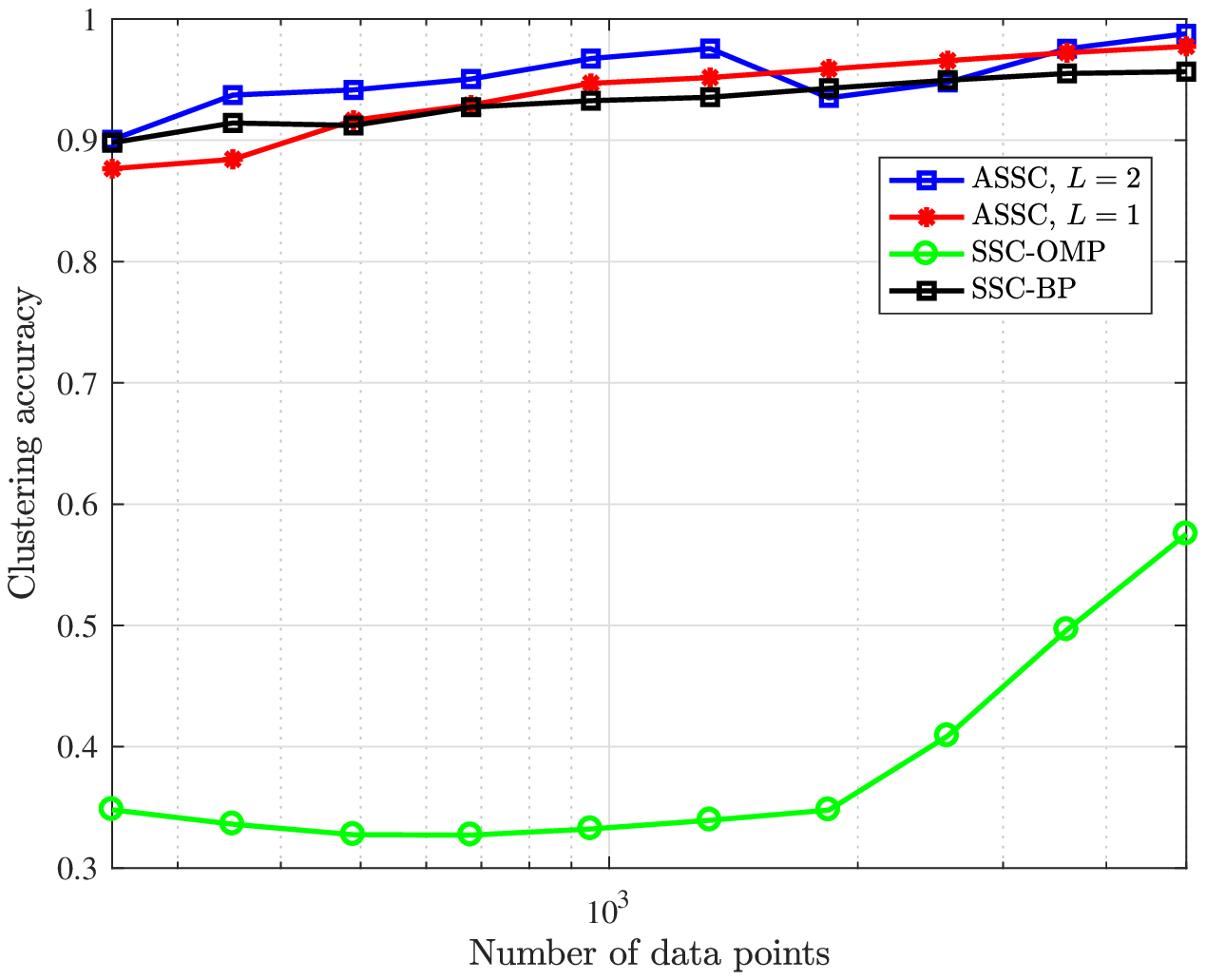}\quad\caption{\footnotesize Clustering accuracy}
	\end{subfigure}
	\begin{subfigure}[]{.5\textwidth}
		\centering
		\includegraphics[width=1\textwidth]{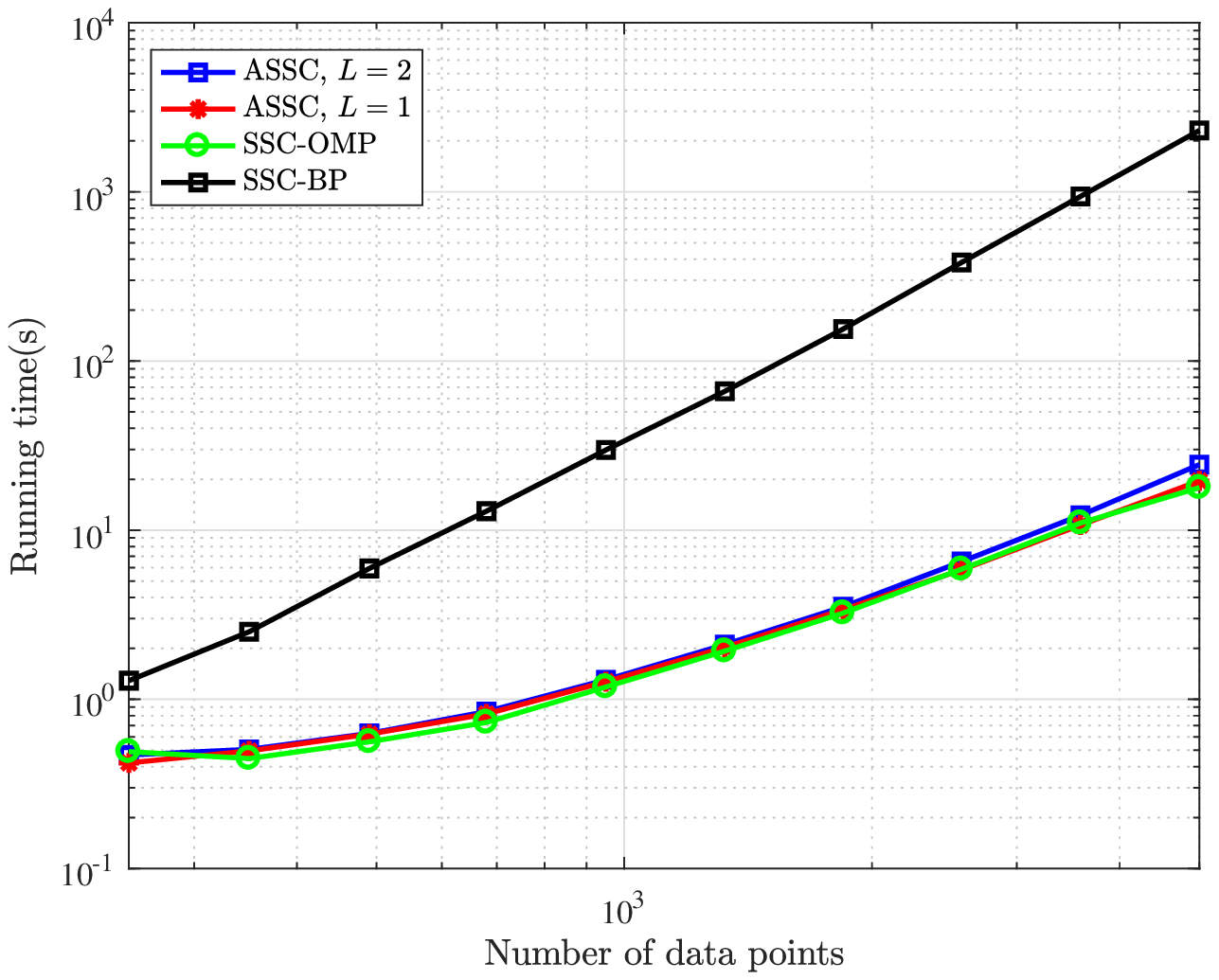}\quad\caption{\footnotesize Running time (sec)}
	\end{subfigure}
	\caption{\label{fig:ssc2} Performance comparison of ASSC, SSC-OMP \cite{dyer2013greedy,you2015sparse}, and SSC-BP \cite{elhamifar2009sparse,elhamifar2013sparse} {\color{black}{on synthetic data with perturbation terms $Q\sim \mathcal{U}(0,1)$}}. The  points are drawn from $5$ subspaces of dimension $6$ in ambient dimension $9$. Each subspace contains the same number of points and the overall number of points is varied from $250$ to $5000$.} 
\end{figure*}
Sparse subspace clustering (SSC), which received considerable attention in recent years, relies on sparse signal reconstruction techniques to organize high-dimensional data known to have low-dimensional representation \cite{elhamifar2009sparse}. In particular, in SSC problems we are given matrix $\A$ which collects data points $\a_i$ drawn from a union of low-dimensional subspaces, and are interested in partitioning the data according to their subspace membership. State-of-the-art SSC schemes such as SSC-OMP \cite{dyer2013greedy,you2015sparse} and SSC-BP \cite{elhamifar2009sparse,elhamifar2013sparse} typically consist of two steps. In the first step, one finds a similarity matrix $\mathbf{W}$ characterizing relative affinity of data points by forming a representation matrix $\C$.  Once $\mathbf{W} = |\C| + |\C|^\top$ is generated, the second step performs data segmentation by applying spectral clustering \cite{ng2001spectral} on $\mathbf{W}$. Most of the SSC methods rely on the so-called self-expressiveness property of data belonging to a union of subspaces which states that each point in a union of subspaces can be written as a linear combination of other points in the union \cite{elhamifar2009sparse}. 

In this section, we employ the proposed
AOLS algorithm to generate the subspace-preserving similarity matrix $\mathbf{W}$ and empirically compare the resulting SSC 
performance with that of SSC-OMP \cite{dyer2013greedy,you2015sparse} and SSC-MP \cite{elhamifar2009sparse,elhamifar2013sparse}.\footnote{We refer to our proposed scheme for the SSC problem as Accelerated SSC (ASSC).} 
For SSC-BP, two implementations based on ADMM and interior point methods are available by the authors of \cite{elhamifar2009sparse,elhamifar2013sparse}. In our simulation studies we use the ADMM implementation of SSC-BP in \cite{elhamifar2009sparse,elhamifar2013sparse} as it is faster than the interior point method implementation. Our scheme is tested for $L = 1$ and $L=2$. We consider the following two scenarios: (1) A random model where the subspaces are with high probability near-independent; and (2) The setting where we used hybrid dictionaries \cite{soussen2013joint} to generate similar data points across different subspaces which in turn implies the independence assumption no longer holds. In both scenarios, we randomly generate $n = 5$ subspaces, each of dimension $d = 6$, in an ambient space of dimension $D = 9$. Each subspace contains $N_i$ sample points  where we vary $N_i$ from $50$ to $1000$; therefore, the total number of data points, $N = \sum_{i=1}^n N_i$, is varied from $250$ to $5000$. The results are averaged over $20$ independent instances. For scenario (1), we generate data points by uniformly sampling from the unit sphere. For the second scenario, after sampling a data point we add a perturbation term $Q\mathbf{1}_D$ where $Q\sim \mathcal{U}(0,1)$. 

In addition to comparing the algorithms in terms of their clustering accuracy and running time, we use the following metrics defined in \cite{elhamifar2009sparse,elhamifar2013sparse} that quantify the subspace preserving property of the representation matrix $\C$ returned by each algorithm: 
 \begin{itemize}
 \item \textit{Subspace preserving rate:} The fraction of points whose representations are 
 subspace-preserving.
 \item \textit{Subspace preserving error:} {\color{black}{The fraction of $\ell_1$ norms of the representation 
 coefficients associated with points from other subspaces, i.e., 
 $\frac{1}{N}\sum_{j}{(\sum_{i\in O}{|\C_{ij}}|\slash \|\c_j\|_1)}$}} where $O$ represents the 
 set of data points from other subspaces.
 \end{itemize}

The results for the scenario (1) and (2) are illustrated in Fig. \oldref{fig:ssc1} and Fig. \oldref{fig:ssc2}, respectively. As can be seen in Fig. \oldref{fig:ssc1}, ASSC is nearly as fast as SSC-OMP and orders of magnitude faster than SSC-BP while ASSC achieves better subspace preserving rate, subspace preserving error, and clustering accuracy compared to competing schemes. In the second scenario, we observe that the performance of SSC-OMP is severely deteriorated while ASSC still outperforms both SSC-BP and SSC-OMP in terms of accuracy. Further, similar to the first scenario, running time of ASSC is similar to that of SSC-OMP while both methods are much faster that SSC-BP. As Fig. \oldref{fig:ssc1} and Fig. \oldref{fig:ssc2} suggest, the ASSC algorithm, especially with $L=2$, outperforms other schemes while essentially being as fast as the SSC-OMP method.
\section{Conclusions and Future Work}\label{sec:conc}
In this paper, we proposed the Accelerated Orthogonal Least-Squares (AOLS) algorithm, a novel 
scheme for sparse vector approximation. AOLS, unlike state-of-the art OLS-based schemes such 
as Multiple Orthogonal Least-Squares (MOLS) \cite{wang2017recovery}, relies on a set of 
expressions which provide computationally efficient recursive updates of the orthogonal projection 
operator and enable computation of the residual vector by employing only linear equations. 
Additionally, AOLS allows incorporating $L$ columns in each iteration to further reduce the complexity while achieving improved performance. In our theoretical analysis of AOLS, we showed that for coefficient matrices consisting of entries drawn from a Gaussian distribution, AOLS with high probability recovers $k$-sparse $m$-dimensional signals in at most $k$ iterations from ${\cal O}(k\log \frac{m}{k+L-1})$ noiseless random linear measurements. We extended this result to the scenario where the measurements are perturbed with $\ell_2$-bounded noise. Specifically, if the non-zero elements of an unknown vector are sufficiently large, ${\cal O}(k\log \frac{m}{k+L-1})$ random linear measurements is sufficient to guarantee recovery with high probability. This asymptotic bound on the required number of measurements is lower than those of the existing OLS-based, OMP-based, and convex relaxation schemes. Our simulation results verify that ${\cal O}\left(k\log \frac{m}{k+L-1}\right)$ measurement is indeed sufficient for sparse reconstruction that is exact with probability arbitrarily close to one. Simulation studies demonstrate that AOLS outperforms all of the current state-of-the-art methods in terms of both accuracy and running time. Furthermore, we considered an application to sparse subspace clustering where we employed AOLS to facilitate efficient clustering of high-dimensional data points lying on the union of low-dimensional subspaces, showing superior performance compared to state-of-the-art 
OMP-based and BP-based methods \cite{you2015sparse,dyer2013greedy,elhamifar2009sparse,elhamifar2013sparse}.

As part of future work, it would be valuable to further extend the analysis carried out in Section \oldref{sec:Gua} to study performance of AOLS for hybrid dictionaries \cite{soussen2013joint}. It is also of interest to analytically characterize performance of the AOLS-based sparse subspace clustering scheme.
\appendix
\section*{Appendices}
\section{Proof of Lemma \oldref{lem:23}} \label{pf:lem2}

The lemma aims to characterize the length of the projection of a random vector onto a 
low dimensional subspace. In the following argument we show that the distribution of 
the length of the projected vector is invariant to rotation which in turn enables us to find 
the projection in a straightforward manner.

Recall that $\P_k$ is an orthogonal projection operator for a $k$-dimensional subspace ${\cal L}_k$ spanned 
by the columns of $\A_k$. Let ${\cal B}=\{{\b_1},\dots,{\b_k}\}$ denote an orthonormal basis for ${\cal L}_k$. 
There exist a rotation operator $\cal R$ such that ${\cal R}\left({\cal B}\right)=\{{\mathbf{e}_1},\dots,{\mathbf{e}_k}\}$, 
where $\mathbf{e}_i$ is the $i\ts{th}$ standard unit vector.
Let $\u \sim {\cal N}(0,1/n)$. Since a multivariate Gaussian 
distribution is spherically symmetric \cite{bochner2016lectures}, distribution of $\u$ remains unchanged under rotation, i.e., 
${\cal R}\left(\u\right)\sim{\cal N}(0,1/n)$. 
Therefore, it holds that 
$\E\norm{{\cal R}\left(\u\right)}_2=\E\norm{\u}_2$. In addition, since after rotation $\{{\mathbf{e}_1},\dots,{\mathbf{e}_k}\}$ is a basis for the rotation of ${\cal L}_k$,  $\norm{\P_k\u}_2$ has the same distribution as the length of a vector consisting of the first $k$ components of 
${\cal R}\left(\u\right)$. It then follows from the i.i.d. assumption and linearity of 
expectation that $\E\norm{\P_k\u}_2^2=\frac{k}{n}\E\norm{\u}_2^2=\frac{k}{n}$.

We now prove the statement in the second part of the lemma. Let $\u_k^{\cal R}$ be the vector collecting the first $k$ coordinates of ${\cal R}(\u)$. The above argument implies $\norm{\P_k\u}_2^2$ has the same distribution as $\norm{\u_k^{\cal R}}_2^2$. In addition, $n\norm{\u_k^{\cal R}}_2^2$ is distributed as $\chi^2_k$ because of the spherical symmetry property of $\u$. Let $\lambda >0$; we will specify the value of $\lambda$ shortly. Now,
\begin{equation}
\begin{aligned}
\Pr\{\norm{\P_k\u}_2^2\leq(1-\epsilon)\frac{k}{n}\}&=\Pr\{n\norm{\u_k^{\cal R}}_2^2\leq(1-\epsilon)k\}\\
&=\Pr\{-\frac{\lambda}{2}n\norm{\u_k^{\cal R}}_2^2\geq-\frac{\lambda k(1-\epsilon)}{2}\}\\
&=\Pr\{e^{-\frac{\lambda}{2}n\norm{\u_k^{\cal R}}_2^2}\geq e^{-\frac{\lambda k(1-\epsilon)}{2}}\}\\
&\stackrel{(a)}{\leq} e^{\frac{\lambda k(1-\epsilon)}{2}}\E\{e^{-\frac{\lambda}{2}n\norm{\u_k^{\cal R}}_2^2}\}\\
&\stackrel{(b)}{=}e^{\frac{\lambda k(1-\epsilon)}{2}}(1+\lambda)^\frac{-k}{2}
\end{aligned}
\end{equation}
where (a) follows from the Markov inequality and (b) is due to the definition of the Moment Generating Function 
(MGF) for $\chi^2_k$-distribution. Now, let $\lambda=\frac{\epsilon}{1-\epsilon}$. It follows that
\begin{equation}\label{concet1}
\Pr\{\norm{\P_k\u}_2^2\leq(1-\epsilon)\frac{k}{n}\}\leq e^{\frac{\lambda k(1-\epsilon)}{2}}(1-\epsilon)^\frac{k}{2}=e^{\frac{k}{2}(\epsilon+\log(1-\epsilon))}\leq e^{\frac{-k\epsilon^2}{4}}
\end{equation}
where in the last inequality we used the fact that $\log(1-\epsilon)\leq -\epsilon-\frac{\epsilon^2}{2}$. 
Following the same line of argument, one can show that
\begin{equation}\label{concet2}
\Pr\{\norm{\P_k\u}_2^2\geq(1+\epsilon)\frac{k}{n}\}\leq e^{-k(\frac{\epsilon^2}{4}-\frac{\epsilon^3}{6})}.
\end{equation}
The combination of \ref{concet1} and \ref{concet2} using Boole's inequality leads to the stated result.
\section{Proof of Theorem \oldref{thm:nois}} \label{pf:thm2}
Here we follow the outline of the proof of Theorem \oldref{thm:1}. Note that, 
in the presence of noise, $\bar{\A}^\top \r_i$ in \ref{eq:thm11} has at most $k$ nonzero entries. After a 
straightforward modification of \ref{eq:thm1}, we obtain
\begin{equation}
\rho(\r_i)\leq \frac{\sqrt{k}}{c_1(\epsilon)}|\mathcal{P}(\widetilde{\A}^\top \r_i)_{m-k-L+1}|.
\end{equation}
The most important difference between the noisy and noiseless scenarios is that $\r_i$ in the latter does not belong 
to the range of $\bar{\A}$; therefore, further 
restrictions are needed to ensure that $\{\widetilde{\r}_i\}_{i=0}^{k-1}$ remains bounded. To this end, we investigate
lower bounds on $\|\bar{\A}^\top \r_i\|_2$ and upper bounds on $\|\widetilde{\r}_i\|_2$. 
Recall that in the $i\ts{th}$ iteration
\begin{equation}\label{eq:resdef}
\r_i=\P_i^\bot\y=\P_i^\bot\left(\bar{\A}\bar{\x}+\e\right),
\end{equation}
where $\bar{\x} \in \R^k$ is a subvector of $\x$ that collects nonzero components of $\x$. We can write $\e$ equivalently as 
\begin{equation} \label{eq:noisep}
\e=\bar{\A}\w+\e^\bot,
\end{equation}
where $\e^\bot=\P_k^\bot\e$ is the projection of $\e$ onto the orthogonal complement of the subspace 
spanned by the 
columns of $\A$ corresponding to nonzero entries of $\x$, and $\w=\bar{\A}^\dagger\e$. Substituting \ref{eq:noisep} 
into \ref{eq:resdef} and noting that $\P_i^\bot\a=0$ if $\a$ is selected in previous iterations as well as observing that 
${\cal L}_i \subset {\cal L}_k$, we obtain
\begin{equation}\label{eq:resort}
\r_i=\e^\bot+\P_i^\bot\bar{\A}_{i^c}\c_{i^c},
\end{equation}
where $\c=\bar{\x}+\w$ and subscript $i^c$ denotes the set of correct columns that have not yet been selected. 
Evidently, \ref{eq:resort} demonstrates that $\r_i$ can be written as a sum of orthogonal terms. Therefore,
\begin{equation}\label{eq:resnorm}
\|\r_i\|_2^2=\|\e^\bot\|_2^2+\|\P_i^\bot\bar{\A}_{i^c}\c_{i^c}\|_2^2.
\end{equation}
Applying \ref{eq:resort} yields
\begin{equation}\label{eq:lb}
\begin{aligned}
\|\bar{\A}^\top\r_i\|_2&=\|\bar{\A}^\top\left(\e^\bot+\P_i^\bot\bar{\A}_{i^c}\c_{i^c}\right)\|_2\\
&\stackrel{(a)}{=}\|\bar{\A}^\top\e^\bot+\bar{\A}_{i^c}^\top\P_i^\bot\bar{\A}_{i^c}\c_{i^c}\|_2\\
&\stackrel{(b)}{=}\|\bar{\A}_{i^c}^\top\P_i^\bot\bar{\A}_{i^c}\c_{i^c}\|_2\\
&\stackrel{(c)}{\geq} \sigma_{\min}^2(\bar{\A})\|\c_{i^c}\|_2,
\end{aligned}
\end{equation}
where ($a$) holds because $\P_i^\bot$ projects onto the orthogonal complement of the space spanned by the columns 
of $\bar{\A}_i$, ($b$) follows from the fact that columns of $\bar{\A}$ and $\e^\bot$ lie in orthogonal subspaces, and ($c$) 
follows from Lemma \oldref{lem:nois} and the fact that $\P_i^\bot$ is a projection matrix.

We now bound the norm of $\widetilde{\r}_i$. Substitute \ref{eq:resnorm} and \ref{eq:lb} in the definition of $\widetilde{\r}_i$ 
to arrive at
\begin{equation}\label{eq:normrt}
\begin{aligned}
\|\widetilde{\r}_i\|_2 &\leq \frac{\left[\|\e^\bot\|_2^2+\|\P_i^\bot\bar{\A}_{i^c}\c_{i^c}\|_2^2\right]^{\frac{1}{2}}}{\sigma_{\min}^2(\bar{\A})\|\c_{i^c}\|_2} \\
&\stackrel{(a)}{\leq} \frac{\left[\|\e^\bot\|_2^2+\sigma_{\max}^2(\bar{\A})\|\c_{i^c}\|_2^2\right]^{\frac{1}{2}}}{\sigma_{\min}^2(\bar{\A})\|\c_{i^c}\|_2} \\
&= \frac{\left[\|\e^\bot\|_2^2\slash \|\c_{i^c}\|_2^2+\sigma_{\max}^2(\bar{\A})\right]^{\frac{1}{2}}}{\sigma_{\min}^2(\bar{\A})}
\end{aligned}
\end{equation}
where ($a$) follows from Lemma \oldref{lem:nois} and the fact that $\P_i^\bot$ is a projection matrix. 
In addition, 
\begin{equation} \label{eq:normvperp}
\begin{aligned}
\|\e^\bot\|_2=\|\P_k^\bot\e\|_2\leq\|\e\|_2\leq \epsilon_{\e}.
\end{aligned}
\end{equation}
Defining $\x_{\min}=\min_{j}{|\bar{\x}_j|}$ and $\c_{\min}=\min_{j}{|\c_j|}$, it is straightforward to see that
\begin{equation}\label{eq:lbc}
\begin{aligned}
\c_{\min}\geq\x_{\min}-\|\w\|_2.
\end{aligned}
\end{equation}
Moreover, we impose $\x_{\min}\geq(1+\delta)\|\w\|_2$. Therefore,
\begin{equation} \label{eq:normc}
\begin{aligned}
\|\c_{i^c}\|_2^2 &\geq (k-i) \c_{\min}^2 \\
&\geq (k-i) \left(\x_{\min}-\|\w\|_2\right)^2 \\
&= (k-i) (\x_{\min}-\|\bar{\A}^{\dagger}\e\|_2)^2 \\
&\geq (k-i) (\x_{\min}-\sigma_{\max}(\bar{\A}^{\dagger})\|\e\|_2)^2 \\
&=(k-i) (\x_{\min}-\sigma_{\min}(\bar{\A})\epsilon_{\e})^2. 
\end{aligned}
\end{equation}
Combining \ref{eq:normrt}, \ref{eq:normvperp}, and \ref{eq:normc} implies that
\begin{equation}
\begin{aligned}
\|\widetilde{\r}_i\|_2 &\leq \frac{\left[\frac{\epsilon_{\e}^2}{(k-i) (\x_{\min}-\sigma_{\min}(\bar{\A})\epsilon_{\e})^2}+\sigma_{\max}^2(\bar{\A})\right]^{\frac{1}{2}}}{\sigma_{\min}^2(\bar{\A})}\\
&\leq \frac{\left[\frac{\epsilon_{\e}^2}{(k-i) (\x_{\min}-(1+\delta)\epsilon_{\e})^2}+(1+\delta)^2\right]^{\frac{1}{2}}}{(1-\delta)^2}
\end{aligned}
\end{equation}
with probability exceeding $p_2$. Thus, imposing the constraint 
\begin{equation} \label{eq:xminlb}
\x_{\min}\geq (1+\delta+t)\epsilon_{\e}
\end{equation}
where $t>0$\footnote{This is consistent with our previous condition $\x_{\min}\geq(1+\delta)\|\w\|_2$.} establishes
\begin{equation} \label{eq:rtilub}
\begin{aligned}
\|\widetilde{\r}_i\|_2 \leq \frac{\left[\frac{1}{(k-i) t^2}+(1+\delta)^2\right]^{\frac{1}{2}}}{(1-\delta)^2}.
\end{aligned}
\end{equation}
By following the steps of the proof of Theorem \oldref{thm:1} and exploiting independence of the columns 
of $\widetilde{\A}$, we arrive at
\begin{equation} \label{eq:noisesucc}
\Pr\{\Sigma\}\geq p_1 p_2 \Pr\{\max_{0\leq i <k}{\left|\widetilde{\a}_{o_1}^\top \widetilde{\r}_i\right|}<\frac{c_1(\epsilon)}{\sqrt{k}}\}^{m-k-L+1}.
\end{equation}
Recall that $\{\widetilde{\r}_i\}_{i=0}^{k-1}$ are statistically independent of $\widetilde{\A}$ and that with probability higher 
than $p_2$ they are bounded. By using Boole's
for the random variable 
$X_{i}=\widetilde{\a}_{o_1}^\top \widetilde{\r}_i$ we obtain
\begin{equation}\label{eq:noisep3}
\Pr\{\max_{0\leq i <k}{\left|X_{i}\right|< \frac{c_1(\epsilon)}{\sqrt{k}}}\}\geq 1-\sum_{i=0}^{k-1} 
e^{-\frac{n c_1(\epsilon)^2(1-\delta)^4}{k\left[\frac{1}{(k-i) t^2}+(1+\delta)^2\right]}}.
\end{equation}
Let us denote
\begin{equation}
p_3=\left(1-\sum_{i=0}^{k-1} e^{-\frac{n c_1(\epsilon)^2(1-\delta)^4}{k\left[\frac{1}{(k-i) t^2}+(1+\delta)^2\right]}}\right)^{m-k-L+1}.
\end{equation}
Then from \ref{eq:noisesucc} and \ref{eq:noisep3} follows that $\Pr\{\Sigma\} \geq p_1p_2p_3$, which 
completes the proof.

\textit{Remark 4:} Note that in the absence of noise the first term in the numerator of \ref{eq:rtilub} vanishes, leading to 
$\|\widetilde{\r}_i\|_2 \leq \frac{1}{1-\delta}+\frac{2\delta}{(1-\delta)^2}$. A comparison with the proof of Theorem \oldref{thm:1}
suggests that the term $\frac{2\delta}{(1-\delta)^2}$ is a modification which stems from the presence of noise.
\clearpage
\bibliographystyle{ieeetr}\footnotesize
\bibliography{refs}
\end{document}